\tikzset{
	->, 
	every state/.style={thick, fill=gray!10}, 
	initial text=$ $, 
}
\pgfplotsset{width=10cm,compat=1.9}
\newcommand{\ben}[1]{\textcolor{orange}{[Ben: #1]}}
\newcommand{\goran}[1]{\textcolor{green}{[GR: #1]}}
\newcommand{\todo}[1]{{\textcolor{red}{{\bf TODO:} #1}}}
\newcommand{\mutetodo}{
\renewcommand{\ben}[1]{}
\renewcommand{\goran}[1]{}
\renewcommand{\todo}[1]{}
}
\newif\if@restonecol
\theoremstyle{plain}
\newtheorem{lemma}{Lemma}
\newtheorem{theorem}{Theorem}
\newtheorem*{theorem*}{Theorem}
\newtheorem{proposition}{Proposition}
\newtheorem{definition}{Definition}
\newtheorem{assumption}{Assumption}
\newcommand{\Rmnum}[1]{\expandafter\@slowromancap\romannumeral #1@}
\DeclareMathOperator*{\argmin}{arg\,min}
\DeclareMathOperator*{\argmax}{arg\,max}
\newcommand{\calD}{{\mathcal{D}}}
\newcommand{\calH}{{\mathcal{H}}}
\newcommand{\calL}{{\mathcal{L}}}
\newcommand{\calO}{{\mathcal{O}}}
\newcommand{\calF}{{\mathcal{F}}}
\newcommand{\cR}{{\mathcal{R}}}
\newcommand{\cP}{{\mathcal{P}}}
\newcommand{\cL}{2}
\renewcommand{\cL}{{\mathcal{L}}}
\newcommand{\Pc}{\mathcal{P}} 
\newcommand{\Rc}{\mathcal{R}} 
\newcommand{\E}{\mathbb{E}} 
\renewcommand{\P}{\mathbb{P}} 
\newcommand{\one}{\mathbbm{1}}
\newcommand{\defeq}{:=}
\newcommand{\on}[1]{\operatorname{#1}}
\newcommand{\dist}{\operatorname{dist}}
\newcommand{\distpr}{\operatorname{d}_{P,r}}
\newcommand{\casebycase}[2]{
    \underline{Case #1}: \emph{#2.}\\
}
\newcommand{\what}[1]{\widehat{#1}}
\newcommand{\GD}{\operatorname{GD}}
\newcommand{\MR}{\operatorname{MR}}
\newcommand{\hGD}{\widehat{\operatorname{GD}}}
\renewcommand{\bar}[1]{\overline{#1}}
\newcommand{\sizeS}{|S|}
\newcommand{\sizeA}{|A|}
\newcommand{\setS}{S}
\newcommand{\setA}{A}
\let\mathbbm\mathds
\newcommand{\norm}[1]{\left\lVert#1\right\rVert}
\newcommand{\abs}[1]{\left|#1\right|}
\newtcolorbox{redroundedbox}{
  colback=white,
  colframe=red!80!gray,
  boxrule=2pt,
  arc=4mm,
  left=6pt,
  right=6pt,
  top=6pt,
  bottom=6pt
}
\newcommand{\ourtitle}{Performative Reinforcement Learning in Gradually Shifting Environments}
\title{\ourtitle}
\author[1]{\href{mailto:<benrank@mpi-sws.org>}{Ben Rank}{}{$^\spadesuit$}}
\author[1]{Stelios Triantafyllou}
\author[2]{Debmalya Mandal$^{\clubsuit \diamondsuit}$}
\author[1]{Goran Radanovic$^\clubsuit$}
\affil[1]{%
Max Planck Institute for Software Systems (MPI-SWS)
}
\affil[2]{%
University of Warwick
}
\begin{document}
\maketitle
\def\thefootnote{$\clubsuit$}\footnotetext{Co-supervision.}
\def\thefootnote{$\diamondsuit$}\footnotetext{Work done while at MPI-SWS.}
\def\thefootnote{$\spadesuit$}\footnotetext{\href{mailto:<benrank@mpi-sws.org>}{benrank@mpi-sws.org}}
\def\thefootnote{\arabic{footnote}}
\begin{abstract}
When Reinforcement Learning (RL) agents are deployed in practice, they might impact their environment and change its dynamics. We propose a new framework to model this phenomenon, where the current environment depends on the deployed policy as well as its previous dynamics. This is a generalization of Performative RL (PRL)~\citep{MTR23}. Unlike PRL, our framework allows to model scenarios where the environment gradually adjusts to a deployed policy. We adapt two algorithms from the performative prediction literature to our setting and propose a novel algorithm called \emph{Mixed Delayed Repeated Retraining} (MDRR). We provide conditions under which these algorithms converge and compare them using three metrics: number of retrainings, approximation guarantee, and number of samples per deployment. MDRR is the first algorithm in this setting which combines samples from multiple deployments in its training. This makes MDRR particularly suitable for scenarios where the environment's response strongly depends on its previous dynamics, which are common in practice. We experimentally compare the algorithms using a simulation-based testbed and our results show that MDRR converges significantly faster than previous approaches.
\end{abstract}

\section{Introduction}
When machine learning (ML) models are deployed in practice, they can affect the prediction target itself, causing a distribution shift. This problem has received significant attention in supervised learning and is termed as \emph{performative prediction}~\citep{PZM+20}. In practice, it is often approached with {\em repeated retraining}: a practical solution for finding a (performatively) stable model, which does not suffer from further distribution shift. 

Recently, \cite{MTR23} considered a reinforcement learning (RL) variant of this problem setting. In RL, performativity manifests itself as a shift in the environment, depending on the policy which was deployed by the learner. For example, the environment can model users of an online platform (e.g., recommender system or a chatbot), who adapt to the changes in the policy of the RL agent that controls the platform. 

\cite{MTR23} formalizes this setting with a framework called \emph{performative RL}, where the dynamics of a Markov decision process (MDP) $M_t$ depend on the current policy $\pi_t$.
To find an approximately stable policy, they propose repeated retraining over the space of occupancy measures $\mathcal C(M_t)$ and the regularized objective 
\[\max_{d \in \mathcal C(M_t)} \sum_{s, a} 
r_t(s, a) \cdot d (s,a) - \lambda \|d\|_2, \]
where $r_t$ is the reward function of $M_t$, the sum goes over all possible states $s$ and actions $a$, and $\lambda$ is a regularization factor. 

However, this framework assumes that the environment only depends on the deployed policy and is independent of the previous environment. In many practical scenarios, this assumption does not hold. Going back to our examples from before, users are likely to manifest a learning behavior when interacting with the platform, and thus adapt their behavioral patterns gradually to any changes made in the platform, instead of adapting immediately after every change. 
Thus we consider an extension of the performative RL framework where the underlying MDP $M_t$ is gradually changing over time.
\newcommand{\rrEmpNumSamplesPsi}{\ensuremath{\tilde{\calO}\left(\frac{\sizeA\psi}{\lambda^2 (1-\epsilon)^4}\ln\left(\frac{i}{p}\right)\right)}}
\newcommand{\drrEmpNumSamplesPsi}{\ensuremath{\tilde{\calO}\left(\frac{\sizeA\psi}{\lambda^2}\ln\left(\frac{i}{p}\right)\right)}}
\newcommand{\rrEmpNumSamples}{\ensuremath{\tilde{\calO}\left(\frac{\sizeA\sizeS^3\left(B+\sqrt{\sizeA}\right)^2}{\delta^4(1-\gamma)^6\lambda^2 (1-\epsilon)^4}\ln\left(\frac{t}{p}\right)\right)}}
\newcommand{\drrEmpNumSamples}{\ensuremath{\tilde{\calO}\left(\frac{\sizeA\sizeS^3\left(B+\sqrt{\sizeA}\right)^2}{\delta^4(1-\gamma)^6\lambda^2}\ln\left(\frac{i+1}{p}\right)\right)}}
\newcommand{\mdrrEmpNumSamplesPsi}{\frac{(v-1)v^{k-1}}{v^{k}-1}\ensuremath{\tilde{\calO}\left(\frac{\sizeA\psi}{\lambda^2}\ln\left(\frac{i}{p}\right)\right)}}
\newcommand{\abbrrt}{\#rs}
\begin{table*}[!ht]
    \centering
    \caption{\textbf{Overview of Our Results.}\ \ \  
    Convergence criteria for computing a $\delta$-approximate stable policy.
    $\lambda$ is a factor of the regularization, 
    $\epsilon<1$ indicates the dependence of the current environment on the previous environment,
    $\iota<1$ indicates the dependence of the current environment on the deployed policy,
    $i$ denotes the current retraining iteration, $1-p$ is the probability of achieving said approximate stable policy in the finite sample setting, $k$ denotes the number of repeated deployments of the same policy in MDRR, $\sizeS$ is the number of states, $\sizeA$ the number of actions, and $\gamma$ is the discount factor of the MDP.
        }
    \label{table:overview}
    \begin{threeparttable}
    \begin{tabular}{lcccc}
    \toprule[1.0pt]
        \textbf{Algorithm} & $\boldsymbol{\lambda}$ & \textbf{\#retrainings} 
        & $\genfrac{}{}{0pt}{0}{\textbf{\#samples}}{\textbf{per deployment}}$ \\
        \midrule[1.0pt]
        RR\tnote{[exact]}& $\calO\left(\frac{\sizeS^{5/2}}{(1-\epsilon)(1-\gamma)^4}\right)$ &
        $\frac{\ln\left(\left(
        \frac{2}{1-\gamma} + \left(1+\sqrt{2}\right)\sqrt{\sizeS\sizeA}
        \right)/\delta\right)}{\ln\left(2/(1+\epsilon)\right)}$
        & N\textbackslash{}A\\ \midrule[0.2pt]
        DRR\tnote{[exact]} & $\calO\left(\frac{\iota\cdot \sizeS^{5/2}}{(1-\epsilon)(1-\gamma)^4}\right)$
        & 
        $\ln\left(\left(\frac{2}{1-\gamma}\right)/\delta\right)$
        & N\textbackslash{}A
   \\ \midrule[0.2pt]
       RR\tnote{[fin]} & $\mathcal{O}\left(\frac{\epsilon(\sizeS+\gamma \sizeS^{5/2})}{(1-\epsilon)(1-\gamma)^4} 
        \right)$ & $\frac{\ln\left(\frac{\frac{2}{1-\gamma}+\left(1+\sqrt{2}\right)\sqrt{\sizeS\sizeA}}{\delta}\right)}
        {\ln\left(4/\left(3+\epsilon\right)\right)}$ 
        & $\rrEmpNumSamplesPsi$\tnote{[a]}
        \\ \midrule[0.2pt]
        DRR\tnote{[fin]} & $\mathcal{O}\left(\frac{\iota(\sizeS+\gamma \sizeS^{5/2})}{
        (1-\epsilon)(1-\gamma)^4}\right)$ & $\frac{\ln\left(\frac{2}{1-\gamma}/\delta\right)}{
        \ln\left( 4/ \left(3+\epsilon\right)\right)}$ 
        &
        $\drrEmpNumSamplesPsi$\tnote{[a]}
        \\\midrule[0.2pt]
        MDRR\tnote{[fin]} & as for DRR & as for DRR 
        & $\mdrrEmpNumSamplesPsi $\tnote{[a,b]}\ \ \ \  \\
    \bottomrule[1.0pt]
    \end{tabular}
        \begin{tablenotes}\footnotesize
            \item[[a{]}] Here $\psi = \calO\left(\frac{\sizeS^{3}\left(B+\sqrt{\sizeA}\right)^2}{\delta^4(1-\gamma)^6}\right)$  and we ignore all terms which are logarithmic in $\sizeS$, $\sizeA$ and $1/\delta$.
            \item[[b{]}] $v> \frac{1}{\epsilon}$ is a hyperparamter of MDRR
            \item[[exact{]}] results when the learner knows current environment $(P_t, r_t)$
            \item[[fin{]}] results when the learner gets a finite set of samples from the current environment $(P_t, r_t)$
        \end{tablenotes}
    \end{threeparttable}
\end{table*}

\paragraph{Contributions}
Following a similar line of work on performative prediction that considers gradual shifts in the distribution~\citep{BHK20,LW22,RRL+22,izzo2022learn}, we model this scenario by assuming that the underlying MDP $M_t$ is dependent on both the deployed policy $\pi_t$ and the MDP from the previous round, i.e., $M_{t-1}$. Our overall goal is to analyze different repeated retraining approaches and provide characterization results that compare these approaches along the following three measures: a)~attainable approximation quality (i.e., the minimum value of $\lambda$ for which the convergence is guaranteed),  b)~the number of retrainings which guarantees the convergence (signifying the compute needed to converge), and c)~the sample complexity per deployment (signifying the number of data points that need to be collected). 
Our main contributions are as follows:
\begin{itemize}
    \item \emph{Framework:} An extension of the performative RL framework that can model gradual environment shifts, and an extension of the DRR algorithm from \cite{BHK20}, suitable for our framework.
    \item \emph{Algorithm:} A novel repeated retraining algorithm, called MDRR, which compared to repeated retraining (RR) and DRR uses samples from multiple rounds of deployment, thereby reducing the number of samples needed per round.
    \item \emph{Characterization results:}
     A characterization of three repeated retraining approaches: a canonical RR, DRR, and MDRR.
    Our analysis is a non-trivial combination of the proof techniques used by \cite{MTR23} and \cite{BHK20} and brings additional insights about regularization in performative RL.
    The overview of the results can be found in Table~\ref{table:overview}. 
    At a high-level, our theoretical results suggest that DRR and MDRR fare better than RR in terms of the number of retrainings and sample complexity, as well as in terms of attainable approximation quality when the environment depends weakly on the current policy. When the environment depends strongly on the previous environment, MDRR fares better than RR and DRR in terms of samples per round.
    These results shed light on regularization in performative RL, and the importance of utilizing historic data to reduce it, thus obtaining better approximation quality.
    \item \emph{Experiments:} Finally, we compare the algorithms in an experimental evaluation. In our experiments, MDRR outperforms RR and DRR in terms of the convergence speed and the quality of the solution obtained.
\end{itemize}

\subsection{Related Work}

We relate our work to four lines of research: {\em Performative Prediction}, {\em Markov Games}, {\em Adversarial Markov Decision Processes}, and {\em Reinforcement Learning}. 
The latter two are discussed in Appendix~\ref{appdx.related-work}.

\textbf{Performative Prediction. }
The study of performative prediction was initiated by~\cite{PZM+20}. They investigate conditions under which repeated retraining converges to a  performatively stable point.
This study was extended in various ways, including stochastic optimization~\citep{MPZH20},
finding performatively optimal points~\citep{MPZ21, izzo2021learn},
multi-agent scenarios~\citep{narang2023multiplayer, li2022multi} and using performativity to measure the power of firms~\citep{hardt2022performative}.
\citet{mofakhami2023performative}
use a different set of assumptions and provide convergence guarantees also in cases where the loss is not strongly convex in the parameters of the model. 
Most related to our setting are works that consider performative prediction under gradual shifts in the distribution~\citep{BHK20,LW22,RRL+22,izzo2022learn}, commonly known as \emph{stateful} performative prediction~\citep{BHK20}. All of the above works study performativity in supervised learning. In contrast, we consider reinforcement learning. However, we emphasize that some of our results are extensions of or inspired by those that appear in~\citep{BHK20}. Most notably, we extend delayed repeated retraining, an algorithm proposed by~\citep{BHK20}, to our RL setting, and analyze its convergence guarantees. Furthermore, we introduce a novel algorithm inspired by delayed repeated retraining. 

{\bf Markov Games.} Our work is also related to the literature on stochastic or Markov games~\citep{shapley1953stochastic} and multi-agent reinforcement learning~\citep{zhang2021multi}. 
Much of the focus in multi-agent RL have been on computational and statistical aspects of learning Nash or correlated equilibria~\citep{daskalakis2023complexity,wei2017online,bai2020near,jin2022v}. Our setting is more related to multi-agent RL frameworks that consider Stackelberg or commitment policies  \citep{letchford2012computing,vorobeychik2012computing,dimitrakakis2017multi,zhong2021can}, where a principal agent commits a policy to which one or more followers best responds. Computing optimal commitment policies is in general computationally intractable~\cite{letchford2012computing}. Hence, some restrictions on followers' response models are needed to enable computationally efficient learnability~\cite{zhong2021can}. 
Similarly, no-regret learning in a two-agent principal-follower setting where the follower independently learns or changes its policy over time is also in general computationally intractable~\citep{radanovic2019learning,bai2020near}. 
However, if the dynamics of the follower's policy updates is not {\em adversarial}, tractable no-regret algorithms exist~\citep{radanovic2019learning}. These restrictions on the follower are similar in spirit to the setting and the assumptions we consider in this paper, however, our setting is technically quite different: whereas these works focus on no-regret learning, we focus on performative RL and repeated retraining approaches.


\newcommand{\prelimsectioning}[1]{\paragraph{#1}}
\section{Preliminaries}\label{sec.formal-setting}
We follow \cite{PZM+20, BHK20} and \cite{MTR23} in defining the formal setting. 

\prelimsectioning{Markov Decision Processes}
We consider tabular Markov Decision Processes (MDPs), 
which consist of a finite state space $\setS$, finite action space
$\setA$, discount factor $\gamma$ and initial state distribution $\rho$. 
We assume that the reward and transition probability functions change over time, as a response to the policy which the learner deploys.
The learner deploys policy $\pi_t$ in round $t$ and the previous probability transition and reward function are $P_{t-1}$ and $r_{t-1}$.
They then change to $P_t = \Pc(\pi_t, P_{t-1}, r_{t-1})$ and $r_t = \Rc(\pi_t, P_{t-1}, r_{t-1})$ respectively,
according to the \emph{response models} $\Pc$ and $\Rc$.
Thus, the MDP in round $t$ is $M_t = (\setS, \setA, P_t, r_t, \rho)$.

When the learner deploys policy $\pi_{t+1}$, the probability of a trajectory $\tau = (s_k, a_k)_{k=0}^\infty$ to be realized in round $t$
is given by $\P_t^{\pi_{t}}(\tau) = \rho(s_0) \prod_{k=1}^\infty \pi_{t}(a_k|s_k) P_t(s_k, a_k, s_{k+1})$.

Given policy $\pi$ and initial state distribution $\rho$, 
we denote the value function at round $t$ as $V_t^\pi(\rho)$.
It is defined as
\begin{equation*}
V_t^\pi(\rho) = \E_{\tau\sim \P_t^\pi}\left[\sum_{k=0}^\infty\gamma^kr_t(s_k,a_k)|\rho\right].
\end{equation*}

The learner in round $t$ has access to the past MDPs $M_0, \dots, M_{t-1}$, or a finite number
of samples thereof.

\prelimsectioning{Solution Concept}
We assume that when the learner deploys $\pi$ in every round, the MDP converges
to the \emph{limiting MDP} $M_\pi=(\setS, \setA, P_{\pi}, r_{\pi}, \rho)$, which is independent of the initial MDP.
Using this, we can define the \emph{performative value function} as
\begin{equation*}
V_{\pi'}^{\pi}(\rho) = \lim_{t\rightarrow \infty}V_t^\pi(\rho| \pi_i = \pi'\  \forall i)\ .
\end{equation*}
It is the value function of MDP $M_\pi$. 

One common solution concept in this setting is to find a \emph{performatively stable policy}, defined as follows.
\begin{definition}[Performatively Stable Policy]
We call a policy $\pi$ \emph{performatively stable}, 
if it is the best response to the MDP $M_\pi$.
That is, $\pi \in \argmax_{\pi'} V_{\pi'}^\pi$.
\end{definition}
Given two performatively stable policices $\pi_1$ and $\pi_2$, their convex combination might not be performatively stable. Because of this, it is hard to use the standard formulation of RL. 
This problem is alleviated by using the linear programming formulation of RL.
To describe this,
we define the long term-state occupancy measure of a policy $\pi$ in MDP $M_t$ as 
$d_t^\pi(s,a) = \E_{\tau\sim \P_t^\pi}\left[\sum_{k=0}^\infty \gamma^k\one\{s_k=s, a_k=a\}\right]$.
When given occupancy measure $d$, one can consider the following policy $\pi^d$, which has occupancy measure $d$.
\begin{align}\label{eq:dtopi}
    \pi^d(a|s) = \left\{\begin{array}{cc}
        \frac{d(s,a)}{\sum_b d(s,b)} & \textrm{ if } \sum_a d(s,a) > 0 \\
        \frac{1}{\sizeA} & \textrm{ otherwise }
    \end{array} \right.
\end{align}
We consider that the learner parameterizes its policy by the occupancy measure and calculates the policy via~\eqref{eq:dtopi}.

In an unregularized setting, we would say that a occupancy measure $d_S$ is performatively stable if it is the optimal solution to the following linear program.
\begin{align}\label{eq:perf-stable-policy}
        &d_S^* \in \argmax_{d \ge 0} \sum_{s,a} d(s,a) r_{d_S^*}(s,a)\\
        \textrm{s.t.} & \sum_a d(s,a) = \rho(s) + \gamma \cdot \sum_{s',a} d(s',a) P_{d_S^*}(s',a,s) \ \forall s
        \nonumber
\end{align}
where we denote $P_d = P_{\pi_d}$ and $r_d = r_{\pi_d}$\ .
This describes an occupancy measure which is itself the best response against the current MDP.

But, similar to prior work, to make the theoretical analysis feasible, we assume the following regularized version of optimization problem~\eqref{eq:perf-stable-policy}.
A stable occupancy measure $d_S$ is defined by
\begin{align}
    \label{eq:perf-stable-policy-regularized}
        &d_S \in \argmax_{d\ge 0}\ \sum_{s,a} d(s,a) r_d(s,a) - \frac{\lambda}{2}\norm{d}_2^2\\
       \textrm{s.t. } & \sum_a d(s,a) = \rho(s) + \gamma \cdot \sum_{s',a} d(s',a) P_d(s',a,s)\ \forall s.
       \nonumber
\end{align}
Here $\lambda$ is a constant regularization factor which describes the strong-concavity of the objective.
This describes an occupancy measure which is itself the best response against a regularized objective of the current MDP.
If a learner updates their occupancy measure using the best response against a $L2$-regularized objective, \eqref{eq:perf-stable-policy-regularized} describes an occupancy measure which would not change under such an update, i.e. be stable.

In our results, we provide lower bounds for how small $\lambda$ can be to guarantee convergence.
Furthermore, in Appendix~\ref{appdx.apprx-unregularized} we show that \eqref{eq:perf-stable-policy-regularized} approximates the unregularized objective~\eqref{eq:perf-stable-policy}.

\prelimsectioning{Sensitivity Assumption}
We overload the notation to write the response models in the following form.
For every occupancy measure $d$, let $\Pc(d, P, r) = \Pc(\pi_d, P, r)$
and $\Rc(d, P, r) = \Rc(\pi_d, P, r)$.

For the learner to make use of this past information, we use the following sensitivity assumption, which are commonly used in performative prediction.
\begin{assumption}[sensitivity]\label{assumption_sensitivity}
    Consider some $\iota_p,\iota_r,\epsilon_{p,p},\epsilon_{p,r},\epsilon_{r,p},
    \epsilon_{r,r} \geq 0$ 
    with $\iota=\iota_{p} + \iota_{r}< 1$, $\epsilon_p=\epsilon_{p,p} + \epsilon_{r,p}< 1$ and 
    $\epsilon_r=\epsilon_{p,r} + \epsilon_{r,r}<1$.
    Assume
    \begin{align*}
        &\|\Pc(d, P, r) - \Pc(d', P', r') \|_2 \\ 
        &\leq \iota_{p}
        \|d-d'\|_2 + 
        \epsilon_{p,p}\|P-P'\|_2 + \epsilon_{p,r}\|r-r'\|_2 \text{ and}\\
    &\|\Rc(d, P, r) - \Rc(d',P', r') \|_2 \\ 
    &\leq \iota_{r}\|d-d'\|_2 + 
    \epsilon_{r,p}\|P-P'\|_2 + \epsilon_{r,r}\|r-r'\|_2
    \end{align*}
    for any occupancy measures $d,d'$, reward functions $r,r'$ and probability transition functions $P,P'$.
\end{assumption}
Assumption~\ref{assumption_sensitivity} ensures that when the learner deploys a new policy, the new MDP does not drift too far from the old MDP.
In Appendix~\ref{appdx.example_sensitivity} we discuss one example where this assumption commonly holds.

When $\epsilon_p, \epsilon_r <1$, the mapping from $(P, r)$ to $(\Pc(d,P,r), \Rc(d,P,r))$ is a contraction for any occupancy measure~$d$ 
(Proof in Appendix~\ref{appdx.sec.contraction}).
Therefore, if the learner deploys the same policy $\pi$ in every round, $P_t$ and $r_t$ asymptotically converge to some $P_\pi$ and $r_\pi$ respectively and we don't need
to assume this explicitly.

To simplify the exposition of the results in the main paper, we assume that the following
assumption holds, without explicitly stating it in the results.
\begin{assumption}\label{assumption-simplicity}
For the results in the main part of the paper, we assume that 
    $\epsilon_{p,p} = \epsilon_{p,r} = \epsilon_{r,p}= \epsilon_{r,r} = \frac{\epsilon}{2}$,
$\iota_p, \iota_r \leq \frac{\epsilon}{2}$ for some $\epsilon<1$ and 
            $\frac{9\gamma \sizeS}{(1-\gamma)^2}
        \geq 1$ .
\end{assumption}
Assumption~\ref{assumption-simplicity} is not critical -- as we show in the appendix, our results easily generalize when we do not assume it.

\prelimsectioning{Sample Generation Model}
We also consider finite-sample versions of the algorithms we propose.
For this we use the following sample generation model.
In round $t$, let $\bar{d}_t$ be the occupancy measure 
of $\pi_t$ under dynamics~$P_{t}$.
Note that this is different than the occupancy measure which the learner uses to calculate its policy~$\pi_{d_t}$, since this was calculated using different dynamics.
We then define the normalized occupancy measure $\tilde{d}_t(s,a) = (1-\gamma)\bar{d}_t(s,a)$.
Each sample in round $t$ is a tuple $(s,a,r,s')$ and is generated in the following way. First a state, action pair is sampled i.i.d. according
to $(s,a)\sim \tilde{d}_t$, then reward as $r=r_{t}(s,a)$ and then the next state $s'\sim P_{t}(\cdot|s,a)$.
This is a standard model of sample generation in offline RL~\citep{munos2008finite, farahmand2010error, xie2021batch, MTR23}.


\section{Repeated Retraining (RR)}\label{sec.rr}
One common approach in performative prediction is \emph{repeated retraining (RR)}, where the learner updates its policy at every round, by best responding to the current environment. In this section, we explore guarantees for when this approach converges to a stable occupancy measure. 

In RR we assume that the learner updates its policy every round in such a way that it is optimal for the regularized objective of the current MDP~$M_t$.
In particular, we define $d_{t+1}$ to be a solution to the following optimization problem.
\begin{align}
    \label{eq:regularized-rl-discounting}
       \max_{d\ge 0}&\   \sum_{s,a} d(s,a) r_t(s,a) - \frac{\lambda}{2}\norm{d}_2^2\\
       \textrm{s.t. } & \sum_a d(s,a) = \rho(s) + \gamma \cdot \sum_{s',a} d(s',a) P_t(s',a,s)\ \forall s
       \nonumber
\end{align}

We go on to show that RR converges to a stable occupancy measure.
\begin{theorem}[informal, details in Appendix~\ref{appdx.rr-exact}]\label{thm:standard-rr-simple}
    Assume that Assumption~\ref{assumption_sensitivity} holds
    and
    $\lambda=\calO\left(\frac{\sizeS^{5/2}}{(1-\epsilon)(1-\gamma)^4}\right)$.
    Then for any $\delta > 0$ we have,
    
    \centering
    $\norm{d_t - d_S}_2 \leq \delta ,$
    
    for all $t \geq 
    \frac{\ln\left(\left(
        \frac{2}{1-\gamma} + \left(1+\sqrt{2}\right)\sqrt{\sizeS \sizeA}
        \right)/\delta\right)}{
        \ln\left(2/\left(1+\epsilon\right)\right)
        }\ .$
\end{theorem}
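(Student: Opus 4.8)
The plan is to track two error quantities simultaneously and show that their sum contracts geometrically with rate $\tfrac{1+\epsilon}{2}$. Let $(P_S,r_S)$ be the limiting MDP obtained by deploying $\pi_{d_S}$ forever; by the contraction property noted after Assumption~\ref{assumption_sensitivity} this is the unique fixed point satisfying $P_S=\Pc(d_S,P_S,r_S)$ and $r_S=\Rc(d_S,P_S,r_S)$, and by~\eqref{eq:perf-stable-policy-regularized} $d_S$ is the maximizer of the regularized LP~\eqref{eq:regularized-rl-discounting} instantiated with $(P_S,r_S)$. For round $t$ write $\delta_t=\norm{d_t-d_S}_2$ and $e_t=\norm{P_{t-1}-P_S}_2+\norm{r_{t-1}-r_S}_2$, so that the learner's state entering round $t$ is summarized by $(\delta_t,e_t)$.

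First I would establish the \emph{environment recursion}. Deploying $\pi_{d_t}$ from $(P_{t-1},r_{t-1})$ produces $(P_t,r_t)=(\Pc(d_t,P_{t-1},r_{t-1}),\Rc(d_t,P_{t-1},r_{t-1}))$; subtracting the fixed-point identities for $(P_S,r_S)$ and applying Assumption~\ref{assumption_sensitivity} together with Assumption~\ref{assumption-simplicity} (so all $\epsilon_{\cdot,\cdot}=\epsilon/2$ and $\iota_p,\iota_r\le\epsilon/2$, hence $\iota\le\epsilon$), then summing the $P$- and $r$-bounds, gives $e_{t+1}:=\norm{P_t-P_S}_2+\norm{r_t-r_S}_2\le\iota\,\delta_t+\epsilon\,e_t\le\epsilon\,(\delta_t+e_t)$.

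Next I would invoke the \emph{solution sensitivity} of the regularized LP. Because the objective of~\eqref{eq:regularized-rl-discounting} is $\lambda$-strongly concave and its feasible occupancy polytope depends Lipschitz-continuously on the transition kernel, the argmax map is, after folding constants, $\tfrac{c}{\lambda}$-Lipschitz in $(P,r)$ with respect to $\norm{P}_2+\norm{r}_2$, for a constant $c=\calO\!\big(\tfrac{\sizeS^{5/2}}{(1-\gamma)^4}\big)$ depending only on $\sizeS,\sizeA,\gamma$; this is the RL sensitivity lemma of~\cite{MTR23}, and deriving it (tracking how far the occupancy polytope and the optimal point move under a perturbation of $P$, and combining with strong concavity) is the technically heaviest step. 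Since $d_{t+1}$ and $d_S$ solve~\eqref{eq:regularized-rl-discounting} at $(P_t,r_t)$ and $(P_S,r_S)$ respectively, this yields $\delta_{t+1}\le\tfrac{c}{\lambda}\,e_{t+1}$.

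Finally I would combine the two recursions through the potential $w_t=\delta_t+e_t$: from the bounds above, $w_{t+1}=\delta_{t+1}+e_{t+1}\le\big(1+\tfrac{c}{\lambda}\big)e_{t+1}\le\big(1+\tfrac{c}{\lambda}\big)\epsilon\,w_t$, which is at most $\tfrac{1+\epsilon}{2}\,w_t$ precisely when $\lambda\ge\tfrac{2c\epsilon}{1-\epsilon}$, i.e. $\lambda=\calO\!\big(\tfrac{\sizeS^{5/2}}{(1-\epsilon)(1-\gamma)^4}\big)$. Iterating gives $\delta_t\le w_t\le\big(\tfrac{1+\epsilon}{2}\big)^t w_0$, and I would bound the initialization by $\delta_0\le\tfrac{2}{1-\gamma}$ (each occupancy measure has total mass $\tfrac1{1-\gamma}$) and $e_0\le(1+\sqrt2)\sqrt{\sizeS\sizeA}$ (a difference of transition rows has $\ell_2$-norm at most $\sqrt2$, a difference of reward vectors at most $\sqrt{\sizeS\sizeA}$). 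Solving $\big(\tfrac{1+\epsilon}{2}\big)^t\big(\tfrac2{1-\gamma}+(1+\sqrt2)\sqrt{\sizeS\sizeA}\big)\le\delta$ for $t$ then produces the stated threshold. The main obstacle is the quantitative LP-sensitivity constant $c$ and checking that, once plugged into the potential argument, the $\lambda$-threshold and the contraction rate $2/(1+\epsilon)$ come out exactly as claimed; a secondary subtlety is justifying that the limiting MDP $(P_S,r_S)$ is genuinely a one-step fixed point, so that Assumption~\ref{assumption_sensitivity} applies in the form used above.
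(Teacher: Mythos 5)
Your proposal is correct and follows essentially the same route as the paper: the paper's proof likewise combines the environment recursion from Assumption~\ref{assumption_sensitivity} with the $\calO\left(\frac{\sizeS^{5/2}}{(1-\gamma)^4\lambda}\right)$-Lipschitz sensitivity of the regularized argmax in $(P,r)$ (Lemma~\ref{lem:GG_alphabeta}, assembled from \cite{MTR23}) into a single contraction of the sum metric $\norm{d-d_S}_2+\norm{P-P_S}_2+\norm{r-r_S}_2$ at rate $\frac{1+\epsilon}{2}$, with the same initialization bounds $\frac{2}{1-\gamma}$ and $(1+\sqrt{2})\sqrt{\sizeS\sizeA}$. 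Your potential $w_t=\delta_t+e_t$ is exactly that metric, so the two arguments coincide up to bookkeeping.
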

The bound on $\lambda$ in Theorem~\ref{thm:standard-rr-simple} is comparable to the one required in standard Performative RL, there are only differences in the constants and the $\epsilon$ factors.
The bound on the number of rounds $t$ in standard Performative RL is
$2\ln\left(\frac{2}{\delta(1-\gamma)}\right)/\left(1-\frac{\sizeS^{5/2}\epsilon}{\lambda(1-\gamma)^4}\right)$, which is comparable to the bound here, when only considering the $\delta$ parameter. 
We note that the bound on $t$ in Theorem~\ref{thm:standard-rr-simple} does depend on $\lambda$, but for the simplicity of the exposition it is swapped by the lower bound on $\lambda$ instead. The full theorem is found in Appendix~\ref{appdx.rr-exact}.

The proofs of this paper are found in the appendix.
In general, the proofs rely on a non-trivial combination of adapting arguments from~\cite{BHK20} to the RL setting and using results from~\cite{MTR23}. 
Additionally, we extend the analysis by introducing a distinction between the parameter $\iota$ indicating how the environment adapts to a deployed policy, and $\epsilon$ indicating how strongly the environment depends on the previous environment. 
We therefore view our main contribution in this section and Section~\ref{sec.drr} as bridging the gap between the theoretical findings of~\cite{MTR23} and the often more realistic assumptions made by history-dependence, as in~\cite{BHK20}.
In section~\ref{sec.mdrr} we will introduce a novel algorithm.

\subsection{Finite Sample Guarantees}
Theorem~\ref{thm:standard-rr-simple} assumes that the learner knows the exact environment when updating its policy. In practice, this is usually too strong of an assumption, since the learner typically has access to only a finite number of samples drawn via the deployed policy on the adopted environment. In this subsection, we first discuss some general considerations for this new setting and then show that RR also converges here.

\paragraph{Update rule for RR}
The learner has access to i.i.d. drawn set of samples $F_t$ for each round $t$.
In round $t$, let $m_t := \abs{F_t}$ be the number of samples.

As prior work, we use the following \emph{empirical Lagrangian} to devise an optimization problem in the finite sample setting~\citep{MTR23}.
\begin{align}\label{eq:empirical-Lagrangian}
    \begin{split}
    \hat{\calL}(d,h; t) =  - \frac{\lambda}{2} \norm{d}_2^2 + \sum_s h(s) \rho(s) 
    \\+
    \sum_{(s,a,r,s')\in F_t} \frac{d(s,a)}{\bar{d}_t(s,a)} \cdot \frac{ r - h(s) + \gamma h(s')  }{m_t(1-\gamma)}
    \end{split}
\end{align}
Here $h$ is the Lagrange multiplier with one entry for each $s\in S$.

The empirical Lagrangian is defined in such a way that when we take its expectation over samples, we obtain the exact Lagrangian $\calL$ of optimization~\eqref{eq:regularized-rl-discounting}. One can show that the empirical Lagrangian~$\hat{\calL}$ lies in a neighborhood of the true Lagrangian $\calL$ almost certainly.
The learner repeatedly solves
\begin{align}\label{eq:repeated-optim-finite}
    (d_{t+1}, h_{t+1}) = \argmax_{d} \ \argmin_{h} \hat{\calL}(d,h; t)\ .
\end{align}

We need a further assumption, which ensures an overlap in the occupancy measure between the behavioral policy and the target policy space.
This assumption is standard in offline RL~\citep{munos2008finite, zhan2022offline, MTR23}.
Without such an overlap, it is unclear how the learner would compute an optimal policy.
\begin{assumption}\label{assumption-offline-rl} 
    Assume we are given an integer $k$.
    Given occupancy measure $d$, initial transition probability function $P_0$ and 
    initial reward function $r_0$, let 
    $P_t$ and $r_t$ be the result after the learner deploys $\pi^d$ for 
    $t$
    rounds.
    Let $d_t^*$ be the solution to optimization problem~\eqref{eq:regularized-rl-discounting}.
    Let $\bar{d}_t$ be the occupancy measure of $\pi^d$ in 
    $P_t$. Then there exists $B>0$ such that
    for all $d$ and $t\leq k$ it holds that
    \begin{equation*}
    \max_{s,a}\left|\frac{d_{k}^*(s,a)}{\bar{d}_t(s,a)}\right|\leq B\ .
    \end{equation*}
\end{assumption}
Note that we only need overlap for state-action pairs where the optimal policy $d^*_t$ is non-zero. So values where $d_k^*(s,a)$ is $0$ are allowed iff $\bar{d}_t(s,a)$ is $0$ for all $t\leq k$. 


We can then show the following guarantee for RR.
\begin{theorem}[informal, details in Appendix~\ref{appdx.rr-finite}]\label{thm:finite-samples-RR-simple}
    Suppose that overlap Assumption~\ref{assumption-offline-rl} holds for $k=1$
    with parameter $B$ and 
    Assumption~\ref{assumption_sensitivity} holds.
    Let $p>0$.
    Then for 
    $\lambda= \mathcal{O}\left(\frac{\epsilon(\sizeS+\gamma \sizeS^{5/2})}{(1-\epsilon)(1-\gamma)^4} 
        \right)$,
    $m_t = \rrEmpNumSamples$\footnote{\label{ftn.tildeO}Here we ignore all terms which are logarithmic in $\sizeS$, $\sizeA$ and $1/\delta$}
        and 
    for any $\delta > 0$, with probability at least $1-p$,
    \begin{equation*}
    \norm{d_t - d_S}_2 \leq \delta
    \text{\ \ for all }  t\geq 
    \frac{\ln\left(\frac{\frac{2}{1-\gamma}+\left(1+\sqrt{2}\right)\sqrt{\sizeS\sizeA}}{\delta}\right)}
        {\ln\left(4/\left(3+\epsilon\right)\right)}
    + 1\ .
    \end{equation*}
\end{theorem}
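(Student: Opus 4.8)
The plan is to write the finite-sample iterate as $d_{t+1} = d_{t+1}^* + \xi_t$, where $d_{t+1}^*$ solves the \emph{exact} regularized problem~\eqref{eq:regularized-rl-discounting} on the realized MDP $M_t$ and $\xi_t$ is the error introduced by optimizing the empirical Lagrangian~\eqref{eq:empirical-Lagrangian} (via~\eqref{eq:repeated-optim-finite}) instead of the true one. For the $d_{t+1}^*$ part I would reuse the contraction underlying Theorem~\ref{thm:standard-rr-simple}: composing the sensitivity Assumption~\ref{assumption_sensitivity} with the Lipschitz dependence of the regularized LP solution on $(P,r)$ from~\citep{MTR23} (Lipschitz constant $\propto \sizeS^{5/2}/(\lambda(1-\gamma)^4)$) and invoking Assumption~\ref{assumption-simplicity} shows that the joint map $(d_t, P_{t-1}, r_{t-1}) \mapsto (d_{t+1}^*, P_t, r_t)$, with $(P_t, r_t) = (\Pc(d_t, P_{t-1}, r_{t-1}), \Rc(d_t, P_{t-1}, r_{t-1}))$ and $d_{t+1}^*$ best-responding to $M_t$, contracts toward $(d_S, P_{d_S}, r_{d_S})$; the environment component drifts but itself contracts at rate $\epsilon$. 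A lower bound on $\lambda$ slightly larger than in the exact case --- needed so the $\xi_t$ terms below fit inside the recursion --- makes the combined modulus $c = (3+\epsilon)/4 < 1$.

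Next I would bound $\norm{\xi_t}_2$ with high probability. One first argues that the relevant dual variables $h$ are bounded (their optimal value is a value function, hence of size $O(1/(1-\gamma))$), so $h$ may be restricted to a box; this keeps the empirical primal function $\hat{P}_t(d) := \min_h \hat{\calL}(d,h;t)$ finite, defined everywhere, and $\lambda$-strongly concave (a minimum of $\lambda$-strongly concave functions), and likewise $P_t(d) := \min_h \calL(d,h;t)$, whose maximizer is $d_{t+1}^*$ by strong duality. Then a uniform deviation $\sup_{d,h}|\hat{\calL}(d,h;t) - \calL(d,h;t)| \le \eta_t$ over the effective domain yields $\norm{\xi_t}_2 \le 2\sqrt{\eta_t/\lambda}$ via the standard ``maximizer of a perturbed strongly concave function'' estimate. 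To control $\eta_t$: by the overlap Assumption~\ref{assumption-offline-rl} with $k=1$, every importance weight $d(s,a)/\bar{d}_t(s,a)$ in~\eqref{eq:empirical-Lagrangian} is at most $B$ on the optimizer's support, so each of the $m_t$ summands has bounded range; Hoeffding's inequality, a covering-number argument over the feasible occupancy polytope (dimension $\sizeS\sizeA$), and a union bound over rounds $1,\dots,t$ then give $\eta_t \le C\sqrt{\ln(t/p)/m_t}$ with $C$ polynomial in $\sizeS$, $\sizeA$, $B$ and $1/(1-\gamma)$. Demanding $\norm{\xi_t}_2 \le \tfrac{1-\epsilon}{8}\,\delta =: \delta'$ for all rounds forces $\eta_t \lesssim \lambda\,\delta'^2$; with $\delta' \propto \delta(1-\epsilon)$ and the precise parameter bookkeeping this is exactly $m_t = \rrEmpNumSamples$ --- the fourth powers of $\delta$ and of $1-\epsilon$ there coming from $\delta'^2$ combined with the square-root loss from strong concavity.

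Finally I would unroll the perturbed recursion. Writing $z_t := \norm{d_t - d_S}_2$, the two previous steps give $z_{t+1} \le c\,z_t + \delta'$ with $c = (3+\epsilon)/4$, hence $z_t \le c^{t-1} z_1 + \delta'/(1-c)$. A crude diameter bound --- any feasible occupancy measure has $\ell_1$-mass $1/(1-\gamma)$, and the regularizer bounds the norm of any (possibly infeasible) empirical iterate --- gives $z_1 \le \tfrac{2}{1-\gamma} + (1+\sqrt{2})\sqrt{\sizeS\sizeA}$, and $\delta' = \tfrac{1-\epsilon}{8}\delta$ makes $\delta'/(1-c) = \delta/2$. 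Therefore $z_t \le \delta$ once $c^{t-1} z_1 \le \delta/2$, i.e. for all $t \ge 1 + \ln(2z_1/\delta)/\ln(4/(3+\epsilon))$, which is the claimed threshold up to how constants are folded into $z_1$.

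\textbf{Main obstacle.} The crux is the concentration step: turning a uniform bound on $\hat{\calL} - \calL$ into a norm bound on $\xi_t$ while (i) correctly confining both the dual variable $h$ and (via the regularizer) the primal variable $d$ to bounded sets so that Hoeffding applies, (ii) getting the right covering/dimension factors ($\sizeS^3$, $\sizeA$, $(B+\sqrt{\sizeA})^2$), and (iii) propagating the $1/\lambda^2$, $1/(1-\gamma)^6$ and --- most delicately --- the $1/\delta^4$ and $1/(1-\epsilon)^4$ dependence so that $m_t$ matches $\rrEmpNumSamples$. A secondary subtlety is that RR never deploys $\pi_{d_S}$, so the realized environment $(P_{t-1}, r_{t-1})$ keeps drifting and must be carried through the joint contraction rather than treated as fixed --- this is exactly where the LP-sensitivity bound of~\citep{MTR23} has to be combined with the~\citep{BHK20}-style stateful fixed-point argument.
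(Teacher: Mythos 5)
Your proposal is correct and follows essentially the same route as the paper: the paper likewise splits each update into the exact regularized best response (controlled via the Lipschitz dependence of the LP solution on $(P,r)$ from \citep{MTR23} composed with Assumption~\ref{assumption_sensitivity}, contracting the joint triple $(d,P,r)$ toward $(d_S,P_S,r_S)$) plus a finite-sample perturbation of order $\sqrt{\eta_t/\lambda}$ obtained from strong concavity and a uniform covering-plus-Hoeffding bound on $\hat{\calL}-\calL$ (imported as Lemma~\ref{lem:bound-GD-hGD}), with a union bound over rounds via $\delta_1 = p/2t^2$. The only cosmetic difference is the final recursion: you unroll the geometric sum $z_t \le c^{t-1}z_1 + \delta'/(1-c)$, while the paper folds the per-round error into the rate via the case-distinction Lemma~\ref{lem:contraction-case-distinction}, yielding the same $(3+\epsilon)/4$ threshold up to constants.
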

The bounds here are similar to the bounds in standard Performative RL.
For $\lambda$, there is no $\gamma$ in the numerator and the $\epsilon$ parameters are a bit different in the standard setting. The number of retrainings also has a factor of  $\ln(1/((1-\gamma)\delta))$ in standard Performative RL.

\section{Delayed Repeated Retraining}\label{sec.drr}
\newcommand{\subround}{g}
A different approach inspired by work from~\cite{BHK20}, is to not update the policy every round, but wait a number of~$k$ rounds before each update.
Then the policy is updated using only the environment from the last round of the $k$ deployments.
Algorithm~\ref{algo:delayed-rr-simple} illustrates this approach, called 
\emph{Delayed Repeated Retraining} (DRR).

The advantage of DRR is that during the rounds of repeatedly deploying the same policy, the MDP can somewhat stabilize and the learner might need a lower amount of retrainings and therefore less compute.

For the result, we use the following definition.
\begin{definition}\label{def.distPr}
Let $\distpr$ be the maximal distance between any environment and its successive environment, i.e.
\begin{equation*}
\distpr := \max_{P, r, d}\left(\|\Pc(P,r,d) - P\|_2 + \|\Rc(P,r,d)-r\|_2 \right).
\end{equation*}
\end{definition}
\begin{theorem}[informal, details in Appendix~\ref{appdx.sec.drr-exact}]\label{thm:delayed_rr_standard-simple}
    Let $d_i$ be computed by DRR with $k=\ln^{-1}\left(\frac{1}{\epsilon}\right)\ln\left(\frac{\distpr}{\delta\iota}\right)$.
    Suppose Assumption~\ref{assumption_sensitivity} holds 
    and $\lambda = \calO\left(\frac{\iota\cdot \sizeS^{5/2}}{(1-\epsilon)(1-\gamma)^4}\right)$.
    Then for any $\delta>0$, we have
    \begin{equation*}
    \norm{d_i - d_S}_2 \leq \delta
    \text{\quad for all } i\geq \ln\left(\left(\frac{2}{1-\gamma}\right)/\delta\right)\ .
    \end{equation*}
\end{theorem}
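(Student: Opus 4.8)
The plan is to treat DRR as a block iteration on occupancy measures, where block $i$ deploys $\pi_{d_i}$ for $k$ consecutive rounds and then sets $d_{i+1}$ to be the solution of the regularized program~\eqref{eq:regularized-rl-discounting} for the environment that has been reached. Write $\Gamma(P,r)$ for this solution map, so that the stable measure satisfies $d_S = \Gamma(P_{d_S},r_{d_S})$ by~\eqref{eq:perf-stable-policy-regularized}. I will establish a one-block recursion $\norm{d_{i+1}-d_S}_2 \le c\,\norm{d_i-d_S}_2 + c'\delta$ with $c,c' \le 1/e$ once the constant hidden in the stated $\lambda = \calO(\cdot)$ is taken large enough, and then unroll it. Three ingredients drive the recursion: (i) the response maps act as an $\epsilon$-contraction on $(P,r)$ (Assumption~\ref{assumption_sensitivity}, as already observed in the preliminaries), so a long block nearly equilibrates the environment; (ii) the limiting environment $d \mapsto (P_d,r_d)$ is $\tfrac{\iota}{1-\epsilon}$-Lipschitz; (iii) $\Gamma$ is Lipschitz in $(P,r)$ with modulus $\tilde{\calO}\!\left(\tfrac{\sizeS^{5/2}}{\lambda(1-\gamma)^4}\right)$ — the $L2$-perturbation analysis of \cite{MTR23}, carried over to our setting.

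\textbf{Steps.} First I would bound the within-block drift. Let $M^{(j)}$ be the environment after $j$ deployments of $\pi_{d_i}$, so $M^{(0)}$ is the environment at the end of block $i-1$. By Definition~\ref{def.distPr}, $\norm{M^{(0)}-M^{(1)}}_2 \le \distpr$, and by the $\epsilon$-contraction $\norm{M^{(j)}-M^{(j+1)}}_2 \le \epsilon^j\distpr$; summing the tail gives $\norm{M^{(k)} - (P_{d_i},r_{d_i})}_2 \le \tfrac{\epsilon^k}{1-\epsilon}\distpr$, and the choice $k = \ln^{-1}(1/\epsilon)\ln(\distpr/(\delta\iota))$ makes $\epsilon^k = \delta\iota/\distpr$, so the drift is at most $\tfrac{\delta\iota}{1-\epsilon}$. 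Second, I would derive the $\tfrac{\iota}{1-\epsilon}$-Lipschitzness of $d\mapsto(P_d,r_d)$ straight from the fixed-point identity $(P_d,r_d)=(\Pc(d,P_d,r_d),\Rc(d,P_d,r_d))$ and Assumption~\ref{assumption_sensitivity}. Third, I would invoke the perturbation lemma of \cite{MTR23}: since the objective in~\eqref{eq:regularized-rl-discounting} is $\lambda$-strongly concave and occupancy measures have $\ell_1$-norm $\tfrac{1}{1-\gamma}$, $\Gamma$ is Lipschitz in the environment with the modulus above. Composing via the triangle inequality on $d_{i+1}=\Gamma(M^{(k)})$ and $d_S=\Gamma(P_{d_S},r_{d_S})$ gives $\norm{d_{i+1}-d_S}_2 \le \tfrac{C\sizeS^{5/2}}{\lambda(1-\gamma)^4}\!\left(\tfrac{\delta\iota}{1-\epsilon} + \tfrac{\iota}{1-\epsilon}\norm{d_i-d_S}_2\right)$. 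Finally, with $\lambda = \calO\!\left(\tfrac{\iota\sizeS^{5/2}}{(1-\epsilon)(1-\gamma)^4}\right)$ and a large enough constant, both coefficients are at most $1/e$; unrolling and using $\norm{d_0-d_S}_2\le\tfrac{2}{1-\gamma}$ (occupancy measures have $\ell_1$-norm $\tfrac1{1-\gamma}$) yields $\norm{d_i-d_S}_2 \le e^{-i}\tfrac{2}{1-\gamma} + \calO(\delta)$, which is $\le\delta$ for $i\ge \ln\!\big(\tfrac{2}{(1-\gamma)\delta}\big)$ after enlarging the constant in $\lambda$ so the steady-state $\calO(\delta)$ term is a genuine fraction of $\delta$.

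\textbf{Main obstacle.} The heaviest step is (iii): establishing that the regularized RL program's solution moves by at most $\tilde{\calO}\!\big(\tfrac{\sizeS^{5/2}}{\lambda(1-\gamma)^4}\big)$ times the change in $(P,r)$. This needs a perturbed-LP / KKT argument tracking how the occupancy polytope deforms as $P$ varies, coupled with the $\lambda$-strong concavity, and it is the source of every dimension and horizon factor in the statement; I would reuse the corresponding argument of \cite{MTR23} rather than reprove it. The other delicate point is the bookkeeping of the two distinct sensitivities — environment-to-environment ($\epsilon$, controlling the drift through $\epsilon^k$) versus policy-to-limiting-environment ($\iota$, controlling the contraction through $\iota/\lambda$) — so that the stated $k$ forces the drift term to be order $\delta$ times the contraction factor and the recursion closes with the constants claimed; Assumption~\ref{assumption-simplicity} ties these parameters together, which is what keeps the constants clean.
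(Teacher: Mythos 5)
Your proposal is correct and follows essentially the same route as the paper's proof of Theorem~\ref{thm:delayed_rr_standard}: the same three ingredients (within-block drift bounded by $\tfrac{\epsilon^k}{1-\epsilon}\distpr$ via the contraction, the $\tfrac{\iota}{1-\epsilon}$-Lipschitzness of $d\mapsto(P_d,r_d)$ from the fixed-point identity, and the $\calO\big(\tfrac{\sizeS^{5/2}}{\lambda(1-\gamma)^4}\big)$-Lipschitzness of the regularized solution map, which the paper packages as Lemmas~\ref{lem:Pr_d_Pr_tilde}, \ref{lem:fixed_Pr_d} and~\ref{lem:GG_alphabeta}), composed into the identical one-block recursion and unrolled. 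The only cosmetic difference is that the paper closes the recursion with the case-distinction Lemma~\ref{lem:contraction-case-distinction} (which keeps the iterate inside the $\delta$-ball once entered, giving the stated constant exactly) rather than your direct geometric-series unrolling, which costs only an adjustment of the hidden constant in $\lambda$.
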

The regularization parameter $\lambda$ has an $\iota$ factor in DRR, but not in RR (see Theorem~\ref{thm:standard-rr-simple}).
The factor $\iota$ is close to $0$, if the MDP does not react strongly to the current policy.
In such settings, the conditions for $\lambda$ in DRR are substantially relaxed.
In addition, the number of retrainings required for DRR is much smaller than for RR.
However, RR may require fewer total rounds than DRR.

\begin{algorithm}
    \caption{Delayed Repeated Retraining}
    \label{algo:delayed-rr-simple}
    \begin{algorithmic}[1]
    \STATE {\bfseries Input:} radius $\delta$, 
    initial transition probability $P_0$ and reward function $r_0$, initial occupancy measure ${d_0}$, number of deployments~$k$

    \FOR{$i=0,1,2,\dots$}
        \FOR{$\subround =1, \dots, k$}
            \STATE \COMMENT{deploy $\pi_{d_i}$:}

            \STATE $P_{i\cdot k+\subround }\leftarrow \Pc(d_i, P_{i\cdot k+\subround -1}, r_{i\cdot k+\subround -1})$

            \STATE $r_{i\cdot k+\subround }\leftarrow \Rc(d_i, P_{i\cdot k+\subround -1}, r_{i\cdot k+\subround -1})$
            
        \ENDFOR
        
        \STATE Update policy to $\pi_{d_{i+1}}$
        \label{delayed_update_d-simple}
    \ENDFOR
    \end{algorithmic}
\end{algorithm}

\subsection{Finite Sample Guarantees}
In DRR with finite samples, the learner again applies the 
same policy for several rounds. After that it updates its policy using samples drawn from the most recent environment. 
For this, the learner uses optimization problem~\eqref{eq:repeated-optim-finite}.
\begin{theorem}[informal, details in Appendix~\ref{appdx.sec.drr-finite}]
    Let $d_i$ be computed by finite sample DRR with $k = 
    \ln^{-1}\left(\frac{1}{\epsilon}\right)\ln\left(\frac{5\cdot\distpr}{
    \delta \iota }\right)$.
    Suppose the Assumption~\ref{assumption_sensitivity} holds and Assumption~\ref{assumption-offline-rl}  holds for $k$ and parameter $B$.
    Let $p>0$.
    Furthermore assume $\lambda= \mathcal{O}\left(\frac{\iota(\sizeS+\gamma \sizeS^{5/2})}{
        (1-\epsilon)(1-\gamma)^4}\right)$.
    Then for $m_i =\drrEmpNumSamples$\footref{ftn.tildeO}, and 
    any $\delta > 0$, with probability at least $1-p$,
    \begin{equation*}
    \norm{d_i - d_S}_2 \leq \delta 
    \quad\text{\quad for all } i\geq 
   \frac{\ln\left(\frac{2}{1-\gamma}/\delta\right)}{
   \ln\left(4/\left(3+\epsilon\right)\right)
        } 
    + 1.
    \end{equation*}
    \label{thm:finite-samples-drr-standard-simple}
\end{theorem}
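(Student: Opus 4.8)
The plan is to prove a one-step contraction on $\norm{d_i - d_S}_2$, reusing the structure of the exact DRR argument (Theorem~\ref{thm:delayed_rr_standard-simple}) but replacing the exact regularized-RL solve by the empirical saddle-point solve~\eqref{eq:repeated-optim-finite} and carrying an extra additive term for the finite sample size, in the same way the finite-sample RR result (Theorem~\ref{thm:finite-samples-RR-simple}) relates to Theorem~\ref{thm:standard-rr-simple}. Write $d^{\mathrm{ex}}(P,r)$ for the solution of the exact problem~\eqref{eq:regularized-rl-discounting} in environment $(P,r)$, and $(P_d, r_d)$ for the limiting environment under deploying $\pi_d$ forever. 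I would bound $\norm{d_{i+1}-d_S}_2$ along the chain $d_{i+1} \to d^{\mathrm{ex}}(P_{ik+k}, r_{ik+k}) \to d^{\mathrm{ex}}(P_{d_i}, r_{d_i}) \to d_S$, controlling the hops by, respectively: (i) the finite-sample error of solving~\eqref{eq:repeated-optim-finite}; (ii) the environment-equilibration error after the $k$ repeated deployments; (iii) the Lipschitz sensitivity of the regularized solution with respect to the environment; and (iv) the sensitivity of the limiting environment with respect to the deployed occupancy measure, which re-expresses the final hop in terms of $\norm{d_i - d_S}_2$.

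Hops (ii)--(iv) are imported essentially from the exact analysis. For (ii), the map $(P,r) \mapsto (\Pc(d_i,P,r), \Rc(d_i,P,r))$ is an $\eps$-contraction (the contraction fact of the preliminaries) with fixed point $(P_{d_i}, r_{d_i})$, so $\norm{(P_{ik+k}, r_{ik+k}) - (P_{d_i}, r_{d_i})}_2 \le \eps^k \cdot \tfrac{\distpr}{1-\eps}$; the choice $k = \ln^{-1}(1/\eps)\ln\!\big(5\,\distpr/(\delta\iota)\big)$ makes this $O(\delta\iota/(1-\eps))$. For (iii) I invoke the perturbation bound of~\cite{MTR23} for the regularized LP: perturbing $(P,r)$ by $\Delta$ moves the optimal occupancy measure by $O\!\big(\tfrac{\sizeS^{5/2}}{\lambda(1-\gamma)^4}\Delta\big)$, with $\lambda$-strong concavity of the $-\tfrac{\lambda}{2}\norm{d}_2^2$ term responsible for Lipschitzness. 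For (iv), Assumption~\ref{assumption_sensitivity} applied at the two fixed points gives $\norm{(P_{d_i}, r_{d_i}) - (P_{d_S}, r_{d_S})}_2 \le \tfrac{\iota}{1-\eps}\norm{d_i - d_S}_2$. Composing (iii) and (iv) produces a multiplicative factor of order $\tfrac{\iota\,\sizeS^{5/2}}{\lambda(1-\eps)(1-\gamma)^4}$ on $\norm{d_i - d_S}_2$; the hypothesis $\lambda = \calO\!\big(\tfrac{\iota\sizeS^{5/2}}{(1-\eps)(1-\gamma)^4}\big)$ is calibrated so this factor is at most $\tfrac{3+\eps}{4} < 1$ with slack for the additive terms. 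Combining with hops (i) and (ii), this yields
\[
\norm{d_{i+1} - d_S}_2 \;\le\; \tfrac{3+\eps}{4}\,\norm{d_i - d_S}_2 + O(\delta),
\]
and unrolling the geometric series with the crude uniform bound $\norm{d_0 - d_S}_2 \le \tfrac{2}{1-\gamma}$ gives $\norm{d_i - d_S}_2 \le \delta$ once $i \ge \ln\!\big(\tfrac{2}{(1-\gamma)\delta}\big)/\ln\!\big(\tfrac{4}{3+\eps}\big) + 1$.

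The main obstacle is hop (i), which is where the sample complexity $m_i = \drrEmpNumSamples$ comes from and where the finite-sample RR machinery must be adapted. The argument is: (a) on the relevant bounded domain of occupancy measures and Lagrange multipliers, show that the empirical Lagrangian $\hat{\calL}(\cdot,\cdot;ik+k)$ concentrates uniformly around the true Lagrangian $\calL$ of~\eqref{eq:regularized-rl-discounting}, with deviation $\tilde{O}(1/\sqrt{m_i})$ at confidence $1 - p/\mathrm{poly}(i{+}1)$ — this is exactly where the overlap Assumption~\ref{assumption-offline-rl}, invoked for $\pi_{d_i}$ over all $\le k$ intermediate rounds, is used, since it bounds the importance weights $d(s,a)/\bar{d}_{ik+g}(s,a)$ by $B$ and thus keeps the range and variance of the empirical objective finite; (b) convert uniform Lagrangian closeness into $\norm{d_{i+1} - d^{\mathrm{ex}}(P_{ik+k}, r_{ik+k})}_2 = O(\delta)$ using $\lambda$-strong concavity in $d$ together with the saddle-point structure; and (c) union-bound over all retraining iterations, which produces the $\ln\!\big((i{+}1)/p\big)$ factor inside $m_i$. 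A point of care is that the samples come from $(P_{ik+k}, r_{ik+k})$, the environment reached after $k$ fixed deployments rather than the limiting one, so the equilibration error of hop (ii) must be threaded consistently through both the empirical and the exact comparisons; once this bookkeeping is settled, the theorem follows by combining (a)--(c) with the contraction established above.
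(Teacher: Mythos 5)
Your proposal follows essentially the same route as the paper's proof in Appendix~\ref{appdx.sec.drr-finite}: the paper splits $\norm{d_{i+1}-d_S}_2$ into a finite-sample term $T_1=\norm{d_{i+1}-\GD(P_{(i+1)k},r_{(i+1)k})}_2$ (bounded via the imported concentration lemma from \cite{MTR23}, with the same union bound over retraining rounds) and an exact term $T_2$, which is further decomposed exactly as your hops (ii)--(iv) via Lemmas~\ref{lem:Pr_d_Pr_tilde}, \ref{lem:GG_alphabeta} and~\ref{lem:fixed_Pr_d}, before closing the recursion $\norm{d_{i+1}-d_S}_2\le x_1\delta+x_2\norm{d_i-d_S}_2$ with Lemma~\ref{lem:contraction-case-distinction}. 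Your bookkeeping of the equilibration error, the role of the overlap assumption, and the calibration of $k$ and $\lambda$ all match the paper's argument, so the proposal is correct and not materially different.
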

In this result, $\lambda$ has a factor of $\iota$, whereas RR has a factor of $\epsilon$ (see Theorem~\ref{thm:finite-samples-RR-simple}).
In prior work, the difference of $\epsilon$ and $\iota$ was ignored and the two were assumed to be the same~\citep{BHK20}.
As we see here however, interesting properties emerge when we explicitly assume that they are not the same. In settings where the environment does not respond strongly to the current policy, but strongly depends on the previous environment, $\epsilon$ is larger than $\iota$, substantially relaxing the conditions on $\lambda$ for DRR. 
DRR also requires less samples, by a factor of $(1-\epsilon)^4$ when assuming equal $\lambda$. The number of retrainings also is less for DRR.  
Still, RR may need fewer rounds of retraining overall because DRR only retrains every $k$th round. Assumption~\ref{assumption-offline-rl} is stricter for DRR, because it has a larger $k$-parameter than RR.
The $k$ parameter in Assumption~\ref{assumption-offline-rl} indicates how far into future rounds the overlap of occupany measures has to reach.
\section{Mixed Delayed Repeated Retraining (MDRR)}\label{sec.mdrr}
Consider a scenario where in each round the learner gets a limited number of samples from the MDP.
In this scenario, in each training step DRR would use samples from one round only.
But using samples from multiple rounds would allow the learner to use more samples overall, reducing variance and potentially improving convergence.

However, it is challenging to determine how the learner should combine samples from multiple rounds.
Should they optimize using all available samples collectively, or should they use more samples from recent rounds and less from older ones? Additionally, it is uncertain whether such a method would converge and, if so, whether it would offer any benefits. 
To address these questions, we present a novel algorithm that:
\begin{itemize}
\item Uses samples from multiple rounds.
\item Allows for prioritizing recent samples while still incorporating older ones.
\item If the response of the environment depends strongly on the previous MDP, achieves convergence with fewer samples per round. If additionally the number of provided samples per deployment is low, it provides better approximation guarantees.
\end{itemize}

The algorithm uses a new optimization problem, 
which can be viewed as an extension of the previous empirical Lagrangian~\eqref{eq:empirical-Lagrangian} to multiple rounds:
\newcommand{\numsam}{U}
\begin{align}
\begin{split}
    &\hat{\calL}^M(d,h,i) = - \frac{\lambda}{2} \norm{d}_2^2 
    + \sum_s h(s) \rho(s) 
    \\ &
    + \sum_{\subround =1}^{k} \sum_{\genfrac{}{}{0pt}{}{(s,a,r,s')}{\in F_{i\cdot k+\subround }}}
    \frac{1}{\numsam_i}\frac{d(s,a)}{\bar{d}_{i\cdot k+\subround }(s,a)}
    \frac{r - h(s) + \gamma h(s')}{1-\gamma}
\end{split}
\label{eq:lagrangian-finite-mdrr}
\end{align}
Here we define by $\bar{d}_{i\cdot k+\subround }$ the occupancy measure of policy $\pi_{d_i}$ under dynamics~$P_{i\cdot k+\subround }$.
$\numsam_i$ denotes the total number of samples, i.e. $\numsam_i\defeq \sum_{\subround =1}^{k}\abs{F_{i\cdot k+\subround }}$.
The learner thus optimizes over samples from multiple rounds of deployment.

But there is an inherent trade-off: recent samples contain more information about the current environment, but using earlier 
samples allows the total set of samples to be larger.

To balance this trade-off, the approach here is to use more samples from recent rounds and less samples from early rounds.
For illustration, let's assume that the learner didn't update its policy since
MDP $M_{i\cdot k}=(\setS, \setA, P_{i\cdot k}, r_{i\cdot k}, \rho)$ and updates every $k$ rounds.
Then they might take $m$ samples from $M_{i\cdot k+1}$, $mv$ samples from $M_{i\cdot k+2}$ (for $v>1$), 
$mv^2$ samples from $M_{i\cdot k+3}$,
\dots, and $mv^{k-1}$ samples from $M_{i\cdot k+k}$.
If $v$ is close to $1$, the learner takes approximately equal number of samples from all rounds. 
If $v$ is large, and $m$ small, the learner focuses more on recent rounds.
The pseudocode for this approach is shown in Algorithm~\ref{algo:mdrr}, we call it
\emph{Mixed Delayed Repeated Retraining} (MDRR).

\begin{algorithm}
    \caption{Mixed DRR (MDRR)}
    \label{algo:mdrr}
    \begin{algorithmic}[1]
    \STATE {\bfseries Input:} radius $\delta$,
    initial $P_0$ and $r_0$, initial occupancy measure ${d_0}$, hyperparameters $v$ and $k$,
    total number of samples for each round $\numsam_i$

    \FOR{$i=0,1,2,\dots$}
        \FOR{$\subround  = 1, \dots, k$}

            \STATE $P_{i\cdot k+\subround }\leftarrow \Pc(d_i, P_{i\cdot k+\subround -1}, r_{i\cdot k+\subround -1})$

            \STATE $r_{i\cdot k+\subround }\leftarrow \Rc(d_i, P_{i\cdot k+\subround -1}, r_{i\cdot k+\subround -1})$

            \STATE $F_{i\cdot k + \subround }\leftarrow$ draw $\frac{v-1}{v^k-1}v^{\subround -1}\numsam_i$ samples from $(P_{i\cdot k+\subround }, r_{i\cdot k+\subround })$\label{line.mdrr-draw}
        \ENDFOR
        
        
        \STATE Update occupancy measure $d_{i+1}\leftarrow 
        \on{arg}\max_d \min_h \hat{\calL}^M(d,h,i)
        $\label{eq:minmaxMdrrAlgo}
    \ENDFOR
    \end{algorithmic}
\end{algorithm}

In MDRR the learner uses $m_{i\cdot k+\subround }=\frac{v-1}{v^k-1}v^{\subround -1} \numsam_i$ samples from environment $(P_{i\cdot k+\subround }, r_{i\cdot k+\subround })$ (for each $\subround =1,\dots, k$), where $\numsam_i$ denotes the total number of samples used to compute $d_{i+1}$.

\begin{theorem}[informal, details in Appendix~\ref{appdx.sec.mdrr-theorem-sec}]
    Let $d_i$ be computed by MDRR with
    $k\geq 
    \frac{\ln\left(\frac{\epsilon(v-1)}{v\epsilon-1}\right)+\ln\left(
    \frac{5(1-\epsilon)\distpr}{\iota\delta}\right)
    }{\ln\left(1/\epsilon\right)}$.
    Suppose the Assumption~\ref{assumption_sensitivity} holds and the overlap Assumption~\ref{assumption-offline-rl}  holds for $k$ and parameter $B$.
    Let $p>0$.
    Also assume that $\lambda = \mathcal{O}\left(\frac{\iota(\sizeS+\gamma \sizeS^{5/2})}{
        (1-\epsilon)(1-\gamma)^4}\right)$.
    Further let
    $\numsam_i = \drrEmpNumSamples$\footref{ftn.tildeO} be the total number of samples in retraining-round $i$
    and $v>\frac{1}{\epsilon}$.
    Then for any $\delta > 0$, with probability at least $1-p$,
    \begin{equation*}
    \norm{d_i - d_S}_2 \leq \delta
    \text{\quad for all } i \geq 
    \frac{\ln\left(\frac{2}{1-\gamma}/\delta\right)}{
    \ln\left(4/\left(3+\epsilon\right)\right)
        } +1\ .
    \end{equation*}
    \label{thm:edrr-mixed-response-simple}
\end{theorem}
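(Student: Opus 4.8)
The plan is to mirror the analysis of finite-sample DRR (Theorem~\ref{thm:finite-samples-drr-standard-simple}), replacing the single-round empirical Lagrangian bound by a bound on the multi-round Lagrangian $\hat{\calL}^M$, and then tracking how the chosen geometric sampling schedule $m_{i\cdot k+\subround } = \frac{v-1}{v^k-1}v^{\subround-1}\numsam_i$ interacts with the contraction of the environment. The argument decomposes into three parts: (i)~a \emph{drift} part, bounding how far the environments $P_{i\cdot k+\subround }, r_{i\cdot k+\subround }$ (for $\subround=1,\dots,k$) are from the limiting environment $P_{d_i}, r_{d_i}$ of the currently deployed policy, using the contraction factor $\epsilon$ from Assumption~\ref{assumption_sensitivity} together with $\distpr$ from Definition~\ref{def.distPr}; (ii)~a \emph{concentration} part, showing that $\hat{\calL}^M(\cdot,\cdot,i)$ is uniformly close (over the relevant bounded domain of $d$ and $h$) to the \emph{exact} Lagrangian of the regularized program~\eqref{eq:regularized-rl-discounting} for the limiting MDP $M_{d_i}$, with the stated failure probability $p$; and (iii)~a \emph{contraction-of-the-outer-loop} part, showing that the map $d_i \mapsto d_{i+1}$ is a contraction with modulus roughly $\tfrac{3+\epsilon}{4}$, from which the iteration count $i \geq \ln(\tfrac{2}{1-\gamma}/\delta)/\ln(4/(3+\epsilon)) + 1$ follows by the same telescoping/geometric-series argument as in the DRR proof.

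For part~(i), I would first observe that after the learner has deployed $\pi_{d_i}$ for $\subround$ consecutive rounds starting from $M_{i\cdot k}$, the pair $(P_{i\cdot k+\subround}, r_{i\cdot k+\subround})$ is within $\epsilon^{\subround}\cdot\distpr/(1-\epsilon)$ of the fixed point $(P_{d_i}, r_{d_i})$ — this is the standard geometric-convergence-to-the-fixed-point estimate for the contraction established in Appendix~\ref{appdx.sec.contraction}. Since $\hat{\calL}^M$ weights round $i\cdot k + \subround$ by the fraction $\frac{v-1}{v^k-1}v^{\subround-1}$, the \emph{weighted} bias incurred by using stale environments is $\sum_{\subround=1}^k \frac{v-1}{v^k-1}v^{\subround-1}\cdot \epsilon^{\subround}\cdot \frac{\distpr}{1-\epsilon}$, which is a geometric sum in $v\epsilon$; because $v > 1/\epsilon$, i.e.\ $v\epsilon > 1$, this sum is dominated by its last term and evaluates to $\tilde\calO\!\big(\frac{(v-1)}{v\epsilon-1}\cdot \epsilon^{k}\cdot \frac{\distpr}{1-\epsilon}\big)$ — this is exactly why the stated lower bound on $k$ has the form $k \geq \big(\ln(\tfrac{\epsilon(v-1)}{v\epsilon-1}) + \ln(\tfrac{5(1-\epsilon)\distpr}{\iota\delta})\big)/\ln(1/\epsilon)$, i.e.\ $k$ is chosen precisely large enough to drive this weighted drift below the tolerance $\Theta(\iota\delta)$. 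The appearance of $\iota$ here (rather than $\epsilon$) in the tolerance, and hence in the bound on $\lambda$, comes from the same mechanism as in DRR: the per-step sensitivity of the \emph{solution map} of~\eqref{eq:regularized-rl-discounting} to perturbations in $(P,r)$ is scaled by $\iota$ because the outer contraction only needs to absorb the policy-dependence $\iota\|d-d'\|_2$, with the environment drift already made negligible by the choice of $k$.

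For part~(ii), the key point is that $\hat{\calL}^M$ was \emph{constructed} so that $\expct{\hat{\calL}^M(d,h,i)} = \sum_{\subround=1}^k \frac{m_{i\cdot k+\subround}}{\numsam_i}\,\calL_{i\cdot k + \subround}(d,h)$, a convex combination of the true Lagrangians of the individual rounds; combining with part~(i) this expectation is within $\Theta(\iota\delta)$ of $\calL_{M_{d_i}}(d,h)$, the true Lagrangian of the limiting MDP. The concentration then follows by the same argument as for single-round RR/DRR: a union bound over a net of $(d,h)$ together with a Bernstein/Hoeffding bound on each summand, where the variance proxy is controlled by the overlap parameter $B$ (Assumption~\ref{assumption-offline-rl}, now invoked for the larger $k$) entering through the importance ratios $d(s,a)/\bar{d}_{i\cdot k+\subround}(s,a)$. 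Since the total number of samples $\numsam_i$ used per retraining-round is the \emph{same} $\drrEmpNumSamples$ as DRR uses \emph{per deployment}, and MDRR spreads this over $k$ deployments, the effective per-deployment count is a $\frac{(v-1)v^{k-1}}{v^k-1}$ fraction of that — this is the improvement claimed in Table~\ref{table:overview}. I expect the main obstacle to be part~(ii) done carefully: one must verify that the geometric sampling schedule does not blow up the worst-case variance of the weighted estimator — the weight $\frac{1}{\numsam_i}$ on a round that contributed only $m_{i\cdot k+1} = \frac{v-1}{v^k-1}\numsam_i$ samples means each such sample carries weight proportional to $1/\numsam_i$ but there are few of them, so the second-moment contribution of early rounds is $O(\frac{m_{i\cdot k+\subround}}{\numsam_i^2})$ and summing over $\subround$ gives $O(1/\numsam_i)$ as needed; making this rigorous, and checking that the bias from part~(i) and the statistical error from part~(ii) can be simultaneously pushed below the threshold required by the outer-loop contraction in part~(iii) (which forces the $\iota\delta$ scaling and the extra $+1$ in the round count for the first "burn-in" retraining), is where the bookkeeping is delicate. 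Parts~(i) and~(iii) are essentially re-runs of the DRR analysis with $\delta$ rescaled and $k$ enlarged.
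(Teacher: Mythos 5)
Your decomposition---concentration of the multi-round empirical Lagrangian $\hat{\calL}^M$ around the exact mixture Lagrangian, drift of the weighted mixture environment $(\bar{P}_i,\bar{r}_i)$ to the limiting environment $(P_{d_i},r_{d_i})$ via the geometric sum in $v\epsilon$ (dominated by its last term since $v\epsilon>1$, with $k$ chosen to push it below $\Theta(\iota\delta)$), and an outer contraction with modulus $\tfrac{3+\epsilon}{4}$ closed by the standard case-distinction lemma---is essentially the paper's own proof, which splits $\norm{d_{i+1}-d_S}_2$ into exactly these three terms $T_1+T_2+T_3$ and proves your weighted-drift estimate as a separate lemma. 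The one point to pin down is that the paper propagates the environment drift through the \emph{solution map} $\GD$ (linearly, with sensitivity $\phi/\lambda$ from Lemma~\ref{lem:GG_alphabeta}), not through Lagrangian closeness followed by strong concavity (which would yield only a square-root dependence on the drift and would force a roughly doubled $\ln\bigl(\distpr/(\iota\delta)\bigr)$ term in $k$); your part~(i) gestures at the former but your part~(ii) phrasing (``this expectation is within $\Theta(\iota\delta)$ of $\calL_{M_{d_i}}$'') suggests the latter, so you should take the solution-map route to recover the stated value of $k$.
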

The proof of this result involves showing that the empirical Lagrangian~\eqref{eq:lagrangian-finite-mdrr} approximates an exact Lagrangian of the optimization problem where the MDP is a mixture of MDPs from different rounds. 
We then show that the solution to this optimization problem approximates the solution of an exact one-step update with the limiting MDP (i.e. the MDP which the environment converges to if the learner repeatedly applies the current policy).
In a last step we apply arguments similar to the proof of convergence for DRR.

To compare MDRR to RR and DRR, let's first consider the case when $\epsilon$ is close to $1$. This holds when the environment responds strongly to the old environment, for example when the new environment after one step is a slight alteration of the old environment.  We expect this property to hold in many applications, because the environment shift typically happens only slowly over time. We anticipate that MDRR performs particularly well in those settings, because it uses samples from old environments, and if those environments are close to the current environment, those samples are more informative. And indeed, this is what we observe. The number of samples required in line~\ref{line.mdrr-draw} of MDRR is smaller by a factor of $\frac{v^k-v^{k-1}}{v^k-1}$, which converges to $(v-1)/v$ for large $k$. When $\epsilon$ is close to $1$, we can set $v$ close to $1$, resulting in a significant decrease in the required number of samples. 

\newcommand{\outerwidth}{0.23}
\newcommand{\innerwidth}{0.279}
\newcommand{\innerspaceabove}{0.1pt}
\begin{figure*}[ht]
    \centering
    \setcounter{subfigure}{0}
    \begin{subfigure}[c]{\outerwidth\textwidth}
        \includegraphics[width=\textwidth]{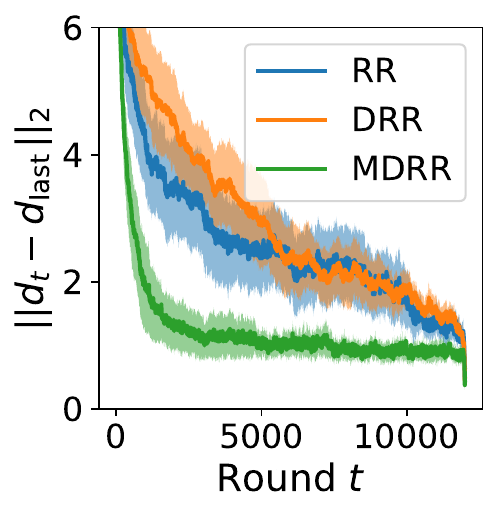}
        \caption{Here $w=0.5$}
        \label{fig:samples1000-to-last-iteration}
    \end{subfigure}
    \begin{subfigure}[c]{\outerwidth\textwidth}
        \includegraphics[width=\textwidth]{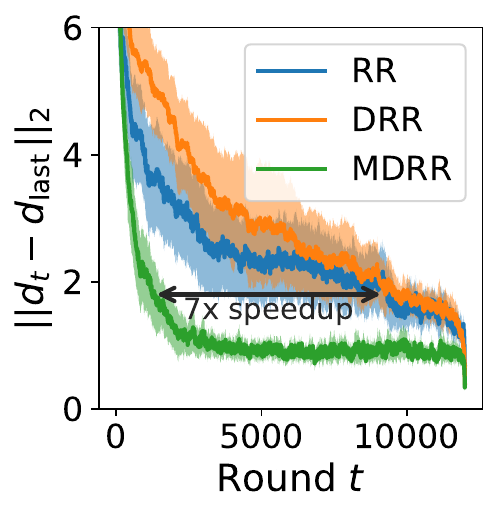}
        \caption{Here $w=0.15$}
        \label{fig:samples1000-to-last-iteration-w15}
    \end{subfigure}
    \begin{subfigure}[c]{\outerwidth\textwidth}
        \includegraphics[width=\textwidth]{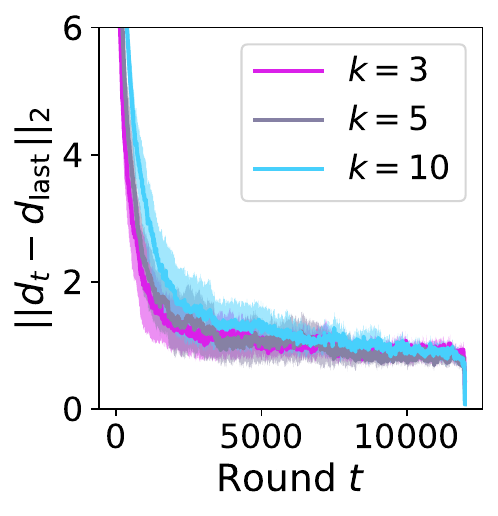}
        \caption{MDRR with $v=1.1$}
        \label{fig:ks-to-last-iteration}
    \end{subfigure}
    \begin{subfigure}[c]{\outerwidth\textwidth}
        \includegraphics[width=\textwidth]{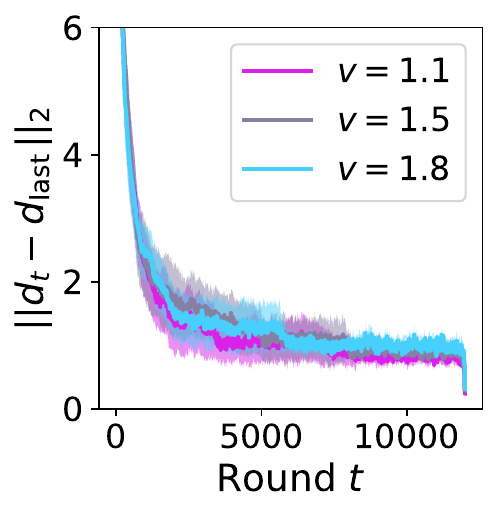}
        \caption{MDRR with $k=5$}
        \label{fig:vs-to-last-iteration}
    \end{subfigure}
    \caption{
    The figures show the distance of the current occupancy measure from the average of the last 10 in that run (after 11990 deployments). The data represent means computed over 20 trials, along with their 95\% confidence intervals.
    Unless otherwise noted, the settings are $k=3$ for DRR and MDRR, $v=1.1$ for MDRR, 1000 trajectories per iteration, $B=10$, $\lambda=0.1$ and  $w=0.5$
    Figure~\ref{fig:samples1000-to-last-iteration} and~\ref{fig:samples1000-to-last-iteration-w15} compare the three algorithms to one another, while
    Figure~\ref{fig:ks-to-last-iteration} compares MDRR with different values
    for the hyperparameter $k$
    and
    Figure~\ref{fig:vs-to-last-iteration} compares MDRR with different values for the hyperparameter~$v$.}
    \label{fig:main_plots}
\end{figure*}

The regularization parameter $\lambda$ is the same as for DRR and has a factor of $\iota$ compared to RR which has a factor of $\epsilon$. But note that the number of samples has a factor of $1/\lambda^2$ in all three algorithms, therefore in settings where there are few samples, one needs larger $\lambda$ to guarantee convergence.
However, because MDRR requires less samples per round than RR and DRR in those settings, it requires smaller values of $\lambda$.
The number of retrainings is similar to DRR and significantly less than for RR.

In general, we see that MDRR performs particularly well in settings where the environment responds strongly to the previous environment in a given round, which likely is a scenario often present in practice.

\section{Experiments}\label{sec.experiments}
\paragraph{Environment} In order to compare the three algorithms in a fair and tractable experimental setup, we use a 
variation of the experimental testbed from \cite{MTR23}, with two agents controlling an actor in a grid-world. In our testbed, agent $A_1$ proposes a control policy for the actor and $A_2$ responds by overriding some of the actions taken by the control policy. Hence, $A_1$'s effective environment is performative.
More information about this experimental setup can be found Appendix~\ref{appdx.explanation-env}.

To simulate a slow response, $A_2$
plays a weighted combination of its last policy and a softmax of its optimal $Q$-values.
Specifically, the policy of $A_2$ in round $i$ is
\begin{equation}\label{eq.agent2}
\pi_i^2(a | s) = w\cdot \frac{e^{Q_2^{*|\pi_1}(s,a)}}{\sum_{a'\in A}e^{Q_2^{*|\pi_1}(s,a')}} +  (1-w) \cdot \pi_{i-1}^2(a | s)
\end{equation}
Here $Q_2^{*|\pi_1}(s,a)$ are the optimal $Q$-values for $A_2$,
while $w$ describes the responsiveness of the environment towards the deployed policy of $A_1$.
For small $w$, the environment responds strongly to the current policy,
while for large $w$ the environment is less responsive to the current policy.

\paragraph{Implementation}
We study the finite sample setting, and sample trajectories instead of taking single samples from occupancy measures.
The learner solves the min-max-problem~\eqref{eq:repeated-optim-finite} using a follow-the-regularized-leader algorithm described in Appendix~\ref{appdx.ftrl}.
To evaluate the speed at which the algorithms reach a stable occupancy measure, we evaluate how the occupancy measure at each round compares to the average of the last $10$ occupancy measures, which we denote by~$d_{\on{last}}$.\footnote{Code available at \url{https://github.com/rank-and-files/performative-rl-gradually-shifting-envs}}

\paragraph{Performance}
In Figures~\ref{fig:samples1000-to-last-iteration} and~\ref{fig:samples1000-to-last-iteration-w15} we see that
MDRR converges the fastest to~$d_{\on{last}}$. This is true both for the setting where the environment
changes faster ($w=0.5$, Figure~\ref{fig:samples1000-to-last-iteration})
and when it changes more slowly ($w=0.15$, Figure~\ref{fig:samples1000-to-last-iteration-w15}).
This is the case even though MDRR uses less retrainings than RR.
But MDRR uses more samples per retraining, 
and this seems to lead to better convergence properties
in the exposed settings.
This also means that MDRR has lower variance, as indicated by the smaller confidence intervals. 
In Appendix~\ref{appdx.large-w} we additionally study settings with larger values of $w$, where the environment is more dynamic. Also here MDRR significantly outperforms RR and DRR.

\paragraph{Choice of Hyperparameters}
As we can see in Figure~\ref{fig:ks-to-last-iteration}, the convergence properties of MDRR for different values of $k$ are similar.
As we can see in Figure~\ref{fig:vs-to-last-iteration}, in the range of $v=1.1$ to $v=1.8$,  there does not seem to be much difference in speed of convergence. The results indicate that MDRR is robust to the choice of its hyperparameters.

\paragraph{Compute details}
The experiments in Figure~\ref{fig:main_plots} were conducted on a compute cluster with each machine having 4 Intel Xeon E7-8857 v2 CPUs and 1.5 TB of RAM. It took approximately 80 to 100 hours per algorithm to complete each experiment.

\section{Conclusion}
This work initiates the study of performative RL in scenarios where the environment changes gradually. 
We introduce different algorithms in this setting and compare them extensively both theoretically and experimentally. 
Our results suggest that our novel MDRR algorithm performs particularly well in this setting, and it would be interesting to investigate similar algorithms in performative prediction.

\begin{acknowledgements} 
This research was, in part, funded by the Deutsche Forschungsgemeinschaft (DFG, German Research Foundation) – project number 467367360.
\end{acknowledgements}

\bibliography{main}

\newpage

\onecolumn

\title{\ourtitle\\(Supplementary Material)}
\maketitle

\appendix

\doparttoc
\faketableofcontents
\addcontentsline{toc}{section}{Appendix} 
\makeatletter

\part{Appendix}
\begin{tikzpicture}[remember picture,overlay]
\fill[white] (0,0) rectangle (5cm,3cm);
\end{tikzpicture}%
\vspace{-4cm}
\parttoc %

\todo{silence TODOs}

\section{Additional Related Work}\label{appdx.related-work}
In this section we present some more related work on Adversarial MDPs and Reinforcement Learning (RL).

{\bf Adversarial MDPs.} More broadly, our framework is related to the literature on adversarial and non-stationary MDPs, which extensively studied online learning under adversarial and non-stationary rewards and transitions\citep{even2004experts,even2009online,abbasi2013online,yu2009online,rosenberg2019online,cheung2020reinforcement,wei2021non}. The positive results therein, in particular, no-regret guarantees when both rewards and transitions evolve over time, often assume budget constraints on how many times and by how much the underlying MDP model can change~\citep{abbasi2013online,cheung2020reinforcement,wei2021non}. We instead rely on sensitivity assumptions (Assumption \ref{assumption_sensitivity}), introduced in Section \ref{sec.formal-setting}.   

{\bf Reinforcement Learning.} We also mention the recent work on RL in Newcomb-like environments~\citep{bell2021reinforcement}, whose framework is similar to the original performative RL framework of \cite{MTR23}. There, the focus is on the convergence of value-based RL algorithms; we focus on repeated retraining and allow the environment response model to gradually change over time.
From a practical point of view, repeated retraining is similar to alternating optimization for game-theoretic bi-level optimization problems in RL(e.g., \citep{rajeswaran2020game,mohammadi2023implicit}). The latter can be thought of as a training framework for finding optimal commitment policies in Markov games, whereas the former repeatedly deploys a policy, collects data, and trains a new policy using offline RL. In that regard, we also relate this paper to the vast literature on offline RL~\citep{levine2020offline}. From a technical point of view, the most relevant aspects are coverage assumptions and data generation process: we consider the ones from \citep{MTR23}, which are based on \citep{zhan2022offline, munos2008finite}.
\section{Additional Experimental Details}\label{appendix-experiments}
\begin{figure}[b]
\centering
\includegraphics[width=0.15\textwidth]{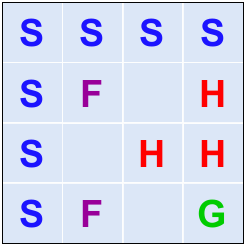}
\caption{\label{fig:gridworld}The grid-world.}
\end{figure}
This section discusses more details on the experiments. Subsection~\ref{appdx.explanation-env} explains the environment further, subsections~\ref{appdx.sample-lists-mdrr} and~\ref{appdx.ftrl} discuss further algorithmic details, subsection~\ref{appdx.exp.compute} discusses the type and amount of compute used, and in subsection~\ref{subsec.sanity-check} we sanity-check if the comparison presented in the main paper is fair.

\subsection{Explanation of the Environment}
\label{appdx.explanation-env}
The experimental setting is an adapted version of the one from~\cite{MTR23}.
We consider the grid world environment depicted in figure~\ref{fig:gridworld}.
There is one actor in this grid-world environment, which is controlled by two agents, 
agent $A_1$ and agent $A_2$.
The actor starts randomly in one of the $S$ states, with uniform probability.
$A_1$ can decide where the actor goes by choosing one of the directions left, right, up or down.
$A_2$ can decide to intervene on the direction which $A_1$ chose.
The actions of $A_2$ are not-intervene, left, right, up or down.
In case $A_2$ chooses not-intervene, the direction chosen by $A_1$ is used.
Otherwise, the direction chosen by $A_2$ gets used.

Both agents are reinforcement learners with different goals.
$A_1$ optimizes according to the grid-world in figure~\ref{fig:gridworld}. 
$A_2$
optimizes according to a perturbed grid-world, where each blank, $F$ or $H$ cell is the same as for $A_1$ with probability $0.7$. With probability $0.3$,
it gets changed to either blank, $F$ or $H$ (chosen uniformely at random).

$A_1$ and $A_2$ get a negative reward of $-0.01$ if the actor visits a blank or an $S$ cell,
a slightly increased negative reward of
$-0.02$ if visiting a $F$ cell and a 
large negative reward of $-0.5$ for $H$ cells.
Additionally, when $A_2$ decides to intervene, an additional cost of $-0.05$ is inflicted on it.

$A_1$ is the main learner which performs RR, DRR or MDRR.
$A_2$ models the response of the environment.

$A_2$ starts by playing the policy which does never intervene.
In each iteration, first $A_1$ optimizes its policy, and then $A_2$ responds to the policy
played by $A_1$. 
$A_2$ slowly adapts to the current played policy by agent~1 in each round, by using a mixture
between the last played policy of $A_2$ and the softmax over the current optimal $Q$-values, as described in equation~\eqref{eq.agent2} in the main paper.

Furthermore, we use $\gamma=0.9$ for both $A_1$ and $A_2$ and a 
maximum trajectory length of $50$, i.e. after $50$ steps,
the trajectory is cut off.
Instead of using the exact occupancy measures $\bar{d}_i$ in the optimization, we approximate them using the trajectories.

\subsection{Computing Sample Lists for MDRR}\label{appdx.sample-lists-mdrr}
In this subsection we describe a practical way to compute samples from MDRR.

Recall that MDRR uses $m_{ik+t} = w_t \numsam_i$ samples in iteration $i$ from round $t$, where $w_t = \frac{(v-1)v^{t-1}}{v^k-1}$ and $\numsam_i$ is the total number of samples used in for the $i$-th retraining.
In practice, we assume that the learner is given some samples for each round.

In practice, we use a slightly different algorithm to compute the number of samples MDRR uses, because of two reasons.
The first reason is that $w_t \numsam_i$ could be non-integral. 
The second reason is that even though MDRR needs $m_{t}$ samples in round $t$, samples from rounds after $t$ could also count towards this, if the same policy was applied in the rounds between. This is 
because those samples are collected after more repeated applications of the same policy. Therefore the environment at this point is closer to the limiting environment than in round $t$ and using additional samples from higher rounds would increase performance more than using samples from round $t$.

To calculate the number of samples MDRR uses from each round, we propose Algorithm~\ref{algo:practical-sampling-mdrr}, which we explain in the following.
In the following we use the terms \emph{list} and \emph{sequence} somewhat loosely to refer to linked lists of samples and linked lists of linked lists of samples respectively. 

Algorithm~\ref{algo:practical-sampling-mdrr} takes as input a sequence of lists of samples
$S_1, \dots, S_{k}$ and weights $w_1, \dots, w_{k}\in \mathbb{R}^+$.
We can think of $S_1$ to be the number of samples in step $ik+1$ for some $i$, 
$S_2$ to be the number of samples in step $ik+2$, etc.~.
Algorithm~\ref{algo:practical-sampling-mdrr} fulfills the following property.
\begin{theorem}\label{thm.practical-sampling-mdrr}
    Algorithm~\ref{algo:practical-sampling-mdrr} outputs a sequence $\calF =[ F_1, \dots, F_{k}]$,
    which contains the maximal number of samples $|\calF|$ such that 
    \begin{enumerate}
        \item $F_t\subseteq S_t$ for all $t \in \{1, \dots, k\}$ and \label{item:F_t_subset}
        \item $\abs{[F_t, \dots, F_{k}]} \geq \sum_{t'=t}^{k} w_{t'} |\calF|$ for all $t \in \{1, \dots, k\}$\ .\label{item:F_t_proper_weights}
    \end{enumerate}
    where we denote by $\abs{[F_t, \dots, F_k]}$ the number of samples in total in $F_t, \dots, F_k$.
    Similarly $\abs{\calF}$ is the number of samples in $\calF$.
\end{theorem}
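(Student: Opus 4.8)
The plan is to prove Theorem~\ref{thm.practical-sampling-mdrr} in three steps: (a) identify a closed form $N^{\star}$ for the largest number of samples any valid output can contain; (b) show $N^{\star}$ upper-bounds $\abs{\calF'}$ for every sequence $\calF'$ meeting properties~\ref{item:F_t_subset} and~\ref{item:F_t_proper_weights}; and (c) show Algorithm~\ref{algo:practical-sampling-mdrr} returns a valid sequence attaining $\abs{\calF}=N^{\star}$. Throughout I would write $s_j \defeq \abs{S_j}$ and $W_t \defeq \sum_{j=t}^{k} w_j$ for the suffix sums of the weights, and for any candidate output $\calF' = [F'_1,\dots,F'_k]$ write $f'_j \defeq \abs{F'_j}$ and $N' \defeq \abs{\calF'}$, so that property~\ref{item:F_t_proper_weights} becomes $\sum_{j=t}^{k} f'_j \ge W_t N'$ for every $t$. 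We work in the regime $W_1 \le 1$ (equality for the MDRR weights, where $\sum_j w_j = 1$), so that $W_t \le 1$ for all $t$.

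The upper bound is the easy step. For any $\calF'$ satisfying both properties, property~\ref{item:F_t_subset} gives $f'_j \le s_j$, hence $W_t N' \le \sum_{j=t}^{k} f'_j \le \sum_{j=t}^{k} s_j$ for each $t$; dividing by $W_t > 0$ (all $w_j>0$) and using that $N'$ is an integer gives $N' \le N^{\star} \defeq \floor{\min_{1\le t\le k}\,\tfrac{1}{W_t}\sum_{j=t}^{k} s_j}$.

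The core step is to show $N^{\star}$ is feasible: there exist integers $0 \le f_j \le s_j$ with $\sum_j f_j = N^{\star}$ and $\sum_{j\ge t} f_j \ge W_t N^{\star}$ for all $t$. I would construct these by a backward sweep: set $G_{k+1} = 0$, put $G_t = \min\{\,G_{t+1} + s_t,\ N^{\star}\,\}$ for $t = k, k-1, \dots, 1$, and let $f_t = G_t - G_{t+1}$. A downward induction on $t$ shows $G_t = \min\{\,N^{\star},\ \sum_{j=t}^{k} s_j\,\}$; from this, $0 \le f_t \le s_t$ is immediate (using $\min\{a, b+c\} \le \min\{a,b\} + c$ for $c\ge0$), $G_1 = N^{\star}$ follows because $W_1 N^{\star} \le \sum_j s_j$ and $W_1\le1$, and $G_t \ge W_t N^{\star}$ follows from the two defining inequalities $N^{\star} \le \sum_{j\ge t} s_j / W_t$ and $W_t \le 1$. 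Since $s_j, N^{\star}$ are integers the whole construction is integral, so $N^{\star}$ is attained; combined with the upper bound, $N^{\star}$ is exactly the maximal value of $\abs{\calF}$.

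It then remains to connect this to Algorithm~\ref{algo:practical-sampling-mdrr} itself, and this is the step I expect to be the main obstacle. I would prove, by induction over the algorithm's main loop, an invariant stating that after it has processed the first $t$ indices the committed prefix $[F_1,\dots,F_t]$ holds exactly $N^{\star} - G_{t+1}$ samples and satisfies $F_j \subseteq S_j$ for $j\le t$ --- i.e.\ that the algorithm keeps samples greedily up to each prefix budget and carries any deficit forward to later rounds. On termination this yields property~\ref{item:F_t_subset}, $\abs{[F_t,\dots,F_k]} = G_t \ge W_t N^{\star}$ (property~\ref{item:F_t_proper_weights}), and $\abs{\calF} = G_1 = N^{\star}$, which is maximal by steps (a)--(b). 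The delicate part is the integrality bookkeeping inside this invariant: one must check that ``carrying the deficit forward'' never strands samples that cannot subsequently be used, which is precisely what the inequalities $W_t N^{\star} \le \sum_{j\ge t} s_j$ guarantee, and is the formal counterpart of the paper's observation that samples collected in later rounds can count toward an earlier round's quota. The real-valued skeleton of all of this is transparent from the min-over-suffixes formula for $N^{\star}$; the remaining work is to confirm that the floors and per-round supply caps interact cleanly.
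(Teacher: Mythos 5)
Your plan is sound and, as far as I can verify, every ingredient works, but it takes a genuinely different route from the paper. The paper never writes down a closed form for the optimum: it argues directly on the algorithm via a loop invariant $C_t$ asserting that after iteration $t$ the variable $M'$ is the maximal integer consistent with the suffix constraints $B_t$, and pushes this invariant from $t=k+1$ down to $t=1$. You instead (a) extract the explicit value $N^{\star}=\lfloor\min_t W_t^{-1}\sum_{j\ge t}|S_j|\rfloor$, (b) prove it is an upper bound for any feasible output, (c) exhibit a feasible assignment attaining it via the backward recursion $G_t=\min\{G_{t+1}+|S_t|,N^{\star}\}$, and (d) identify the algorithm's output with that assignment. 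Your decomposition buys a cleaner separation of optimality from algorithm correctness and makes the maximality claim fully explicit (the paper's invariant is stated rather informally and its maximality argument is terse); the paper's approach buys brevity and avoids the bookkeeping of the closed form. Two loose ends you should tighten. First, to conclude $G_1=N^{\star}$ you need $N^{\star}\le\sum_j|S_j|$, which requires $W_1=1$ exactly (the algorithm's input stipulates $\sum_t w_t=1$), not merely $W_1\le 1$: with $W_1<1$ strictly, $W_1N^{\star}\le\sum_j|S_j|$ does not imply $N^{\star}\le\sum_j|S_j|$. Second, your step (d) is only sketched and its invariant is phrased over ``the first $t$ indices,'' whereas the loop runs from $k$ down to $1$; the correct statement is that after processing index $t$ the suffix $[F_t,\dots,F_k]$ holds exactly $G_t$ samples, from which property~\ref{item:F_t_proper_weights} and $|\calF|=G_1=N^{\star}$ follow. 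One also has to check the early-return branch: when the algorithm returns at index $t$ with total $M'=\lfloor\min_{j>t}x_j\rfloor$ (where $x_j=W_j^{-1}\sum_{l\ge j}|S_l|$), the return condition $M'\le\sum_{l\ge t}|S_l|$ together with $W_j\le 1$ forces $x_j\ge M'$ for all $j\le t$, so the returned total indeed equals $N^{\star}$; this verification is implicit in your sketch but must be made explicit.
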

Item~\ref{item:F_t_subset} guarantees that $F_t$ only contains samples from $S_t$.
Item~\ref{item:F_t_proper_weights} guarantees
that for each round $t$, there is a sufficient number of samples assigned to this step either by samples
from rounds greater than $t$, which are not yet assigned to any round or directly from round $t$. 
To see this, notice that the total number of samples in this iteration is $\numsam_i=\abs{\calF}$.
Therefore, for round $t$, we need at least $w_t|\calF|$ samples. Those samples have to be from 
$F_t, F_{t+1}, \dots, F_k$ and must not be assigned to another round $t'\neq t$.
Assume that this already holds for all $t''>t$. Then we only need to ensure that the amount of samples 
which are not yet assigned to any round plus
the samples from round $t$ are greater equal $w_i|\calF|$.
This amount of not yet assigned samples plus the samples from round $t$ is equal to 
$\abs{[F_{t+1}, \dots, F_{k}]} - \sum_{t'=t+1}^k w_{t'} |\calF| + |F_t|$.
Item~\ref{item:F_t_proper_weights} follows from assuming that this is bigger than $w_t|\calF|$.

\begin{algorithm}
    \caption{Practical algorithm to compute the samples used by MDRR}
    \label{algo:practical-sampling-mdrr}
\begin{algorithmic}[1]
    \STATE {\bfseries Input:} A sequence $S_1, \dots, S_{k}$ of lists of samples and corresponding weights
    $w_1, \dots, w_{k}\in \mathbb{R}^+$ such that $\sum_{t=1}^kw_t=1$
    
    \STATE {\bfseries Output:} A sequence $\calF = [F_1, \dots, F_{k}]$ of lists of samples such that
        $F_t\subseteq S_t$,
        $\abs{[F_t, \dots, F_{k}]} \geq \sum_{t'=t}^{k} w_{t'} |\calF|$
        and $\abs{\calF}$ is maximal.\\\

    \STATE $M'\leftarrow +\infty$
    
    \STATE Let $\calF$ be a sequence of $k$ empty lists
    
    \FOR{$t=k, \dots, 1$}
        \IF{$M'-\abs{\calF} \leq  \abs{S_t}$}
        \label{line-algo-practical-if}
            \STATE Append $M' - \abs{\calF}$ samples from $S_t$ to $F_t$
            
            \STATE \textbf{Return} $\calF$
        \ENDIF
            
        \STATE $F_t \leftarrow S_t$
            
        \STATE $W\leftarrow \sum_{t'=t}^k w_{t'}$

        \STATE $M'\leftarrow \left\lfloor\min\left(\frac{\abs{\calF}}{W}, M'\right)\right\rfloor$
         \label{line.setMprime}
    \ENDFOR
\end{algorithmic}
\end{algorithm}

We now prove Theorem~\ref{thm.practical-sampling-mdrr} via a loop-invariant argument.
\begin{proof}[Proof of Theorem~\ref{thm.practical-sampling-mdrr}]
Item~\ref{item:F_t_subset} trivially holds, since only samples from $S_t$ are added to $F_t$.

We now show that item~\ref{item:F_t_proper_weights} also holds and that $\abs{\calF}$ is maximal.
We define the following proposition $B_t$ for every $t\in\{1, \dots, k\}$.
$B_t$ holds iff for every $j\geq t$, it holds that 
\begin{align}
F_{j}\subseteq S_{j} \text{ and } \abs{[F_{j}, \dots, F_{k}]}\geq \sum_{t'=j}^{k}w_{t'}\abs{\calF}
\label{eq:invariant_tprime}
\end{align}

We define the following loop invariant $C_t$.
$C_t$ holds iff after iteration $t$ of the loop, $M'$ is the maximum integer such that $B_{t}$ holds and  
using $M'-\abs{\calF}$ samples for $F_1, \dots, F_{t}$ does not lead to a violation of $B_{t}$.

If $C_t$ holds for every $t\in\{1,\dots, k\}$, the theorem is shown.

We prove that $C_t$ holds via induction.
$C_{k+1}$ holds before the loop starts, since
we can think of $t$ to be equal to $k+1$ at this time, $M'$ is infinity and $\calF$ empty.

The induction step goes from $t+1$ to $t$.
Assume $C_{t+1}$ holds.
The if-statement in line~\ref{line-algo-practical-if} then ensures that if there are more samples in $S_t$ than are still possible, $F_t$ is set equal to this number of samples and the algorithm returns. We know this is correct, since $M'$ is maximal.
Otherwise $F_t$ is set to $S_t$, because this capacity is still there for samples from $S_t$.

Then in line~\ref{line.setMprime}, the $\min\left(\frac{|\calF|}{W}, M'\right)$ defines the number of samples which can maximally be taken in total. 
The first argument of the minimum ensures that 
\eqref{eq:invariant_tprime} holds for $j=t$.
The second argument of the minimum, $M' +\abs{\calF}$ ensures that $B_{t+1}$ holds via the induction hypothesis.
Then $B_t$ holds and the induction step is shown.
\end{proof}

\subsection{Solving the Min-Max Optimization Problem}\label{appdx.ftrl}
In this subsection we describe how the learner solves the min-max problem~\eqref{eq:repeated-optim-finite} and the min-max problem in line~\ref{eq:minmaxMdrrAlgo} of Algorithm~\ref{algo:mdrr} in the experiments.

To solve the min-max problem of the empirical Lagrangians in equation~\eqref{eq:repeated-optim-finite}
we use Algorithm~1 from \cite{MTR23}. 

To solve the min-max problem for MDRR (line~\ref{eq:minmaxMdrrAlgo} of Algorithm~\ref{algo:mdrr}), we use Algorithm~\ref{algo.ftrl}.
It works the same as Algorithm~1 of \cite{MTR23}, the only difference
is in the conditions on $d$ in line~\ref{line.compute.d}, where 
we condition $d(s,a)/\bar{d}_t(s,a)\leq B$ for all steps since the 
last update of the policy.
We use parameters, $N=10$ and $\beta=\frac{\lambda}{2} = 0.05$.
\begin{algorithm}
    \caption{FTRL algorithm to calculate an approximization for the finite sample optimization problem (\eqref{eq:repeated-optim-finite} and \eqref{eq:mixed-response-lagrangian-optimization})}
    \label{algo.ftrl}
    \begin{algorithmic}[1]
    \STATE {\bfseries Input: }regularizing factor $\beta$, occupancy measures since the last update of the policy $\bar{d}_t$ for $t\in \{1, \dots, k\}$

    \STATE $d_0 \leftarrow \boldsymbol{0}$

    \FOR{$j=0, 1, \dots, N-1$}
        \IF{$j = 0$}
            \STATE $h_{j} \leftarrow \argmin_h \hat\calL^M(d_{0}, h) + \beta\norm{h}_2^2$ s.t. $\|h\|_2 \leq \frac{3\sizeS}{(1-\gamma)^2}$
        \ELSE
        \STATE $h_{j} \leftarrow \argmin_h \sum_{j'=1}^{j} \hat\calL^M(d_{j'}, h) + \beta\norm{h}_2^2$ s.t. $\|h\|_2 \leq \frac{3\sizeS}{(1-\gamma)^2}$
        \ENDIF
        
        \STATE $d_{j+1} \leftarrow \argmax_d \hat\calL^M(d, h_{j})$ s.t. $\max_{s,a} d(s,a)/\bar{d}_t(s,a) \leq B$ for all $t\in \{1,\dots, k\}$ \label{line.compute.d}
    \ENDFOR

    \STATE \textbf{Return} $\sum_{j=1}^N d_j / N$
    \end{algorithmic}
\end{algorithm}

\begin{table}[h]
\caption{Compute Times of the Experiments in Figures~\ref{fig:main_plots} and~\ref{fig:values}}
    \label{tab.times}
    \begin{center}
            \begin{tabular}{lccccc}
                \toprule[1.0pt]
                {\bf Algorithm} & $\boldsymbol{k}$ & $\boldsymbol{w}$ & $\boldsymbol{v}$ & {\bf time (rounded)}\\
                \hline
                RR & N\textbackslash{}A & $0.5$  &N\textbackslash{}A &  $\sim 94$ hrs  
                \\
                \hline
                DRR & $3$ & $0.5$ &N\textbackslash{}A &  $\sim 78$ hrs 
                \\
                \hline
                MDRR & $3$ & $0.5$ & $1.1$ &  $\sim 90$ hrs
                \\
                \hline
                RR & N\textbackslash{}A & $0.15$ &N\textbackslash{}A &  $\sim 105$ hrs
                \\
                \hline
                DRR & $3$ & $0.15$ &N\textbackslash{}A &  $\sim 88$ hrs 
                \\
                \hline
                MDRR & $3$ & $0.15$ & $1.1$ &  $\sim 100$ hrs 
                \\
                \hline
                MDRR & $5$ & $0.5$ & $1.1$  &  $\sim 85$ hrs 
                \\
                \hline
                MDRR & $5$ & $0.5$ & $1.5$ & $\sim 77$ hrs 
                \\
                \hline
                MDRR & $5$ & $0.5$ &$1.8$ & $\sim 78$ hrs
                \\
                \hline
                MDRR & $10$ & $0.5$ & $1.1$ & $\sim 83$ hrs 
                \\
                \bottomrule[1.0pt]
            \end{tabular}
    \end{center}
\end{table}
For the experiments with $w=0.85$ and $w=0.95$, we used machines with two AMD EPYC 7702 64-Core Processors and 2TB of RAM.








\subsection{Total Amount of Compute and Type of Resources}
\label{appdx.exp.compute}
The experiments of the main part (Figure~\ref{fig:main_plots}) were run on a compute cluster with each machine having 4 
Intel Xeon E7-8857 v2 CPUs (4 times 12 cores) and 1.5 TB of RAM.

In Table~\ref{tab.times}, we detail how long each experiment took to complete on these machines.

For the experiments with $w=0.85$ and $w=0.95$, we used machines with two AMD EPYC 7702 64-Core Processors and 2TB of RAM.
\subsection{Sanity-Check the Fairness of the Comparison}\label{subsec.sanity-check}
By only presenting Figures~\ref{fig:samples1000-to-last-iteration} and~\ref{fig:samples1000-to-last-iteration-w15} in the main paper, we can not rule out that some of the algorithms converge to very suboptimal solutions. In this case the comparison would be unfair.  

Therefore, in order to sanity-check the fairness of the comparison, 
we also investigate the expected value, $V^{d_t}_t$.
This is not directly associated to finding a stable occupancy measure, but should rather be seen as a check to see if the algorithms we propose reach similar solutions.
We compute $V^{d_t}_t$ using the rewards derived from the training sample trajectories. 
In other words, when $\on{Tr}_t$ is the set of trajectories sampled in round $t$, 
and for each trajectory $\tau$,
the reward in step $k$ is $r_t(\tau_k)$, then
$V^{d_t}_t = \sum_{\tau\in \on{Tr}_t}\sum_{k=0}^{l(\tau)} \gamma^k \cdot r_t(\tau_k)$.
Here $l(\tau)$ is the length of trajectory $\tau$.

We see the expected values of the algorithms in Figure~\ref{fig:values}.
As we see, after they settled down, the three algorithms have rather close expected values.
We believe that the differences stem from the initialization of the environment of the second agent rather than from some inherent differences in the algorithms.
\begin{figure*}[]
    \centering
    \setcounter{subfigure}{0}
    \begin{subfigure}[c]{\innerwidth\textwidth}
        \vspace{\innerspaceabove}
        \includegraphics[width=\textwidth]{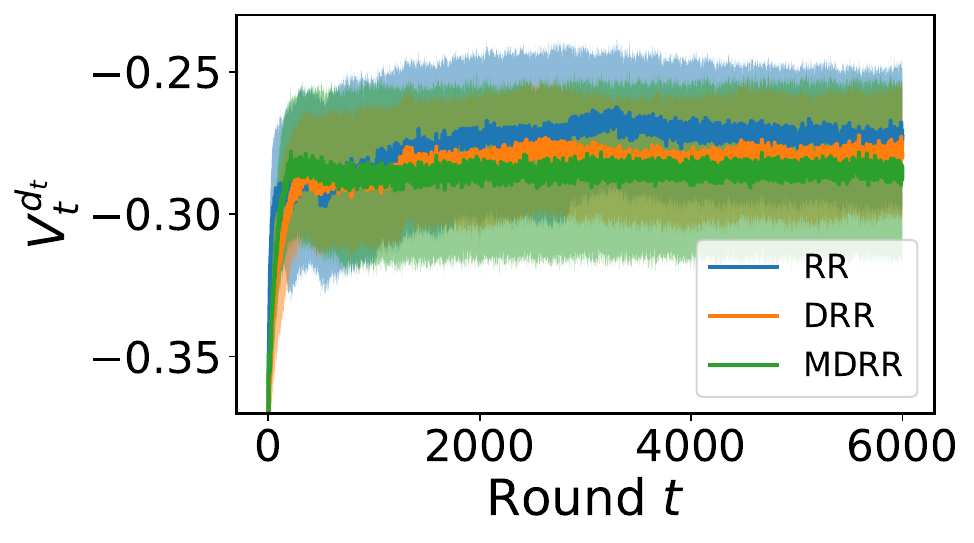}
        \caption{Here $w=0.5$}
        \label{fig:reward-step-iteration}
    \end{subfigure}
    \begin{subfigure}[c]{\innerwidth\textwidth}
        \vspace{\innerspaceabove}
        \includegraphics[width=\textwidth]{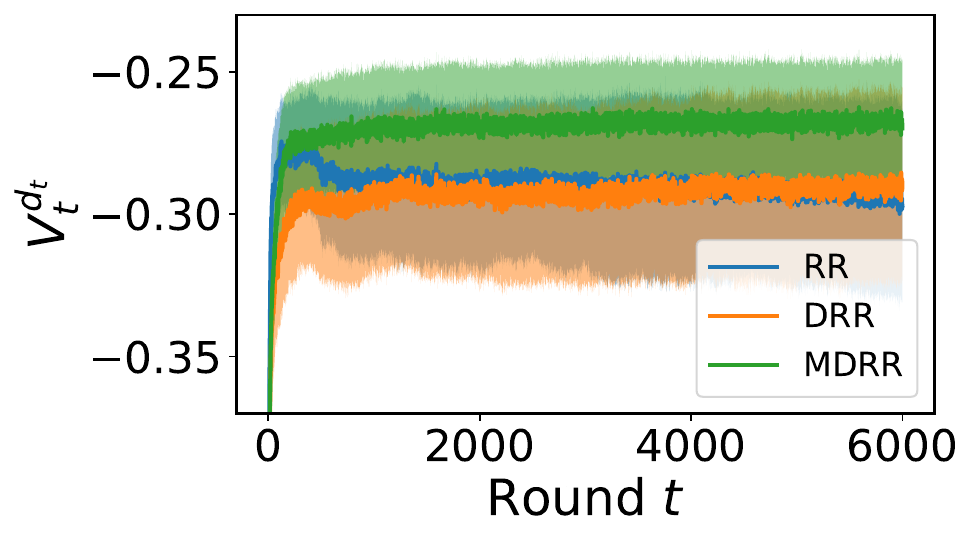}
        \caption{Here $w=0.15$}
        \label{fig:reward-step-w}
    \end{subfigure}
    \caption{A sanity check if the algorithms reach valid solutions. Since the values of the three algorithms are close to one another, we assert that none of them reaches a much less optimal solution than another one, thereby validating all three approaches.}
    \label{fig:values}
\end{figure*}

\subsection{Additional results for large values of $w$}\label{appdx.large-w}
We additionally ran experiments for larger values of $w$, in particular $w=0.85$ and $w=0.95$. 
The results are depicted in Figure~\ref{fig:larger_w}. Suprisingly, we see that even with this large values of $w$, MDRR outperforms RR and DRR. 
This is somewhat counterintuitive, since at such large values of $w$ the environment is almost non-stateful, and we would expect RR to have an advantage here. We believe that this phenomenon is due to the fact that MDRR uses more samples than RR and DRR and therefore has a lower variance, even at the cost of a large bias. This seems to lead to a much better convergence in the settings we studied.
\begin{figure*}[ht]
    \centering
    \begin{subfigure}[c]{\outerwidth\textwidth}
        \includegraphics[width=\textwidth]{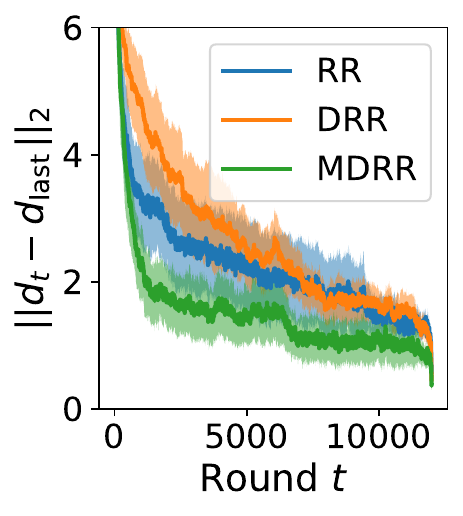}
        \caption{Here $w=0.85$}
    \end{subfigure}
    \begin{subfigure}[c]{\outerwidth\textwidth}
        \includegraphics[width=\textwidth]{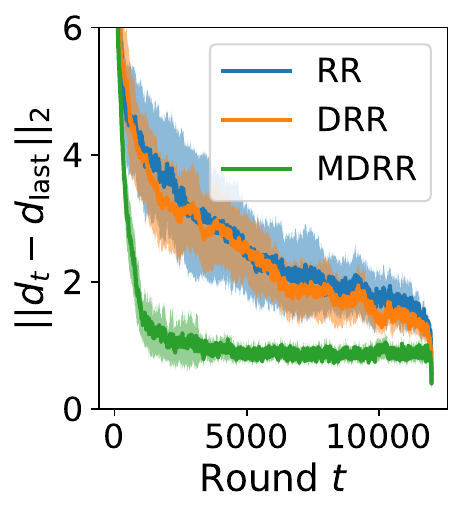}
        \caption{Here $w=0.95$}
    \end{subfigure}
    \caption{
    Convergence plots for less stationary environments, i.e. larger values of $w$.
    Data generated as in Figure~\ref{fig:main_plots}.
    Also here MDRR outperforms the other algorithms.
    }
    \label{fig:larger_w}
\end{figure*}

\section{Additional Theoretical Results}\label{appendix-additional-theory}
\subsection{Example for Assumption~\ref{assumption_sensitivity}}
\label{appdx.example_sensitivity}
We now give an example to illustrate Assumption~\ref{assumption_sensitivity}.
For simplicity we assume that only the probability transition function $P$ changes and not the reward $r$. 
We consider a response model $\mathcal{P}$ which is defined in the following way: 
\begin{align*}
\mathcal{P}(d,P,r)=wP+(1-w)P^*(\pi_d) 
\end{align*}
for some decay rate $w\in(0,1)$ and some response function $P^*(\pi_d)$.
We can think of $\mathcal{P}$ being determined by a population and in each time-step a $(1-w)$ fraction of the population responds to the newly deployed policy $\pi_d$.
Similar settings have been studied in performative prediction~\citep{RRL+22}.

We also assume that the change in $P^*$ is bounded, i.e. $|P^*(\pi_d)(s'|s,a) - P^*(\pi_{d'})(s'|s,a)|\leq c||d-d'||_2$ for all $s,s'\in S$, $a\in A$ and some constant $c>0$. We can then derive the following Proposition.
\begin{proposition}
With the conditions set in this subsection, it holds that
\begin{align*}
||\mathcal{P}(d,P,r)-\mathcal{P}(d',P',r')||_2\leq w||P-P'||_2 + (1-w)c|S|\sqrt{|A|}\cdot||d-d'||_2.
\end{align*}
\end{proposition}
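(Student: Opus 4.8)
The plan is to prove this by a direct computation exploiting the affine structure of $\mathcal{P}$ in the pair $(P, P^*)$; the statement is essentially an application of the triangle inequality, and the only genuine content is (i) separating the ``inertia'' term $wP$ from the ``response'' term $(1-w)P^*(\pi_d)$, and (ii) converting the coordinatewise Lipschitz bound on $P^*$ into an $\ell_2$ bound by counting coordinates correctly.

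First I would write, for any occupancy measures $d,d'$ and any $P,P',r,r'$,
\[
\mathcal{P}(d,P,r)-\mathcal{P}(d',P',r') = w\,(P-P') + (1-w)\bigl(P^*(\pi_d)-P^*(\pi_{d'})\bigr),
\]
since in this example the dependence on $r$ cancels. Applying the triangle inequality and using $w,(1-w)\ge 0$ gives
\[
\norm{\mathcal{P}(d,P,r)-\mathcal{P}(d',P',r')}_2 \le w\,\norm{P-P'}_2 + (1-w)\,\norm{P^*(\pi_d)-P^*(\pi_{d'})}_2 .
\]

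Next I would bound the last term. The difference $P^*(\pi_d)-P^*(\pi_{d'})$ is a vector indexed by triples $(s,a,s')\in S\times A\times S$, hence has $|S|^2|A|$ coordinates, and by the assumed bound each coordinate has absolute value at most $c\norm{d-d'}_2$. Therefore
\[
\norm{P^*(\pi_d)-P^*(\pi_{d'})}_2 \le \sqrt{|S|^2|A|}\cdot c\,\norm{d-d'}_2 = |S|\sqrt{|A|}\,c\,\norm{d-d'}_2 ,
\]
and substituting this into the previous display yields exactly the claimed inequality.

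There is no real obstacle here; the one place to take care is the coordinate count, so that aggregating the $|S|^2|A|$ coordinatewise bounds under $\|\cdot\|_2$ produces the factor $|S|\sqrt{|A|}$ (and not, e.g., $|S|\sqrt{|S||A|}$). It is also worth remarking that this instantiates Assumption~\ref{assumption_sensitivity} with $\epsilon_{p,p}=w$, $\epsilon_{p,r}=0$, $\iota_p=(1-w)c|S|\sqrt{|A|}$, and all constants for the reward response equal to $0$, so the sensitivity assumption holds whenever $w<1$ and the population response rate and Lipschitz constant are small enough that $(1-w)c|S|\sqrt{|A|}<1$.
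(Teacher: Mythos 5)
Your proof is correct and rests on the same decomposition as the paper's (separate the inertia term $w(P-P')$ from the response term, then convert the coordinatewise Lipschitz bound on $P^*$ into an $\ell_2$ bound over the $|S|^2|A|$ coordinates). The execution is cleaner, though: you apply the triangle inequality to $\norm{\cdot}_2$ directly, whereas the paper squares the norm, expands, bounds the three resulting sums separately, and handles the cross term via $\norm{P-P'}_1 \leq |S|\sqrt{|A|}\,\norm{P-P'}_2$ before taking a square root at the end. Both routes land on exactly the same constant $|S|\sqrt{|A|}$; yours simply avoids the cross-term bookkeeping, and your closing remark about instantiating Assumption~\ref{assumption_sensitivity} with $\epsilon_{p,p}=w$ and $\iota_p=(1-w)c|S|\sqrt{|A|}$ matches the discussion the paper gives immediately after the proposition.
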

\begin{proof}
\begin{align*}
&||\mathcal{P}(d,P,r)-\mathcal{P}(d',P',r')||_2^2
\\&
=\sum_{s,a,s'}(w\cdot(P(s'|s,a)-P'(s'|s,a))
+(1-w)\cdot P^*(\pi_d)(s'|a,s)-
P^*(\pi_{d'})(s'|a,s))^2
\\&
\leq
\sum_{s,a,s'}
(w\cdot(P(s'|s,a)-P'(s'|s,a))
+(1-w)\cdot c\cdot||d-d'||_2)^2
\\&
\leq
\sum_{s,a,s'}
(w\cdot(P(s'|s,a)-P'(s'|s,a)))^2
+\sum_{s,a,s'}((1-w)\cdot c\cdot||d-d'||_2)^2
\\&
+\sum_{s,a,s'}2((1-w)\cdot c\cdot||d-d'||_2)
(w\cdot(P(s'|s,a)-P'(s'|s,a)))
\\&
\leq w^2||P-P'||_2^2+|S|^2|A|\cdot((1-w)c)^2\cdot||d-d'||_2^2
+2\cdot(1-w)\cdot c\cdot||d-d'||_2\cdot w\cdot||P-P'||_1
\\&
\leq w^2||P-P'||_2^2+|S|^2|A|\cdot((1-w)c)^2\cdot||d-d'||_2^2
+2|S|\sqrt{|A|}\cdot(1-w)\cdot c\cdot||d-d'||_2\cdot w\cdot||P-P'||_2
\end{align*}
Taking the square root on both sides gives the desired result.
\end{proof}

Given this proposition, we choose $\epsilon_{p,p}=w$ and $\iota_p=(1-w)c|S|\sqrt{|A|}$ (all other $\iota$ and $\epsilon$ parameters are $0$). Now if $\iota_p=(1-w)c|S|\sqrt{|A|}<1$, Assumption \ref{assumption_sensitivity} holds. That $(1-w)c|S|\sqrt{|A|}$ is smaller than $1$ is likely in many cases where $w$ is large and / or $c$ is small. The value of $w$ being large means that in each time-step, only a small fraction of the population responds to the new policy. This could likely be the case if each time-step encompasses a small amount of time. Additionally the total difference $||P^*(\pi_d)-P^*(\pi_{d'})||_1$ can be in the order of $c\cdot|S|^2|A|\cdot||d-d'||_2$, so the value of $c$ might likely be small.

\subsection{Existence of Stable Points}
Using arguments similar to~\cite{MTR23}, we show that there exists a stable point.
\begin{proposition}
\label{prop.fixedpt}
    If Assumption~\ref{assumption_sensitivity} holds, optimization problem~\eqref{eq:perf-stable-policy} has a fixed point.
\end{proposition}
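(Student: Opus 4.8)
The plan is to produce a performatively stable occupancy measure via Kakutani's fixed point theorem, applied to the ``best-respond, then re-equilibrate'' correspondence on occupancy-measure space. First I would fix a convenient domain. Summing the flow constraint of \eqref{eq:perf-stable-policy} over $s$ shows that every valid occupancy measure $d$ satisfies $\norm{d}_1=\tfrac{1}{1-\gamma}$ and $\sum_a d(s,a)\ge\rho(s)$ for all $s$; conversely, any $d\ge 0$ obeying these two conditions is the occupancy measure of some policy in some MDP with the given $(\rho,\gamma)$ — it suffices to take every transition row proportional to the state-surplus $s\mapsto\sum_a d(s,a)-\rho(s)$, whose total mass is exactly $\gamma\norm{d}_1$. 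Hence $\mathcal{D}:=\{d\ge 0:\ \norm{d}_1=\tfrac{1}{1-\gamma},\ \sum_a d(s,a)\ge\rho(s)\ \forall s\}$ is a nonempty compact convex polytope that contains $\mathcal{C}(M)$ for every admissible $M$.

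Next I would define the correspondence $\Phi:\mathcal{D}\to 2^{\mathcal{D}}$ as follows. Given $d\in\mathcal{D}$, form the policy $\pi_d$ via \eqref{eq:dtopi}; let $M_{\pi_d}=(\setS,\setA,P_{\pi_d},r_{\pi_d},\rho)$ be the limiting MDP obtained by repeatedly deploying $\pi_d$, which exists and is unique because, under Assumption~\ref{assumption_sensitivity} (with $\epsilon_p,\epsilon_r<1$), the update $(P,r)\mapsto(\Pc(\pi_d,P,r),\Rc(\pi_d,P,r))$ is a contraction (Appendix~\ref{appdx.sec.contraction}); finally set $\Phi(d)$ to be the set of optimizers of the linear program \eqref{eq:perf-stable-policy} for $M_{\pi_d}$, i.e.\ the maximizers of $d'\mapsto\sum_{s,a}d'(s,a)\,r_{\pi_d}(s,a)$ over $\mathcal{C}(M_{\pi_d})$. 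Any $d^*\in\Phi(d^*)$ is, by construction, an occupancy measure that is optimal against its own induced MDP $M_{\pi_{d^*}}$, hence a solution of \eqref{eq:perf-stable-policy}; so it remains to check Kakutani's hypotheses.

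I would then verify the hypotheses in order. (i) \emph{Self-map, nonempty-valued}: $\mathcal{C}(M_{\pi_d})$ is nonempty and compact, so the linear objective attains its max, and every maximizer is a valid occupancy measure, hence lies in $\mathcal{D}$; thus $\emptyset\neq\Phi(d)\subseteq\mathcal{D}$. (ii) \emph{Convex-valued}: $\Phi(d)$ is the maximizer set of a linear functional over a polytope, i.e.\ a face, which is convex. (iii) \emph{Closed graph}: by Berge's maximum theorem this reduces to continuity of $d\mapsto M_{\pi_d}$ together with continuity (as a compact-valued correspondence) of $P\mapsto\mathcal{C}(M_P)$. The map $d\mapsto\pi_d$ is continuous on $\mathcal{D}$ because there $\sum_a d(s,a)\ge\rho(s)$ (assuming $\rho$ has full support; otherwise perturb $\rho$ and pass to the limit); $\pi\mapsto(P_\pi,r_\pi)$ is continuous since it is the fixed point of a uniformly contractive family depending continuously on $\pi$; and $P\mapsto\mathcal{C}(M_P)$ is continuous because upper hemicontinuity follows by taking limits in the flow equations, while for lower hemicontinuity, given $P_n\to P$ and $d\in\mathcal{C}(M_P)$, the occupancy measure of the fixed policy $\pi^d$ under $P_n$ lies in $\mathcal{C}(M_{P_n})$ and converges to $d$. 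With (i)--(iii), Kakutani's theorem gives $d^*\in\Phi(d^*)$, the desired fixed point.

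The main obstacle is the continuity package in step (iii): the map $d\mapsto\pi_d$ of \eqref{eq:dtopi} is genuinely discontinuous precisely where the state-mass $\sum_a d(s,a)$ vanishes, so one needs either a full-support hypothesis on $\rho$ (confining the relevant iterates to the region where \eqref{eq:dtopi} is smooth) or an extra limiting argument; and establishing that $\mathcal{C}(M_P)$ varies continuously with $P$, especially its lower hemicontinuity, relies on the occupancy-measure/policy correspondence rather than on a generic stability statement for polytopes defined by varying equalities. The remaining ingredients — nonemptiness and boundedness of the flow polytope, convexity of faces, and existence/continuity of the limiting MDP via the contraction in Assumption~\ref{assumption_sensitivity} — are routine. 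As an alternative route one could instead apply Brouwer to the single-valued map induced by the \emph{regularized} program \eqref{eq:perf-stable-policy-regularized} (unique optimizer, hence continuous) and then recover a solution of \eqref{eq:perf-stable-policy} by letting $\lambda\to 0$ and extracting a convergent subsequence of regularized stable points.
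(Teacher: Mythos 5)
Your proposal is correct and follows essentially the same route as the paper: the paper's proof simply establishes that $d\mapsto(P_d,r_d)$ is Lipschitz, namely $\norm{P_d-P_{d'}}_2+\norm{r_d-r_{d'}}_2\le\frac{\iota}{1-\epsilon}\norm{d-d'}_2$ via the geometric-series/contraction argument, and then invokes the Kakutani-based argument of Proposition~1 in \cite{MTR23} for the best-response correspondence over occupancy measures, which is exactly the machinery you reconstruct. One remark: the ``main obstacle'' you flag (discontinuity of $d\mapsto\pi_d$ where state mass vanishes) does not arise here, because Assumption~\ref{assumption_sensitivity} is stated with the Lipschitz bound directly in $\norm{d-d'}_2$ (via the overloaded notation $\Pc(d,P,r)=\Pc(\pi_d,P,r)$), so the continuity of the limiting MDP in $d$ never passes through the map~\eqref{eq:dtopi} and no full-support hypothesis on $\rho$ is needed.
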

\begin{proof}
This proposition is very similar to Proposition~1 from~\cite{MTR23}. The proof follows theirs, and we don't repeat the arguments made in their proof. However in order to make use of their arguments, we need to show that  $P_d$ and $r_d$ are continuous in $d$, which is not immediately clear. Recall that $P_d$ and $r_d$ map from occupancy measure $d$ to the environment the process converges to, if the learner always deploys $\pi_d$. We now prove that $P_d$ and $r_d$ are continuous in $d$.

We define $\epsilon \defeq \max(\epsilon_p, \epsilon_r)$.
Then we see that
\begin{align}
\begin{split}
    &\|P_d - P_{d'}\|_2 + \|r_d + r_{d'}\|_2 \leq \iota\norm{d-d'}_2 + \epsilon \left(\norm{P_d - P_{d'}}_2 +\norm{r_d - r_{d'}}_2\right)
   \\ 
    &\leq \dots \leq \sum_{i=0}^{\infty}\iota \epsilon^i\norm{d-d'}_2
    =\frac{\iota}{1-\epsilon}\norm{d-d'}_2
    \label{eq.bound_Pr_by_d}
\end{split}
\end{align}
The inequalities follow from Assumption~\ref{assumption_sensitivity}.
Thus $P_d$ and $r_d$ are continuous in $d$ and the rest of the proof follows
from the same arguments as the proof of Proposition~1 from \cite{MTR23}.
\end{proof}

\subsection{Approximating the Unregularized Objective}\label{appdx.apprx-unregularized}
Using arguments similar to~\cite{MTR23}, we can show the following approximation guarantee for the regularized objective.
\begin{theorem}
For each setting RR, DRR and MDRR, when they approximate a stable policy $d_S$ with 
respect to the regularized objective~\eqref{eq:perf-stable-policy-regularized}, the following guarantee holds:
\begin{align*}
\sum_{s,a} r_{d_S}(s,a)\cdot  d_S(s,a) \ge  \max_{d \in \mathcal{C}(d_S)} \sum_{s,a} r_{d_S}(s,a)\cdot d(s,a) - \calO\left(\frac{\lambda}{(1-\gamma)^2}\right)
\end{align*}
Here $\mathcal{C}(d_S)$ denotes the set of occupancy measures which are feasible with respect to~$P_{d_S}$.
\end{theorem}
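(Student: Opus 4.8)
The plan is to relate the regularized stable point $d_S$ defined by \eqref{eq:perf-stable-policy-regularized} to the unregularized optimum over the feasible polytope $\mathcal{C}(d_S)$ induced by the \emph{fixed} dynamics $P_{d_S}$. The key observation is that $d_S$ is, by definition, the maximizer of the regularized objective $g(d) := \sum_{s,a} r_{d_S}(s,a)\, d(s,a) - \frac{\lambda}{2}\norm{d}_2^2$ over exactly this polytope $\mathcal{C}(d_S)$ (since the reward and transition functions appearing in \eqref{eq:perf-stable-policy-regularized} are frozen at $r_{d_S}$ and $P_{d_S}$). So the whole statement is really an optimization fact: the maximizer of a $\lambda$-strongly-concave perturbation of a linear objective over a polytope is close in objective value to the maximizer of the linear objective itself, with a gap controlled by $\frac{\lambda}{2}$ times the squared diameter of the polytope.

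Concretely, first I would let $d^\star \in \argmax_{d \in \mathcal{C}(d_S)} \sum_{s,a} r_{d_S}(s,a)\, d(s,a)$ denote the unregularized optimum. Since $d_S$ maximizes $g$ over $\mathcal{C}(d_S)$ and $d^\star \in \mathcal{C}(d_S)$, we have $g(d_S) \ge g(d^\star)$, i.e.
\begin{align*}
\sum_{s,a} r_{d_S}(s,a)\, d_S(s,a) &\ge \sum_{s,a} r_{d_S}(s,a)\, d^\star(s,a) - \frac{\lambda}{2}\norm{d^\star}_2^2 + \frac{\lambda}{2}\norm{d_S}_2^2 \\
&\ge \sum_{s,a} r_{d_S}(s,a)\, d^\star(s,a) - \frac{\lambda}{2}\norm{d^\star}_2^2.
\end{align*}
So it remains to bound $\norm{d^\star}_2^2$ (or more crudely $\max_{d \in \mathcal{C}(d_S)}\norm{d}_2^2$) by $\mathcal{O}\!\left(\frac{1}{(1-\gamma)^2}\right)$. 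This is the standard fact that any valid occupancy measure $d$ satisfies $\sum_{s,a} d(s,a) = \frac{1}{1-\gamma}$ and $d \ge 0$, hence $\norm{d}_2^2 \le \norm{d}_1^2 = \frac{1}{(1-\gamma)^2}$. Plugging this in gives the claimed $\mathcal{O}\!\left(\frac{\lambda}{(1-\gamma)^2}\right)$ slack.

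The only genuinely subtle point — and the step I expect to require the most care — is the identification of $d_S$ as the maximizer of $g$ over $\mathcal{C}(d_S)$: one must verify that the constraint set in \eqref{eq:perf-stable-policy-regularized}, namely $\{d \ge 0 : \sum_a d(s,a) = \rho(s) + \gamma \sum_{s',a} d(s',a) P_{d_S}(s',a,s)\ \forall s\}$, is precisely $\mathcal{C}(d_S)$, the set of occupancy measures feasible with respect to $P_{d_S}$, and that the objective there genuinely uses the frozen reward $r_{d_S}$ rather than a moving target. This is immediate from how $d_S$ is defined as a fixed point, but it is worth stating explicitly since it is the conceptual heart of why the argument reduces to a pure optimization inequality. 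Everything after that is the elementary strong-concavity comparison plus the $\ell_1$ bound on occupancy measures, and the constant hidden in $\mathcal{O}(\cdot)$ is simply $1/2$ (or whatever small factor Assumption~\ref{assumption-simplicity} absorbs); following \cite{MTR23}, I would not belabor it.
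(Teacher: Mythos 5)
Your proposal is correct and follows essentially the same route as the paper's proof: exploit that $d_S$ maximizes the $\lambda$-strongly-concave regularized objective over $\mathcal{C}(d_S)$ (with $r_{d_S}$ and $P_{d_S}$ frozen), evaluate at the unregularized maximizer, drop the nonnegative $\frac{\lambda}{2}\norm{d_S}_2^2$ term, and bound $\norm{d}_2^2 \le (1-\gamma)^{-2}$ for any occupancy measure. The only cosmetic difference is that you obtain the last bound via $\norm{d}_2 \le \norm{d}_1 = \tfrac{1}{1-\gamma}$, whereas the paper bounds $\sum_{s,a}((1-\gamma)d(s,a))^2 \le \sum_{s,a}(1-\gamma)d(s,a) = 1$; both yield the same constant.
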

\begin{proof}
Since $d_S$ is a stable point with respect to objective~\eqref{eq:perf-stable-policy-regularized}, it holds that
\begin{equation*}
\sum_{s,a} r_{d_S} (s,a)\cdot  d_S(s,a) - \frac{\lambda}{2} \norm{d_S}_2^2 \ge \max_{d \in \mathcal{C}(d_S) } \sum_{s,a} r_{d_S} (s,a)\cdot  d(s,a) - \frac{\lambda}{2} \norm{d}_2^2 
\end{equation*}
Therefore,
\begin{align*}
    \sum_{s,a} r_{d_S} (s,a)\cdot  d_S(s,a) &\ge \max_{d \in \mathcal{C}(d_S) } \sum_{s,a} r_{d_S} (s,a)\cdot  d(s,a) - \frac{\lambda}{2} \norm{d}_2^2 \\
    &\ge \max_{d \in \mathcal{C}(d_S) } \sum_{s,a} r_{d_S} (s,a)\cdot  d(s,a) - \frac{\lambda}{2(1-\gamma)^2} 
\end{align*}
The last inequality uses $\norm{d}_2^2 = \sum_{s,a} d(s,a)^2 = (1-\gamma)^{-2} \sum_{s,a} \left((1-\gamma) d(s,a)\right)^2 \le (1-\gamma)^{-2} \sum_{s,a} (1-\gamma) d(s,a) = (1-\gamma)^{-2}$.
\end{proof}

\subsection{Contraction}\label{appdx.sec.contraction}
In contrast to the main paper, in the appendix $\epsilon$ refers to $\epsilon \defeq \max (\epsilon_p, \epsilon_r)$, which signifies the dependency of the environment on the previous environment.

We define the following distances.
\begin{definition}
    For any occupancy measures $d, d'$, probability transition functions $P, P'$
    and reward functions $r,r'$, we define the distance between $(d,P,r)$ and $(d',P',r')$ to be equal to
    \begin{equation*}
    \dist((d,P,r),(d',P',r')) \defeq \norm{d-d'}_2 + \norm{P-P'}_2 + \norm{r-r'}_2\ .
    \end{equation*}
    We overload notation to also define
    \begin{equation*}
    \dist((P,r),(P',r')) \defeq \norm{P-P'}_2 + \norm{r-r'}_2\ .
    \end{equation*}
\end{definition}

As described in section~\ref{sec.formal-setting}, show that the mapping from $(P,r)$ to the successor environment $(\Pc(d,P,r), \Rc(d,P,r))$ is a contraction.
\begin{proposition}\label{prop.contraction}
Let $d$ be some occupancy measure.
When Assumption~\ref{assumption_sensitivity} holds, in particular
$\epsilon_p, \epsilon_r <1$, the mapping $g_d(P, r)\defeq (\Pc(d,P,r), \Rc(d,P,r))$ is a contraction with Lipschitz coefficient $\epsilon$. 
\end{proposition}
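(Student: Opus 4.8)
The plan is to show directly that the map $g_d(P,r) = (\Pc(d,P,r), \Rc(d,P,r))$ is $\epsilon$-Lipschitz with respect to the distance $\dist((P,r),(P',r')) = \norm{P-P'}_2 + \norm{r-r'}_2$, where $\epsilon = \max(\epsilon_p, \epsilon_r) < 1$. First I would fix an occupancy measure $d$ and two environments $(P,r)$ and $(P',r')$, and apply Assumption~\ref{assumption_sensitivity} with $d' = d$ (so the $\iota_p\norm{d-d'}_2$ and $\iota_r\norm{d-d'}_2$ terms vanish). This yields
\begin{align*}
\norm{\Pc(d,P,r) - \Pc(d,P',r')}_2 &\leq \epsilon_{p,p}\norm{P-P'}_2 + \epsilon_{p,r}\norm{r-r'}_2,\\
\norm{\Rc(d,P,r) - \Rc(d,P',r')}_2 &\leq \epsilon_{r,p}\norm{P-P'}_2 + \epsilon_{r,r}\norm{r-r'}_2.
\end{align*}

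Next I would add these two inequalities and regroup the right-hand side by $\norm{P-P'}_2$ and $\norm{r-r'}_2$, getting a bound of $(\epsilon_{p,p}+\epsilon_{r,p})\norm{P-P'}_2 + (\epsilon_{p,r}+\epsilon_{r,r})\norm{r-r'}_2 = \epsilon_p\norm{P-P'}_2 + \epsilon_r\norm{r-r'}_2$ by the definitions of $\epsilon_p$ and $\epsilon_r$ in Assumption~\ref{assumption_sensitivity}. Bounding both coefficients by $\epsilon = \max(\epsilon_p,\epsilon_r)$ gives
\[
\dist(g_d(P,r), g_d(P',r')) \leq \epsilon\big(\norm{P-P'}_2 + \norm{r-r'}_2\big) = \epsilon \cdot \dist((P,r),(P',r')),
\]
which is exactly the claimed contraction, since $\epsilon < 1$ under the hypothesis $\epsilon_p,\epsilon_r < 1$.

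There is no real obstacle here — the argument is a direct specialization of the sensitivity assumption followed by the triangle-inequality-style regrouping. The only point requiring a little care is keeping the bookkeeping of the six $\epsilon$-constants straight and making sure the chosen distance function on pairs $(P,r)$ is the additive one used in Definition~\ref{def.distPr} and in the surrounding text, so that summing the two component bounds is the legitimate way to bound the distance between images. One might also remark, for completeness, that this contraction (together with completeness of the space of $(P,r)$ pairs under this metric) justifies the earlier claim in Section~\ref{sec.formal-setting} that iterating a fixed policy drives $(P_t, r_t)$ to a unique limit $(P_\pi, r_\pi)$ via the Banach fixed-point theorem, though that consequence is not part of the statement to be proved.
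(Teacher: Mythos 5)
Your proof is correct and follows essentially the same route as the paper's: specialize Assumption~\ref{assumption_sensitivity} to a fixed $d$ so the $\iota$ terms vanish, sum the two component bounds, regroup into $\epsilon_p\norm{P-P'}_2+\epsilon_r\norm{r-r'}_2$, and bound both coefficients by $\epsilon=\max(\epsilon_p,\epsilon_r)$. No gaps.
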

\begin{proof}

Let $P, P'$ be arbitrary probability transition functions and $r,r'$ arbitrary reward functions.

Then
\begin{align*}
    &\on{dist}(g_d(P,r) - g_d(P', r'))= 
    \norm{\Pc(d, P, r) - \Pc(d, P', r')}_2+
    \norm{\Rc(d, P, r) - \Rc(d, P', r')}_2\\
    &\leq 
    \epsilon_{p,p}\norm{P-P'}_2 + \epsilon_{p,r}\norm{r-r'}_2+
    \epsilon_{r,p}\norm{P-P'}_2+\epsilon_{r,r}\norm{r-r'}_2\\
    &\leq 
    \epsilon\norm{P-P'}_2 + \epsilon\norm{r-r'}_2 = \epsilon\cdot\on{dist}((P, r), (P', r'))\ .
\end{align*}
Where the first inequality follows from Assumption~\ref{assumption_sensitivity} and the 
second one follows from the defintion of $\epsilon_p$, $\epsilon_r$ and $\epsilon$.
From this the proposition follows.
\end{proof}

\section{Proofs for Repeated Retraining (RR) (Section~\ref{sec.rr})}\label{appdx.rr}
\subsection{Definitions}\label{appdx.rr-def}
We define the following numbers
\begin{definition}\label{def.alphaBeta}
We define    
\begin{align*}
    \alpha &:= \sqrt{3} +
    \frac{\sqrt{7}\sizeS\sqrt{\sizeS}}{(1-\gamma)^2}\text{ and}\\
    \beta &:=
    \frac{(4\sqrt{7}\gamma+3\sqrt{6})\sizeS}{(1-\gamma)^2}+
    \frac{18\sqrt{7}\gamma \sizeS^2\sqrt{\sizeS}}{(1-\gamma)^4} \ .
\end{align*}
\end{definition}
\begin{definition}\label{def.GD}
    Let $\GD(P,r)$ be the solution to the regularized optimization problem, with probability transition function $P$ and reward function $r$, i.e.
    \begin{align*}
    \GD(P,r) \defeq 
           \argmax_{d\ge 0}&\   \sum_{s,a} d(s,a) r(s,a) - \frac{\lambda}{2}\norm{d}_2^2\\
       \textrm{s.t. } & \sum_a d(s,a) = \rho(s) + \gamma \cdot \sum_{s',a} d(s',a) P(s',a,s)\ \forall s\ .
    \end{align*}
\end{definition}

\subsection{RR in the Exact Setting (Theorem~\ref{thm:standard-rr-simple})}
\label{appdx.rr-exact}
We show the following more general version of Theorem~\ref{thm:standard-rr-simple}.
\begin{theorem}\label{thm:standard-rr}
    Assume that Assumption~\ref{assumption_sensitivity} holds
    and
        \begin{equation*}\lambda> \max\left\{(1-\epsilon_{p})^{-1}
            \beta,  (1-\epsilon_{r})^{-1} \alpha \right\}\end{equation*}
    Then for any $\delta>0$, we have
    \begin{equation*}
    \norm{d_t - d_S}_2 \leq \delta 
    \text{\quad for all } t \geq
    \frac{\ln\left(
    \frac{\norm{d_0 - d_S}_2 + \norm{P_0- P_S}_2 + \norm{r_0-r_S}_2}{\delta}
    \right)}{\ln\left(\left(\max\left\{
    \iota, \epsilon_p + \frac{\beta}{\lambda}, \epsilon_r + \frac{\alpha}{\lambda}
    \right\}\right)^{-1}\right)} + 1,
    \end{equation*}
    with $\alpha$ and $\beta$ defined in Definition~\ref{def.alphaBeta}.
\end{theorem}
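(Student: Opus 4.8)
The plan is to run a two-level contraction argument that couples the retraining step (following \cite{MTR23}) with the memory built into the environment's response (following \cite{BHK20}). Let $(d_S,P_S,r_S)$ be the regularized stable point, with $P_S:=P_{d_S}$, $r_S:=r_{d_S}$ (its existence follows as in Proposition~\ref{prop.fixedpt}, now using that the regularizer makes the solution map single-valued and continuous), and abbreviate $e_t:=\norm{d_t-d_S}_2$, $p_t:=\norm{P_t-P_S}_2$, $q_t:=\norm{r_t-r_S}_2$, and $\Psi_t:=p_t+q_t$. Since RR sets $d_t=\GD(P_{t-1},r_{t-1})$ while $P_t=\Pc(d_t,P_{t-1},r_{t-1})$ and $r_t=\Rc(d_t,P_{t-1},r_{t-1})$, and the stable point is a fixed point of all three maps, the goal is to show $\Psi_t$ contracts at rate $\rho:=\max\{\iota,\ \epsilon_p+\beta/\lambda,\ \epsilon_r+\alpha/\lambda\}$ and then transfer this to $e_t$.

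First I would record a perturbation bound for the regularized solution map: $\norm{\GD(P,r)-\GD(P',r')}_2\le \tfrac{\beta}{\lambda}\norm{P-P'}_2+\tfrac{\alpha}{\lambda}\norm{r-r'}_2$, with $\alpha,\beta$ as in Definition~\ref{def.alphaBeta}. This is a KKT/LP-perturbation estimate as in \cite{MTR23}: the term $-\tfrac{\lambda}{2}\norm{d}_2^2$ makes the objective $\lambda$-strongly concave, so moving the reward vector and the constraint matrix (the latter coming from $P$, with the usual $1/(1-\gamma)$ blow-ups from the occupancy-measure constraints) moves the optimizer by at most the stated amount. Applied to $d_t=\GD(P_{t-1},r_{t-1})$ and $d_S=\GD(P_S,r_S)$ this gives, for $t\ge 1$, $e_t\le \tfrac{\beta}{\lambda}p_{t-1}+\tfrac{\alpha}{\lambda}q_{t-1}$.

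Second, I would feed this into Assumption~\ref{assumption_sensitivity}. From $P_t=\Pc(d_t,P_{t-1},r_{t-1})$, $P_S=\Pc(d_S,P_S,r_S)$ (and likewise for $r$) the assumption yields $p_t\le \iota_p e_t+\epsilon_{p,p}p_{t-1}+\epsilon_{p,r}q_{t-1}$ and $q_t\le \iota_r e_t+\epsilon_{r,p}p_{t-1}+\epsilon_{r,r}q_{t-1}$; substituting the bound on $e_t$ and adding the two inequalities collapses the coupled system to
$$\Psi_t\le \Big(\iota\tfrac{\beta}{\lambda}+\epsilon_p\Big)p_{t-1}+\Big(\iota\tfrac{\alpha}{\lambda}+\epsilon_r\Big)q_{t-1}\le \rho\,\Psi_{t-1},\qquad \rho:=\max\Big\{\iota,\ \epsilon_p+\tfrac{\beta}{\lambda},\ \epsilon_r+\tfrac{\alpha}{\lambda}\Big\},$$
where the hypothesis $\lambda>\max\{(1-\epsilon_p)^{-1}\beta,\,(1-\epsilon_r)^{-1}\alpha\}$ guarantees $\epsilon_p+\beta/\lambda<1$ and $\epsilon_r+\alpha/\lambda<1$ (hence $\rho<1$), as well as $\max(\alpha,\beta)/\lambda<1$. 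Iterating gives $\Psi_t\le \rho^t\Psi_0$, and then $e_t\le \tfrac{\max(\alpha,\beta)}{\lambda}\Psi_{t-1}\le \rho^{t-1}\Psi_0\le \rho^{t-1}(e_0+p_0+q_0)$. Requiring the right-hand side to be at most $\delta$ and solving for $t$ produces exactly the stated threshold $t\ge \ln\!\big((e_0+p_0+q_0)/\delta\big)/\ln(1/\rho)+1$, with $e_0+p_0+q_0=\norm{d_0-d_S}_2+\norm{P_0-P_S}_2+\norm{r_0-r_S}_2$.

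I expect the main obstacle to be the first ingredient — the perturbation bound for $\GD$ with the explicit constants $\alpha,\beta$ of Definition~\ref{def.alphaBeta} — since it requires controlling how a constrained, strongly concave LP optimizer moves under simultaneous perturbation of its objective and its constraint matrix, carrying along the $\sizeS$, $\sqrt{\sizeS\sizeA}$ and $1/(1-\gamma)$ dependences; this is the technically heavy, \cite{MTR23}-style part, whereas the rest is bookkeeping. The one delicate modeling choice in that bookkeeping is to propagate the \emph{environment} distance $\Psi_t$ and bound $e_t$ as a one-step rider, rather than iterating a naive potential $e_t+p_t+q_t$: doing the latter picks up a spurious factor $(1+\iota)$ in front of $\beta/\lambda$ and $\alpha/\lambda$ and would force a stronger lower bound on $\lambda$ than the one claimed.
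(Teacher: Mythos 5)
Your proof is correct and follows essentially the same route as the paper's: the decisive ingredient is the identical perturbation bound $\|\GD(P,r)-\GD(P',r')\|_2\le \frac{\alpha}{\lambda}\|r-r'\|_2+\frac{\beta}{\lambda}\|P-P'\|_2$ (Lemma~\ref{lem:GG_alphabeta}, assembled from the strong-concavity and dual-perturbation estimates of \cite{MTR23}), combined with Assumption~\ref{assumption_sensitivity} to yield a geometric contraction at rate $\max\{\iota,\ \epsilon_p+\beta/\lambda,\ \epsilon_r+\alpha/\lambda\}$. The only divergence is bookkeeping: the paper contracts the full triple distance $\|d-d_S\|_2+\|P-P_S\|_2+\|r-r_S\|_2$ under the one-step map $f(d,P,r)=(\GD(P,r),\Pc(d,P,r),\Rc(d,P,r))$ (which never incurs your feared $(1+\iota)$ factor because the $\GD$-component of $f$ does not depend on $d$), whereas you contract only the environment distance $\Psi_t$ and recover $\|d_t-d_S\|_2$ as a one-step rider --- both give the same rate and the same round count.
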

We first discuss how to obtain Theorem~\ref{thm:standard-rr-simple} from Theorem~\ref{thm:standard-rr}.
Assumption~\ref{assumption-simplicity} ensures that $\beta \geq \alpha$, $\epsilon_p = \epsilon_r = \epsilon$ and $\iota \leq \epsilon$.
We further bound $\norm{d_0 -d_S}_2 \leq \frac{2}{1-\gamma}$, 
$\norm{P_0-P_S}_2\leq \sqrt{2\sizeS\sizeA}$ and $\norm{r_0-r_S}_2\leq \sqrt{\sizeS\sizeA}$.
Choosing $\lambda = 2\beta (1-\epsilon)^{-1}$ then provides the desired bounds.

The proof of Theorem~\ref{thm:standard-rr} has a similar structure to the proof of Theorem~4 in~\cite{BHK20}.
\begin{proof}[Proof of Theorem~\ref{thm:standard-rr}]
    We define by $f$ the mapping from $(d_{t-1}, P_{t-1}, r_{t-1})$ to $(d_t, P_t, r_t)$, i.e.
    \begin{equation*}f(d, P, r) := (\GD(P,r), \Pc(d, P, r), \Rc(d, P, r))\ .\end{equation*}
    
    We analyze $ \on{dist}(f(d, P, r), f(d', P', r')) $.
    \begin{equation}
        \label{dist_ff}
        \begin{split}
            \on{dist}(f(d, P, r), f(d', P', r'))
                =& \|\GD(P, r) - \GD(P', r') \|_2 \\
                &+ \|\Pc(d, P, r) - \Pc(d', P', r')\|_2 \\
                &+ \|\Rc(d, P, r) - \Rc(d', P', r')\|_2
        \end{split}
    \end{equation}
    The last two terms of this sum can be bounded by using 
    Assumption~\ref{assumption_sensitivity}~:
    \begin{equation} \label{diff_tr}
    \begin{split}
        &\|\Pc(d, P, r)) - \Pc(d', P', r'))\|_2 + \|\Rc(d, P, r)) - \Rc(d', P', r'))\|_2\\
        \leq&\  (\iota_p+\iota_r)\|d-d'\|_2 + (\epsilon_{p,p}+\epsilon_{r,p})\|P-P'\|_2
        + (\epsilon_{p,r}+\epsilon_{r,r})\|r-r'\|_2
    \end{split}
    \end{equation}

    We now bound the first term of \eqref{dist_ff}, i.e. 
    $\|\GD(P, r) - \GD(P', r') \|_2$.

    From Lemma~\ref{lem:GG_alphabeta}, we get
    \begin{align}\label{diff_GG}
    \|\GD(P, r) - \GD(P', r')\|_2\leq 
    \frac{\alpha}{\lambda} \|r-r'\|_2 + \frac{\beta}{\lambda}\|P-P'\|_2
    \end{align}

    Combining \eqref{dist_ff}, \eqref{diff_tr} and \eqref{diff_GG} we get

    \begin{align}
    \begin{split}
        &\on{dist}(f(d,P,r), f(d', P', r')) 
        \leq \iota_d\|d-d'\|_2\\
        &+\left(\epsilon_{p}+
        \frac{\beta}{\lambda}
             \right)\|P-P'\|_2 
        + \left(
        \epsilon_{r}+
        \frac{\alpha}{\lambda}
            \right)\|r-r'\|_2
            \label{eq:contraction-standard-case}
    \end{split}
    \end{align}
    We define $q:= \max\left(\iota_d, \epsilon_{p}+\frac{\beta}{\lambda},
    \epsilon_{r}+ \frac{\alpha}{\lambda} 
            \right)$.
    From \eqref{eq:contraction-standard-case} and the definition of $q$, 
    it follows that
    \begin{align*}
        &\on{dist}((d_{t},P_{t},r_{t}), (d_S, P_S, r_S))=\on{dist}(f(d_{t-1},P_{t-1},r_{t-1}), f(d_S, P_S, r_S)) \\
        \leq& q\on{dist}((d_{t-1},P_{t-1},r_{t-1}), (d_S, P_S, r_S))
        \leq q^t\left(\norm{d_0 - d_S}_2 + \norm{P_0-P_S}_2 + \norm{r_0 - r_S}_2\right),
    \end{align*}
    where the first equality follows from the fact that $(d_S, P_S, r_S)$ is a fixed point
    of $f$.
    
    Note that by the conditions on $\lambda, \iota, \epsilon_p$ and  $\epsilon_r$, 
    it holds that $q< 1$.
   
   Therefore, if we set
   $t\geq \ln(\dist((d_1, P_0, r_0), (d_S, P_S, r_S))/\delta)/\ln(1/q) + 1$,
   then we get that
   \begin{equation*}\dist((d_{t},P_{t-1},r_{t-1}), (d_S, P_S, r_S))\leq \delta.\end{equation*}
   Then also $\norm{d_t - d_S}_2 \leq \delta$.
\end{proof}

\begin{lemma}[similar to lemma~2 of~\cite{BHK20}]\label{lem:GG_alphabeta}
    Let $P, \hat{P}$ be two probability transition functions and 
    $r, \hat{r}$ be two reward functions. Then
    \begin{equation*}
    \|\GD(P, r) - \GD(\hat{P}, \hat{r})\|_2\leq 
    \frac{\alpha}{\lambda} \|r-\hat{r}\|_2 + \frac{\beta}{\lambda}\|P-\hat{P}\|_2
    \end{equation*}
    with $\alpha$ and $\beta$ from Definition~\ref{def.alphaBeta}.
\end{lemma}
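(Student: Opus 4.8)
The plan is to bound the distance between the two regularized solutions $\GD(P,r)$ and $\GD(\hat P,\hat r)$ by viewing each as the maximizer of a strongly concave objective over a convex feasible set, and by tracking how both the objective and the feasible set move as $(P,r)$ changes to $(\hat P,\hat r)$. Since the objective $\sum_{s,a} d(s,a)\,r(s,a) - \tfrac{\lambda}{2}\|d\|_2^2$ is $\lambda$-strongly concave in $d$, a standard argument gives that perturbing a $\lambda$-strongly concave maximization yields a solution displacement of order $(1/\lambda)$ times the size of the perturbation; the two sources of perturbation are (i) the change in the linear part of the objective, governed by $\|r-\hat r\|_2$, and (ii) the change in the constraint polytope $\mathcal{C}(P)=\{d\ge 0:\ \sum_a d(s,a)=\rho(s)+\gamma\sum_{s',a}d(s',a)P(s',a,s)\ \forall s\}$, governed by $\|P-\hat P\|_2$. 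These two contributions are exactly what produce the $\tfrac{\alpha}{\lambda}\|r-\hat r\|_2$ and $\tfrac{\beta}{\lambda}\|P-\hat P\|_2$ terms.

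Concretely, I would proceed in three steps. First, reduce to the case $\hat P = P$: bound $\|\GD(P,r)-\GD(P,\hat r)\|_2$ using strong concavity on the \emph{fixed} polytope $\mathcal{C}(P)$. Writing $d = \GD(P,r)$, $d' = \GD(P,\hat r)$, strong concavity plus optimality of each point on the common feasible set gives $\lambda\|d-d'\|_2^2 \le \langle r-\hat r, d-d'\rangle \le \|r-\hat r\|_2\|d-d'\|_2$, hence $\|d-d'\|_2 \le \tfrac1\lambda\|r-\hat r\|_2$; the constant $\alpha$ then absorbs the mismatch between the $\ell_2$ norm on $(s,a)$-pairs and whatever norm is natural for $r$, together with a diameter bound on $\mathcal{C}(P)$ (occupancy measures satisfy $\|d\|_1 = \tfrac{1}{1-\gamma}$, which is where the $(1-\gamma)^{-2}$ and $\sqrt{|S|}$ factors in $\alpha$ come from). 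Second, fix $\hat r$ and bound $\|\GD(P,\hat r)-\GD(\hat P,\hat r)\|_2$: here the objective is fixed but the polytope moves. The key sub-lemma is that $\mathcal{C}(P)$ and $\mathcal{C}(\hat P)$ are close in Hausdorff distance, quantitatively $O\!\big(\tfrac{|S|\sqrt{|S|}}{(1-\gamma)^2}\|P-\hat P\|_2\big)$ or similar, obtained by perturbing the constraint matrix $I - \gamma P$ and using that its inverse has operator norm controlled by $\tfrac{1}{1-\gamma}$; combining Hausdorff closeness of the feasible sets with $\lambda$-strong concavity of the objective (a point that is $\eta$-close to feasible can be projected back at cost $O(\eta)$ in objective value, which translates to $O(\sqrt{\eta/\lambda})$ in solution distance — or, using a more careful first-order argument as in \cite{BHK20}, to $O(\eta/\lambda)$ once one exploits that $\GD$ is itself a Lipschitz function of the perturbation) yields the $\tfrac{\beta}{\lambda}\|P-\hat P\|_2$ term. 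Third, combine the two estimates by the triangle inequality through the intermediate point $\GD(P,\hat r)$.

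The main obstacle I expect is the second step: carefully quantifying how the occupancy-measure polytope $\mathcal{C}(P)$ deforms under a perturbation of $P$, and then converting that geometric deformation into a bound on the displacement of the strongly concave maximizer \emph{linearly} (not with a square root) in $\|P-\hat P\|_2$. The square-root loss from a naive "almost-feasible point" argument would give the wrong dependence on $\lambda$; avoiding it requires either the refined perturbation argument from Lemma~2 of \cite{BHK20} (differentiating the KKT system and bounding the resulting linear map) or an explicit construction of a feasible point of $\mathcal{C}(\hat P)$ close to $\GD(P,\hat r)$ whose objective value is within $O(\|P-\hat P\|_2)$ — obtained by flowing along the constraint manifold — followed by strong concavity. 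Getting the explicit constants $\alpha$ and $\beta$ in Definition~\ref{def.alphaBeta}, with their precise powers of $(1-\gamma)$, $|S|$, and the $\sqrt{7}$, $3\sqrt6$ factors, will be the bookkeeping-heavy part, drawing on the corresponding estimates in \cite{MTR23} for the occupancy-measure LP.
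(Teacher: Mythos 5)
Your step for the $r$-perturbation (fixed polytope, two first-order optimality conditions added together, giving $\lambda\|d-d'\|_2^2\le\langle r-\hat r,d-d'\rangle$) is correct and in fact yields the constant $1/\lambda$, which is tighter than the paper's $\alpha/\lambda$ for that term. The triangle inequality through $\GD(P,\hat r)$ is also fine. The problem is that the crux of the lemma --- the $P$-perturbation, where the feasible polytope itself moves --- is exactly the step you leave unexecuted. You correctly diagnose that a Hausdorff-distance-plus-strong-concavity argument loses a square root (giving $O(\sqrt{\|P-\hat P\|_2/\lambda})$ instead of $O(\|P-\hat P\|_2/\lambda)$), but then only gesture at two possible fixes without carrying either out. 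The first fix you cite, ``the refined perturbation argument from Lemma~2 of \cite{BHK20},'' does not apply as stated: in \cite{BHK20} the feasible set is independent of the shifting distribution and only the objective moves, so their lemma never confronts a deforming constraint set. The second fix (constructing an explicit near-feasible point and ``flowing along the constraint manifold'') is precisely the hard analysis you would need to supply, and as sketched it reintroduces the square root. So the proposal has a genuine gap at the one step that makes the lemma nontrivial.

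The paper sidesteps this entirely by passing to the Lagrangian dual, following \cite{MTR23}: the dual variable $h$ ranges over a set that does not depend on $P$, the dual objective is strongly convex with modulus $\sizeA(1-\gamma)^2/\lambda$, and its gradient is Lipschitz in $(P,r)$ (Lemmas~3 and~4 of \cite{MTR23}). This gives a \emph{linear} bound on $\|h-\hat h\|_2$ in terms of $\|P-\hat P\|_2$ and $\|r-\hat r\|_2$, which is then converted back to a bound on $\|\GD(P,r)-\GD(\hat P,\hat r)\|_2$ via the primal--dual relation (the inequality from the proof of Lemma~1 of \cite{MTR23}), together with the bound $\|\hat h\|_2\le 3\sizeS/(1-\gamma)^2$. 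If you want to complete your primal route, you would need to prove a genuinely new deformation estimate for the occupancy polytope that yields a first-order (not half-order) displacement of the maximizer; absent that, the dual route is the one that actually closes the argument and produces the constants $\alpha$ and $\beta$ of Definition~\ref{def.alphaBeta}.
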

\begin{proof}
    Let $M$ and $\hat{M}$ be two MDPs and 
    $r$ and $\hat{r}$ be the corresponding reward functions and 
    $P$ and $\hat{P}$ be the corresponding transition probability functions.

    In the following we use some arguments from~\cite{MTR23}.
    Those arguments apply here as well, since we use the same optimization problem as they do.

    Let $h$ and $\hat{h}$ be the optimal solution to the dual objective (12) in \cite{MTR23} to 
    $M$ and $\hat{M}$ respectively.

    From~\cite{MTR23} we get that (page 16, after ``We now substitute the above bound in equation 15.'')

    \begin{align}
        -\frac{\sizeA(1-\gamma)^2}{\lambda}\left\|h-\hat{h}\right\|_2^2 \geq 
        -\left\|h-\hat{h}\right\|_2 \left\|\nabla \mathcal{L}(\hat{h};M) 
        - \nabla\mathcal{L}(\hat{h}, \hat{M})\right\|_2
        \label{hh2}
    \end{align}

    and also from~\cite{MTR23} 
    \begin{align}
        \left\|\nabla \mathcal{L}(\hat{h};M) 
        - \nabla\mathcal{L}(\hat{h}, \hat{M})\right\|_2
        &\leq \frac{4\sizeS\sqrt{\sizeA}}{\lambda}\left\|r-\hat{r}\right\|_2 
        + \left(\frac{4\gamma\sqrt{\sizeS\sizeA}}{\lambda}+\frac{6\gamma\sqrt{\sizeA}\sizeS}{\lambda}
        \left\|\hat{h}\right\|_2\right)
        \left\|P-\hat{P}\right\|_2\nonumber\\
        &\leq \frac{4\sizeS\sqrt{\sizeA}}{\lambda}\left\|r-\hat{r}\right\|_2 
        + \left(\frac{4\gamma\sqrt{\sizeS\sizeA}}{\lambda}+\frac{6\gamma\sqrt{\sizeA}\sizeS}{\lambda}
        \frac{3\sizeS}{(1-\gamma)^2}\right)
        \left\|P-\hat{P}\right\|_2\label{lemma_3_4}
    \end{align}
    The first inequality is due to lemma~3 of~\cite{MTR23} and the 
    second inequality is due to lemma~4 in~\cite{MTR23}.

    Combining \eqref{hh2} and \eqref{lemma_3_4} we get:
    \begin{align}
        \left\|h-\hat{h}\right\|_2 \leq&
        \frac{\lambda}{\sizeA(1-\gamma)^2}
        \left\|\nabla \mathcal{L}(\hat{h};M) 
        - \nabla\mathcal{L}(\hat{h}, \hat{M})\right\|_2\nonumber\\
        \leq&
        \frac{\lambda}{\sizeA(1-\gamma)^2}\Biggl(
        \frac{4\sizeS\sqrt{\sizeA}}{\lambda}\left\|r-\hat{r}\right\|_2 
        + \left(\frac{4\gamma\sqrt{\sizeS\sizeA}}{\lambda}+\frac{6\gamma\sqrt{\sizeA}\sizeS}{\lambda}
        \frac{3\sizeS}{(1-\gamma)^2}\right)
        \left\|P-\hat{P}\right\|_2\Biggr)\label{bound_hh}
    \end{align}

    Another result from~\cite{MTR23}, which is found in the proof of lemma~1 is:
    \begin{align}
        \left\|\GD(P, r)-\GD(\hat{P}, \hat{r})\right\|_2^2\leq \frac{3}{\lambda^2}\|r-\hat{r}\|_2^2 
        + \frac{7\sizeA\sizeS}{\lambda^2}\left\|h-\hat{h}\right\|_2^2 + \frac{6}{\lambda^2}
        \left\|\hat{h}\right\|_2^2\left\|P-\hat{P}\right\|_2^2
        \label{bound_gg22}
    \end{align}

    Combining \eqref{bound_hh} and \eqref{bound_gg22} it follows that
    \begin{align}
        \left\|\GD(P, r)-\GD(\hat{P}, \hat{r})\right\|_2
        \leq& \frac{\sqrt{3}}{\lambda}\|r-\hat{r}\|_2
        + \frac{\sqrt{7\sizeA\sizeS}}{\lambda}\left\|h-\hat{h}\right\|_2 + \frac{\sqrt{6}}{\lambda}
        \left\|\hat{h}\right\|_2\left\|P-\hat{P}\right\|_2\nonumber\\
        \leq& \frac{\sqrt{3}}{\lambda}\|r-\hat{r}\|_2
        + \frac{\sqrt{7\sizeA\sizeS}}{\lambda}\left\|h-\hat{h}\right\|_2
        + \frac{\sqrt{6}}{\lambda}
        \frac{3\sizeS}{(1-\gamma)^2}\left\|P-\hat{P}\right\|_2\label{bound_gg}
    \end{align}
    where the last inequality follows from lemma~4 of~\cite{MTR23}.

    Combining \eqref{bound_hh} and \eqref{bound_gg} we get:
    \begin{align*}
        &\left\|\GD(P, r)-\GD(\hat{P}, \hat{r})\right\|_2
        \leq \frac{\sqrt{3}}{\lambda}\|r-\hat{r}\|_2
        + \frac{\sqrt{7\sizeA\sizeS}}{\lambda}
        \frac{\lambda}{\sizeA(1-\gamma)^2}\Biggl(
        \frac{4\sizeS\sqrt{\sizeA}}{\lambda}\left\|r-\hat{r}\right\|_2 \nonumber\\&
        + \left(\frac{4\gamma\sqrt{\sizeS\sizeA}}{\lambda}+\frac{6\gamma\sqrt{\sizeA}\sizeS}{\lambda}
        \frac{3\sizeS}{(1-\gamma)^2}\right)
        \left\|P-\hat{P}\right\|_2\Biggr) 
        + \frac{\sqrt{6}}{\lambda}
        \frac{3\sizeS}{(1-\gamma)^2}\left\|P-\hat{P}\right\|_2\nonumber\\
        =&
        \Biggl(\frac{\sqrt{3}}{\lambda} + 
        \frac{\sqrt{7\sizeA\sizeS}}{\lambda}
        \frac{\lambda}{\sizeA(1-\gamma)^2}
        \frac{4\sizeS\sqrt{\sizeA}}{\lambda}\Biggr)\|r-\hat{r}\|_2 \nonumber\\&
        + \Biggl(\frac{\sqrt{7\sizeA\sizeS}}{\lambda}
        \frac{\lambda}{\sizeA(1-\gamma)^2}
        \Biggl(\frac{4\gamma\sqrt{\sizeS\sizeA}}{\lambda}+\frac{6\gamma\sqrt{\sizeA}\sizeS}{\lambda}
        \frac{3\sizeS}{(1-\gamma)^2}\Biggr)
         + \frac{\sqrt{6}}{\lambda}
        \frac{3\sizeS}{(1-\gamma)^2}\Biggr)
        \left\|P-\hat{P}\right\|_2\nonumber\\
        =&
            \Biggl(
            \frac{\sqrt{3}}{\lambda} +
            \frac{\sqrt{7}\sizeS\sqrt{\sizeS}}{(1-\gamma)^2\lambda}
            \Biggr)\|r-\hat{r}\|_2 
            + \Biggl(
            \frac{(4\sqrt{7}\gamma+3\sqrt{6})\sizeS}{(1-\gamma)^2\lambda}+
            \frac{18\sqrt{7}\gamma \sizeS^2\sqrt{\sizeS}}{(1-\gamma)^4\lambda}
             \Biggr)
            \left\|P-\hat{P}\right\|_2
    \end{align*}
\end{proof}

\subsection{RR with Finite Samples (Theorem~\ref{thm:finite-samples-RR-simple})}
In general we note that using our sample generation model, it is easy to get an estimate of the current occupancy measure $\bar{d}$, by comparing how many samples were drawn for each pair $(s,a)$ and how many samples were drawn overall. It is also straightforward to bound those estimates using standard methods such as Hoeffding's inequality. 
For simplicity, we implicitly assume that those occupancy measures are provided. More concretely, in Lagrangians~\eqref{eq:empirical-Lagrangian} and \eqref{eq:lagrangian-finite-mdrr} we assume that $\bar{d}_j$ is given.

\label{appdx.rr-finite}
\begin{definition}
We denote by $\hGD(d_t, F)$ the solution to optimization problem corresponding to $\hat\cL$, i.e.
\begin{align*}
\hGD(d_t, F) \defeq
    \argmax_d \min_h \underbrace{\left(- \frac{\lambda}{2} \norm{d}_2^2 + \sum_s h(s) \rho(s) 
    +
    \sum_{(s,a,r,s')\in F} \frac{d(s,a)}{\bar{d}_t(s,a)} \cdot \frac{r - h(s) + \gamma h(s')  }{\abs{F}(1-\gamma)}\right)}_{=\hat{\calL}}
\end{align*}
\end{definition}

We use the following result from \cite{MTR23}.
\begin{lemma}
    Given an arbitrary occupancy measure $d$, probability transition function $P$ and reward function
    $r$, suppose that $\GD(P, r)(s,a)/\bar{d}(s,a) \leq B$ 
    for all $(s,a)\in \setS\times \setA$, where $\bar{d}$ is the occupancy measure of $\pi_d$ in an environment
    with transition probabilities $P$. Furthermore, let $F$ be a set of samples drawn according to the occupancy
    measure $\bar{d}$ with $r$ being the reward function. We assume
     \begin{align*}
        \abs{F} \ge \frac{1}{\mu^2} \left( \sizeA\ln\left(\frac{2}{\delta_1}\right) +  \ln\left(\frac{12\sizeS}{
            \mu(1-\gamma)^2}\right) 
        + 2 \sizeA \ln\left(\frac{\ln\left(\frac{3\sizeS^2\sizeA B}{\mu(1-\gamma)^2}\right)}{\mu}\right)\right),
    \end{align*}
    for arbitrary $\mu, \delta_1>0$.
    Then the following bound holds with probability at least $1-\delta_1$.
    \begin{align*}
        \|\GD(P, r) - \hGD(d, F)\|_2 &\le 
        \frac{6\sqrt{ \sizeS^{1.5}(B + \sqrt{\sizeA}) \mu }}{(1-\gamma)^{1.5} } 
        \frac{1}{\sqrt{\lambda}}
    \end{align*}
    \label{lem:bound-GD-hGD}
\end{lemma}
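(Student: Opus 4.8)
Since the optimization problem defining $\hGD$ is exactly the empirical-Lagrangian problem of~\cite{MTR23}, the plan is to follow their finite-sample stability argument, and I will sketch the three ingredients it rests on. Write $\calL(d,h)$ for the exact Lagrangian of the regularized LP with data $(P,r)$ — so that $\GD(P,r)=\argmax_{d\ge 0}\min_h\calL(d,h)$ by strong LP duality — and let $\hat\calL$ be the empirical Lagrangian built from $F$, with $\hGD(d,F)=\argmax_d\min_h\hat\calL$. The sample generator is designed precisely so that $\E_F[\hat\calL(d,h)]=\calL(d,h)$ for every fixed $(d,h)$. Setting $G(d):=\min_h\calL(d,h)$ and $\widehat G(d):=\min_h\hat\calL(d,h)$, both $G$ and $\widehat G$ are $\lambda$-strongly concave in $d$: each is the sum of $-\tfrac{\lambda}{2}\|d\|_2^2$ and a pointwise minimum over $h$ of functions that are \emph{affine} in $d$, hence concave.

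First I would convert the optimizer gap into a function-value gap by strong concavity. Writing $d^\star:=\GD(P,r)$ and $\widehat d:=\hGD(d,F)$, and using $\widehat G(d^\star)\le \widehat G(\widehat d)$,
\[
\tfrac{\lambda}{2}\,\|d^\star-\widehat d\|_2^2 \;\le\; G(d^\star)-G(\widehat d) \;\le\; \bigl(G(d^\star)-\widehat G(d^\star)\bigr)+\bigl(\widehat G(\widehat d)-G(\widehat d)\bigr) \;\le\; 2\sup_{d}\bigl|G(d)-\widehat G(d)\bigr|,
\]
so $\|d^\star-\widehat d\|_2\le 2\sqrt{\sup_d|G(d)-\widehat G(d)|/\lambda}$; this is the source of the $\lambda^{-1/2}$ in the claim, and it remains to show $\sup_d|G(d)-\widehat G(d)|=O\!\bigl(\tfrac{\sizeS^{3/2}(B+\sqrt{\sizeA})}{(1-\gamma)^{3}}\,\mu\bigr)$ with probability $1-\delta_1$, where the sup ranges over the region containing $d^\star$ and $\widehat d$. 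Since $|\min_h\calL-\min_h\hat\calL|\le\max_h|\calL-\hat\calL|$, it suffices to bound $\sup_{d,h}|\calL(d,h)-\hat\calL(d,h)|$ over that region: the $d$'s of interest obey the overlap constraint $d(s,a)/\bar d(s,a)\le B$ (so $\|d\|_2$ is controlled and the importance weights $d(s,a)/\bar d(s,a)$ are bounded by $B$), and by the a-priori dual bound from~\cite{MTR23} the inner minimizer may be restricted to $\|h\|_2\le \tfrac{3\sizeS}{(1-\gamma)^2}$.

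On that bounded region, $\calL-\hat\calL$ is the deviation of the importance-weighted empirical average $\frac{1}{|F|}\sum_{(s,a,r,s')\in F}\frac{d(s,a)}{\bar d(s,a)}\,\frac{r-h(s)+\gamma h(s')}{1-\gamma}$ from its mean; each summand is bounded by $O\!\bigl(B(\|h\|_2+1)/(1-\gamma)\bigr)$, so Hoeffding's inequality gives a $\mu$-deviation for any fixed $(d,h)$ once $|F|\gtrsim\mu^{-2}\ln(1/\delta_1)$ (up to the region-size scaling). To make this uniform I would lay an $\varepsilon$-net over the $(d,h)$-region — of cardinality $\exp\!\bigl(O(\sizeA\ln(\cdot))\bigr)$, since $\pi^d$ has $\sizeA$ free parameters per state and $h$ has $\sizeS$ coordinates — union-bound over the net, and absorb the net resolution via the Lipschitz constants of $\calL$ and $\hat\calL$ in $(d,h)$ (again controlled by $B$, $\|h\|_2$ and $(1-\gamma)^{-1}$). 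Tracking these constants is exactly what produces the three terms $\sizeA\ln(2/\delta_1)$, $\ln\!\bigl(12\sizeS/(\mu(1-\gamma)^2)\bigr)$ and $2\sizeA\ln\!\bigl(\ln(3\sizeS^2\sizeA B/(\mu(1-\gamma)^2))/\mu\bigr)$ in the stated requirement on $|F|$, and feeding the resulting bound on $\sup|G-\widehat G|$ into the strong-concavity inequality above yields the claimed bound on $\|\GD(P,r)-\hGD(d,F)\|_2$.

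The main obstacle I expect is the uniform-concentration step: the inner $\min_h$ is not \emph{a priori} over a compact set, so one must first invoke the dual-boundedness result before any covering argument is legitimate, and one must carry the range and Lipschitz constants of the importance-weighted Lagrangian through the net and the union bound carefully enough that the final dependence on $B$, $\sizeS$, $\sizeA$ and $(1-\gamma)^{-1}$ comes out as stated. The remaining pieces — strong concavity of $G$ and $\widehat G$, the saddle-point characterization via LP duality, and the per-point Hoeffding bound — are routine.
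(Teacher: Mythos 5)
Your sketch is essentially correct and matches the argument the paper relies on: the paper does not prove Lemma~\ref{lem:bound-GD-hGD} itself but imports it verbatim from \cite{MTR23}, and your three ingredients (the $\lambda$-strong concavity of the regularized Lagrangian to convert a value gap into an occupancy-measure gap and the source of the $\lambda^{-1/2}$, the a priori dual bound $\|h\|_2\le 3\sizeS/(1-\gamma)^2$ that makes the covering argument legitimate, and Hoeffding plus an $\eta$-net over $(d,h)$ with a union bound and Lipschitz absorption of the net resolution) are exactly the structure of that proof, which the paper reproduces in generalized multi-round form in Lemma~\ref{lem:emperical-lagrangian-drr} and the $T_1$ bound in the proof of Theorem~\ref{thm:edrr-mixed-response}. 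The only cosmetic difference is that the paper and \cite{MTR23} apply strong concavity of $\calL(\cdot,h)$ at the fixed exact dual optimizer $h$ and then invoke their Lemma~12 to pass to the empirical objective, rather than working through the marginals $G(d)=\min_h\calL(d,h)$ and $\widehat G(d)$ as you do; both routes are valid, and yours simply requires the extra (correct) observation that the restricted inner minimum preserves finiteness and $\lambda$-strong concavity.
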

This lemma follows from the equation which comes second after equation (23) in the work from \cite{MTR23} on page 30, after the text ``Rearranging and using lemma 12 we get the following bound''.
The conditions follow from the conditions under which this equation holds in the work from \cite{MTR23}. 
Note that we write $\mu$ instead of $\epsilon$, which is the variable name used in~\cite{MTR23}.
The same arguments as in \cite{MTR23} hold, since they also look at the one step updates optimizing $\cL$ and $\hat\cL$, which are the same in this work.

We can then show a more general version of Theorem~\ref{thm:finite-samples-RR-simple}.
\begin{theorem}\label{thm:finite-samples-RR}
    Suppose that overlap Assumption~\ref{assumption-offline-rl} holds for $k=1$ and parameter $B$
    and 
    Assumption~\ref{assumption_sensitivity} holds.
    Let  $(x_p, x_r)\in\{(\iota_p, \iota_r), 
    (\epsilon_{p,p}, \epsilon_{r,p}),(\epsilon_{p,r}, \epsilon_{r,r})\}$ be the pair
    maximizing $\left(\frac{\alpha}{\lambda}+1\right)x_r  + \left(\frac{\beta}{\lambda}+1\right)x_p$.
    We then assume that
    \begin{equation*}\lambda> \max\left\{(1-\epsilon_{p})^{-1}
        \beta, 
        (1-\epsilon_{r})^{-1}
        \alpha,\frac{\alpha x_r+\beta x_p}{1
        -\zeta-x_r-x_p} \right\}.\end{equation*}
    Furthermore assume that 
    \begin{align*}
        m_t \ge &
        \left(\frac{\xi}{\lambda \zeta^2}\right)^2
        \left( \sizeA\ln\left(\frac{4t^2}{p}\right) +  \ln\left(\frac{12\sizeS \xi}{\lambda 
            \zeta^2(1-\gamma)^2}\right)
        + 2 \sizeA \ln\left(\frac{\xi\ln\left(\frac{3\sizeS\sizeA B\xi}{\lambda 
            \zeta^2(1-\gamma)^2}\right)}{\lambda \zeta^2}\right)\right)\ ,
    \end{align*}
    with $\xi =\frac{36\sizeS^{1.5}(B+\sqrt{\sizeA})}{\delta^2(1-\gamma)^3}$.
    Then for any $\delta>0$, we have
    \begin{equation*}
    \norm{d_t - d_S}_2 \leq \delta
    \text{\quad for all } t\geq 
    \frac{\ln\left(\frac{\norm{d_1 - d_S}_2 + \norm{P_0-P_S}_2 + \norm{r_0 - r_S}_2}{\delta}\right)}
    {\ln\left(1/\left(\zeta+\Big(\frac{\alpha}{\lambda}+1\Big)x_{r}+
    \Big(\frac{\beta}{\lambda} +1\Big)x_{p}\right)\right)} + 1.
    \end{equation*}
    Here $\zeta$ can be chosen to be an arbitrary value between $0$ and $1-x_r-x_p$. 
    It defines a trade-off between
    the conditions on the regularization parameter $\lambda$ and on the number of 
    samples $m_t$.
\end{theorem}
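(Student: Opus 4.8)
The plan is to run the single-potential contraction argument behind Theorem~\ref{thm:standard-rr}, but against the finite-sample update $d_{t+1}=\hGD(d_t,F_t)$, which replaces the exact best response $\GD(P_t,r_t)$ and so introduces the per-step perturbation $\Delta_t:=\norm{\GD(P_t,r_t)-\hGD(d_t,F_t)}_2$. I would measure progress by the \emph{mixed-time} potential
\[
\Psi_t \defeq \norm{d_t-d_S}_2+\norm{P_{t-1}-P_S}_2+\norm{r_{t-1}-r_S}_2 ,
\]
i.e.\ the distance of the state $(d_t,P_{t-1},r_{t-1})$ from the stable triple $(d_S,P_S,r_S)$ of \eqref{eq:perf-stable-policy-regularized}; here I use the one-step dynamics with $P_t=\Pc(d_t,P_{t-1},r_{t-1})$, $r_t=\Rc(d_t,P_{t-1},r_{t-1})$, and $d_{t+1}$ the best response to $(P_t,r_t)$, estimated by $\hGD(d_t,F_t)$. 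Since $\Psi_1$ is exactly the numerator of the claimed bound and $\norm{d_t-d_S}_2\le\Psi_t$, it suffices to drive $\Psi_t$ below $\delta$. Throughout I use that $d_S=\GD(P_S,r_S)$ and $(P_S,r_S)=(\Pc(d_S,P_S,r_S),\Rc(d_S,P_S,r_S))$ (cf.\ Definition~\ref{def.GD} and Proposition~\ref{prop.fixedpt}).

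I would first establish a per-step contraction with an additive error. Split $\norm{d_{t+1}-d_S}_2\le\Delta_t+\norm{\GD(P_t,r_t)-\GD(P_S,r_S)}_2$ and bound the last term by $\tfrac{\beta}{\lambda}\norm{P_t-P_S}_2+\tfrac{\alpha}{\lambda}\norm{r_t-r_S}_2$ via Lemma~\ref{lem:GG_alphabeta} (with $\alpha,\beta$ from Definition~\ref{def.alphaBeta}). Adding $\norm{P_t-P_S}_2+\norm{r_t-r_S}_2$ and regrouping,
\[
\Psi_{t+1}\le\Delta_t+\Big(\tfrac{\beta}{\lambda}+1\Big)\norm{P_t-P_S}_2+\Big(\tfrac{\alpha}{\lambda}+1\Big)\norm{r_t-r_S}_2 .
\]
Now expand $\norm{P_t-P_S}_2=\norm{\Pc(d_t,P_{t-1},r_{t-1})-\Pc(d_S,P_S,r_S)}_2$ and the analogous term for $r$ using Assumption~\ref{assumption_sensitivity}; the coefficients multiplying $\norm{d_t-d_S}_2$, $\norm{P_{t-1}-P_S}_2$, $\norm{r_{t-1}-r_S}_2$ come out to be exactly $(\tfrac{\beta}{\lambda}+1)x_p+(\tfrac{\alpha}{\lambda}+1)x_r$ at the three pairs $(x_p,x_r)\in\{(\iota_p,\iota_r),(\epsilon_{p,p},\epsilon_{r,p}),(\epsilon_{p,r},\epsilon_{r,r})\}$. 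Bounding each by the maximum over these pairs yields, for all $t\ge1$,
\[
\Psi_{t+1}\le\Delta_t+q'\,\Psi_t, \qquad q':=\max_{(x_p,x_r)}\Big[\big(\tfrac{\beta}{\lambda}+1\big)x_p+\big(\tfrac{\alpha}{\lambda}+1\big)x_r\Big].
\]

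Next I would control $\Delta_t$ and unroll. Instantiate Lemma~\ref{lem:bound-GD-hGD} with $\mu=\lambda\zeta^2/\xi$, where $\xi=\tfrac{36\sizeS^{1.5}(B+\sqrt{\sizeA})}{\delta^2(1-\gamma)^3}$; its overlap hypothesis is Assumption~\ref{assumption-offline-rl} with $k=1$, since the round-$t$ samples are drawn under $\pi_{d_t}$ while the target is $\GD(P_t,r_t)$. A short computation shows that the stated lower bound on $m_t$, read with failure probability $\delta_1=p/(2t^2)$, forces $\Delta_t\le\zeta\delta$ with probability at least $1-p/(2t^2)$; a union bound over $t\ge1$ (using $\sum_{t\ge1}t^{-2}<2$) gives $\Delta_t\le\zeta\delta$ for all $t$ simultaneously with probability at least $1-p$. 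Condition on this event. The hypothesis $\lambda>(\alpha x_r+\beta x_p)/(1-\zeta-x_r-x_p)$ is precisely $q:=\zeta+q'<1$, while the other two lower bounds on $\lambda$ are carried over from Theorem~\ref{thm:standard-rr}. While $\Psi_t>\delta$ we have $\Delta_t\le\zeta\delta<\zeta\Psi_t$, hence $\Psi_{t+1}\le q\,\Psi_t$; once $\Psi_t\le\delta$, $\Psi_{t+1}\le q'\delta+\zeta\delta=q\delta\le\delta$, so $\Psi$ never returns above $\delta$. Thus $\Psi_t\le q^{t-1}\Psi_1\le\delta$ as soon as $t\ge 1+\ln(\Psi_1/\delta)/\ln(1/q)$, which is the claimed round count, and $\norm{d_t-d_S}_2\le\Psi_t$ finishes the argument.

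The crux is the per-step contraction: one must identify the right mixed-time potential (pairing $d_t$ with the \emph{previous} $P_{t-1},r_{t-1}$) so that the $\GD$-sensitivity constants $\tfrac{\alpha}{\lambda},\tfrac{\beta}{\lambda}$ combine with the identity contributions into $\tfrac{\alpha}{\lambda}+1,\tfrac{\beta}{\lambda}+1$ and the three sensitivity pairs collapse into the single maximum $q'$ — this is what makes $q'$ strictly smaller than the naive contraction factor of Theorem~\ref{thm:standard-rr}. Everything downstream is bookkeeping; the only genuinely new ingredient relative to the exact analysis is the slack parameter $\zeta$, which has to be routed simultaneously through the required size of $\mu$ (hence of $m_t$), the contraction factor $q=q'+\zeta$, and the ``stay below $\delta$'' invariant, producing the stated trade-off between the admissible $\lambda$ and the sample budget.
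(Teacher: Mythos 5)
Your proposal is correct and follows essentially the same route as the paper's proof: the same decomposition of $\norm{d_{t+1}-d_S}_2$ through Lemma~\ref{lem:bound-GD-hGD} and Lemma~\ref{lem:GG_alphabeta}, the same grouping of sensitivity constants into $\bigl(\tfrac{\alpha}{\lambda}+1\bigr)x_r+\bigl(\tfrac{\beta}{\lambda}+1\bigr)x_p$ on the mixed-time triple $(d_t,P_{t-1},r_{t-1})$, the same choice $\mu=\lambda\zeta^2/\xi$ to make the sampling error equal $\zeta\delta$, and the same union bound and biased-contraction argument (your inline ``stay below $\delta$'' invariant is exactly the paper's Lemma~\ref{lem:contraction-case-distinction}). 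The only differences are a harmless index shift in how the recursion is written and that you re-derive the contraction lemma rather than cite it.
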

Theorem~\ref{thm:finite-samples-RR-simple} follows from Theorem~\ref{thm:finite-samples-RR} in the following way.
We set $\zeta = (1-\epsilon)/2$, and $\lambda = \frac{2\epsilon(\alpha+\beta)}{1-\epsilon}$.

Then for the denominator of the number of retrainings, we derive
\begin{align*}
    \zeta+\Big(\frac{\alpha}{\lambda}+1\Big)\frac{\epsilon}{2}+
    \Big(\frac{\beta}{\lambda} +1\Big)\frac{\epsilon}{2}
    =
    \frac{1}{2} + \frac{\epsilon}{2} +\frac{\epsilon(\alpha+\beta)}{2\lambda} 
    =
    \frac{1}{2} + \frac{\epsilon}{2} + \frac{1-\epsilon}{4}
    =\frac{3}{4}+\frac{\epsilon}{4}
\end{align*}
We can bound $\norm{d_1 - d_S}_2$, $\norm{P_0 - P_S}_2$ and $\norm{r_0 - r_S}_2$ like above, to obtain
\begin{align*}
t\geq 
\frac{\ln\left(\frac{\frac{2}{1-\gamma}+\left(1+\sqrt{2}\right)\sqrt{\sizeS\sizeA}}{\delta}\right)}
        {\ln\left(1/\left(\frac{3}{4}+\frac{\epsilon}{4}\right)\right)}
    + 1
\end{align*}
\begin{proof}[Proof of Theorem~\ref{thm:finite-samples-RR}]
    From lemma~\ref{lem:bound-GD-hGD}, we get that with probability $1-\delta_1$
    \begin{align}
        \|\GD(P_t, r_t) - \hGD(d_t, F_t)\|_2 &\le 
        \frac{6\sqrt{ \sizeS^{1.5}(B + \sqrt{\sizeA}) \mu }}{(1-\gamma)^{1.5} } 
        \frac{1}{\sqrt{\lambda}},
        \label{eq:RR-finite-samples-GD-hGD}
    \end{align}
    as long as 
    \begin{align*}
        m_t \ge \frac{1}{\mu^2} \left( \sizeA\ln\left(\frac{2}{\delta_1}\right) +  \ln\left(\frac{12\sizeS}{
            \mu(1-\gamma)^2}\right) 
        + 2 \sizeA \ln\left(\frac{\ln\left(\frac{3\sizeS^2\sizeA B}{\mu(1-\gamma)^2}\right)}{\mu}\right)\right).
    \end{align*}
    If we set
    $\delta_1 = p/2t^2$ in step $t$, we get that event \eqref{eq:RR-finite-samples-GD-hGD} 
    holds with probability at least $1-p/2t^2$ in round $t$.
    Via a union bound over all rounds, we get that event~\eqref{eq:RR-finite-samples-GD-hGD}
    holds with probability at least $1-p$ in all rounds.

    Let $\hat{g}(d_{t+1}, P_t, r_t)$ be the result after one round, i.e.
    \begin{equation*}
    \hat{g}(d_{t+1}, P_t, r_t) = (\hGD(d_{t+1}, F_{t+1}), 
    \cP(d_{t+1}, P_t, r_t), \cR(d_{t+1}, P_t, r_t))\ .
    \end{equation*}
    I.e. it holds that $(d_{t+2}, P_{t+1}, r_{t+1}) = \hat{g}(d_{t+1}, P_t, r_t)$.

    We analyze
    \begin{align}
        \on{dist}(\hat{g}(d_{t+1}, P_t, r_t), (d_S, P_S, r_S)) =
        &\ \ \ \| \hGD(d_{t+1}, F_{t+1}) - d_S\|_2 \nonumber\\
        &+ \|\cP(d_{t+1}, P_t, r_t) - \cP(d_S, P_S, r_S)\|_2 \nonumber
        + \|\cR(d_{t+1}, P_t, r_t) - \cR(d_S, P_S, r_S)\|_2 \nonumber\\
        \begin{split}
        \leq &\ \ \ \| \hGD(d_{t+1}, F_{t+1}) - d_S\|_2 \\
        &+ \iota_d\|d_{t+1}-d_S\|_2 
        + \epsilon_{p}\|P_t-P_S\|_2
        + \epsilon_{r}\|r_t-r_S\|_2
            \label{dist_gh_dsPsrs_first}
        \end{split}
    \end{align}
    where the last inequality is due to Assumption~\ref{assumption_sensitivity}.

    It remains to analyze $\|\hGD(d_{t+1}, F_{t+1}) - d_S\|_2$.
    Using equation~\eqref{eq:RR-finite-samples-GD-hGD}, we see that 
    \begin{align}
        \| \hGD(d_{t+1}, F_{t+1}) - d_S\|_2 &\leq 
        \|\hGD(d_{t+1}, F_{t+1}) - \GD(P_{t+1}, r_{t+1})\|_2 
        + \|\GD(P_{t+1}, r_{t+1}) - d_S\|_2\nonumber\\
        &\leq \frac{6\sqrt{\sizeS^{1.5}(B+\sqrt{\sizeA})\epsilon}}{(1-\gamma)^{1.5}}\frac{1}{\sqrt{\lambda}}
        +\|\GD(P_{t+1}, r_{t+1}) - \GD(P_S, r_S)\|_2 \label{gh_minus_ds}
    \end{align}

    Furthermore we can derive
    \begin{align}
        &\|\GD(P_{t+1}, r_{t+1}) - \GD(P_S, r_S)\|_2
        \leq
        \frac{\alpha}{\lambda}\|r_{t+1}-r_S\|_2
            +\frac{\beta}{\lambda}\norm{P_{t+1}-P_S}_2\nonumber\\
            =&
            \frac{\alpha}{\lambda}\|\cR(d_{t+1}, P_t, r_{t})-\cR(d_S, P_S, r_S)\|_2
            +\frac{\beta}{\lambda}
            \left\|\cP(d_{t+1}, P_t, r_{t})-\cP(d_S, P_S, r_S)\right\|_2\nonumber\\
        \begin{split}
        \leq&
            \frac{\alpha}{\lambda}(
            \iota_r\|d_{t+1}-d_S\|_2 + 
            \epsilon_{r,p}\|P_t-P_S\|_2 + \epsilon_{r,r}\|r_t-r_S\|_2)
            \\
            &+\frac{\beta}{\lambda} 
             ( \iota_p\|d_{t+1}-d_S\|_2 + 
            \epsilon_{p,p}\|P_t-P_S\|_2 + \epsilon_{p,r}\|r_t-r_S\|_2)
            \label{norm_G_t+1_G_S}
        \end{split}
    \end{align}
    where the first inequality follows from lemma~\ref{lem:GG_alphabeta}, in the equality we use 
    the fact that $P_{t+1} = \cP(d_{t+1}, P_t, r_t)$, $r_{t+1} = \cR(d_{t+1}, P_t, r_t)$,
     $P_{S} = \cP(d_{S}, P_S, r_S)$, $r_{S} = \cR(d_S, P_S, r_S)$ 
     and Assumption~\ref{assumption_sensitivity}.

    Inserting \eqref{norm_G_t+1_G_S} into \eqref{gh_minus_ds} and the result 
    into \eqref{dist_gh_dsPsrs_first}, we get

    \begin{align}
        \begin{split}
        \on{dist}(\hat{g}(d_{t+1}, P_t, r_t), (d_S, P_S, r_S))
        \leq& \frac{6\sqrt{\sizeS^{1.5}(B+\sqrt{\sizeA})\mu}}{(1-\gamma)^{1.5}}\frac{1}{\sqrt{\lambda}}\\
            &+ \left(\left(\frac{\alpha}{\lambda}+1\right)\iota_r+\left(\frac{\beta}{\lambda}
            +1\right)\iota_p\right)\|d_{t+1}-d_S\|_2 \\
            &+ \left(\left(\frac{\alpha}{\lambda}+1\right)\epsilon_{r,p}+\left(\frac{\beta}{\lambda}
            +1\right)\epsilon_{p,p}\right)\|P_t-P_S\|_2 \\
            &+ \left(\left(\frac{\alpha}{\lambda}+1\right)\epsilon_{r,r}+\left(\frac{\beta}{\lambda}
            +1\right)\epsilon_{p,r}\right)\|r_t-r_S\|_2
            \label{dist_gh_dsPsrs}
        \end{split}
    \end{align}

    We now introduce a new parameter $\zeta\in (0,1-x_r-x_p)$, which is mentioned in the theorem.
    We set $\mu = \frac{\zeta^2\delta^2\lambda (1-\gamma)^3}{36\sizeS^{1.5}(B+\sqrt{\sizeA})}$.

    This allows us to rewrite \eqref{dist_gh_dsPsrs} into 
    \begin{align}
        \on{dist}(\hat{g}(d_{t+1}, P_t, r_t), (d_S, P_S, r_S))
        \leq& \zeta\delta\nonumber\\
            &+ \left(\left(\frac{\alpha}{\lambda}+1\right)\iota_r+\left(\frac{\beta}{\lambda}+1\right)
            \iota_p\right)\|d_{t+1}-d_S\|_2 \nonumber\\
            &+ \left(\left(\frac{\alpha}{\lambda}+1\right)\epsilon_{r,p}+\left(\frac{\beta}{\lambda}+1\right)
            \epsilon_{p,p}\right)\|P_t-P_S\|_2 \nonumber\\
            &+ \left(\left(\frac{\alpha}{\lambda}+1\right)\epsilon_{r,r}+\left(\frac{\beta}{\lambda}+1\right)
            \epsilon_{p,r}\right)\|r_t-r_S\|_2
            \nonumber\\
            \begin{split}
                \leq& \zeta\delta
                + \left(\left(\frac{\alpha}{\lambda}+1\right)x_{r}+\left(\frac{\beta}{\lambda}+1\right)
                x_{p}\right)\\
                &\cdot\left(\|d_{t+1}-d_S\|_2 +\|P_t-P_S\|_2+\|r_t-r_S\|_2\right)
                \label{dist_gh_dsPsrs_simpler}
            \end{split}
    \end{align}
    Where we select $(x_p, x_r)\in\{(\iota_p, \iota_r), 
    (\epsilon_{p,p}, \epsilon_{r,p}),(\epsilon_{p,r}, \epsilon_{r,r})\}$ to be the pair
    maximizing $\left(\frac{\alpha}{\lambda}+1\right)x_r 
    + \left(\frac{\beta}{\lambda}+1\right)x_p$.

    Note that by this formulation of $\mu$, the bound on $m_t$ becomes
    \begin{align*}
        m_t \ge& 
        \left(\frac{36\sizeS^{1.5}(B+\sqrt{\sizeA})}{\zeta^2\delta^2\lambda (1-\gamma)^3}\right)^2
        \Bigg( \sizeA\ln\left(\frac{4t^2}{p}\right) +  \ln\left(\frac{432\sizeS^{2.5}(B+\sqrt{\sizeA})}{
            \zeta^2\delta^2\lambda(1-\gamma)^5}\right) \\
        &+ 2 \sizeA \ln\left(\frac{36\sizeS^{1.5}(B+\sqrt{\sizeA})\ln\left(\frac{108\sizeS^{3.5}\sizeA B(B+\sqrt{\sizeA})}{
            \zeta^2\delta^2\lambda(1-\gamma)^5}\right)}{\zeta^2\delta^2\lambda (1-\gamma)^3}\right)\Bigg)
    \end{align*}

    We now apply lemma~\ref{lem:contraction-case-distinction} on the sequence 
    $\{(d_{t+1}, P_t, r_t)\}_{t\in\mathbb{N}}$ using~\eqref{dist_gh_dsPsrs_simpler}.
    We can do this, because
    by our assumption, we know that $\lambda > \frac{\alpha x_r+\beta x_p}{1
    -\zeta-x_r-x_p}$ and $1>\zeta+x_r+x_p$, so
    \begin{align*}
        &\zeta+\Big(\frac{\alpha}{\lambda}+1\Big)x_{r}+
        \Big(\frac{\beta}{\lambda} +1\Big)x_{p}\\
        <& \zeta + x_r +x_p + \frac{\alpha x_r 
        (1-\zeta - x_r - x_p)}{\alpha 
        x_r + \beta x_p} 
        + \frac{\beta x_p(1-\zeta - x_r - x_p)}{\alpha 
        x_r + \beta x_p} = 1\ .
    \end{align*}
    The bound stated in the Theorem follows by the application of lemma~\ref{lem:contraction-case-distinction} and 
    the fact that $\norm{d_{t+1} - d_S}_2 \leq \on{dist}((d_{t+1}, P_t, r_t), (d_S, P_S, r_S))$.
\end{proof}

We use of the following argument, which is often used in the performative prediction  setting~\citep{PZM+20,BHK20,MTR23}.
\begin{lemma}\label{lem:contraction-case-distinction}
    Let $(\mathcal{M}, \dist)$ be a metric space
    and $x_1, x_2 \geq 0$ with $x_1 + x_2 < 1$.
    Assume that $\{p_i\}_{i\in\mathbb{N}}$ is a sequence of points in $\mathcal{M}$ such that 
    there exists a unique $p_S\in \mathcal{M}$ with
    \begin{equation*}
    \dist(p_{i+1}, p_S) \leq x_1 \delta + x_2 \dist(p_i, p_S)\ \ \ \text{ for all }i\geq 0\ .
    \end{equation*}
    Then for $n\geq \frac{\ln\left(\dist(p_0, p_S)/\delta\right)}{
    \ln(1/(x_1+x_2))}$, it holds that $\dist(p_n, p_S)\leq \delta $.
\end{lemma}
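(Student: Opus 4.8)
The plan is to prove the bound by a short dichotomy on the behavior of the scalar sequence $\dist(p_i,p_S)$, together with an induction that unrolls the one‑step estimate. The first ingredient I would isolate is an \emph{absorbing ball} observation: whenever $\dist(p_i,p_S)\le \delta$, the hypothesis gives $\dist(p_{i+1},p_S)\le x_1\delta + x_2\dist(p_i,p_S)\le (x_1+x_2)\delta\le \delta$, so by a trivial induction $\dist(p_j,p_S)\le \delta$ for every $j\ge i$. Consequently it is enough to show that $\dist(p_i,p_S)\le \delta$ for at least one index $i\le n$.

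Next I would treat the complementary case: assume $\dist(p_i,p_S)>\delta$ for every $i\in\{0,\dots,n\}$ (if some earlier iterate were already $\le \delta$, the absorbing observation finishes the argument). Under this assumption, for each $i<n$ we may replace the additive term $x_1\delta$ by $x_1\dist(p_i,p_S)$, turning the hypothesis into the pure geometric contraction $\dist(p_{i+1},p_S)\le (x_1+x_2)\,\dist(p_i,p_S)$. Unrolling from $0$ to $n$ yields $\dist(p_n,p_S)\le (x_1+x_2)^n\,\dist(p_0,p_S)$.

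Finally I would substitute the hypothesis on $n$. Since $0<x_1+x_2<1$ (I would dispose of $x_1+x_2=0$ separately, where $\dist(p_1,p_S)=0$ immediately), $\ln(1/(x_1+x_2))>0$, and the bound $n\ge \ln(\dist(p_0,p_S)/\delta)/\ln(1/(x_1+x_2))$ rearranges to $(x_1+x_2)^n\le \delta/\dist(p_0,p_S)$, hence $\dist(p_n,p_S)\le \delta$. This contradicts the standing assumption of this case, so some index $i\le n$ must satisfy $\dist(p_i,p_S)\le \delta$, and the absorbing observation gives $\dist(p_n,p_S)\le \delta$ as desired.

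There is no deep obstacle here; the only point requiring care is that one cannot simply claim $\dist(p_n,p_S)\le (x_1+x_2)^n\dist(p_0,p_S)$ unconditionally — the additive $x_1\delta$ term breaks that — which is precisely why the case split on whether the orbit has entered the $\delta$‑ball is needed. I expect that to be the single place where a careless version of the argument would go wrong.
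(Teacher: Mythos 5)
Your proof is correct and follows essentially the same route as the paper: the paper's proof also splits on whether $\dist(p_i,p_S)$ is above or below $\delta$, obtaining geometric contraction by the factor $x_1+x_2$ in the first case and absorption in the $\delta$-ball in the second, and concludes via the induction $\dist(p_i,p_S)\leq \max\left((x_1+x_2)^i\dist(p_0,p_S),\ \delta\right)$. Your packaging of the same two cases as a dichotomy-plus-contradiction rather than a single max-bound induction is only a cosmetic difference.
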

\begin{proof}
    We see this via the following case distinction.
    Let $i\geq 0$ be arbitrary.
    \begin{description}
    \item{\casebycase{1}{$\dist(p_i, p_S)\geq \delta$}}
        Then 
        \begin{equation*} 
        \dist(p_{i+1}, p_S) \leq \dist(p_i, p_S) (x_1 + x_2)\ .
        \end{equation*}
    \item{\casebycase{2}{$\dist(p_i, p_S)< \delta$}}
        Then 
        \begin{equation*} 
        \dist(p_{i+1}, p_S) \leq \delta (x_1 + x_2)\ .
        \end{equation*}
    \end{description}
    By this case distinction, via induction we get that
    $\dist(p_i, p_S)\leq \max((x_1+x_2)^i\dist(p_0, p_S), \delta)$.
    In particular for $n = \frac{\ln\left(\dist(p_0, p_S)/\delta\right)}{
    \ln(1/(x_1+x_2))}$ 
    it holds that 
    \begin{equation*}
    \dist(p_n, p_S)\leq \max((x_1+x_2)^n\dist(p_0, p_S), \delta) \leq \delta\ .
    \end{equation*}
\end{proof}

\section{Proofs for Delayed Repeated Retraining (DRR) (Section~\ref{sec.drr})}
\label{appdx.sec.drr}
\subsection{DRR in the Exact Setting (Theorem~\ref{thm:delayed_rr_standard-simple})}
\label{appdx.sec.drr-exact}
We show a more general version of the Theorem~\ref{thm:delayed_rr_standard-simple}.
\begin{theorem}\label{thm:delayed_rr_standard}
    Suppose Assumption~\ref{assumption_sensitivity} holds 
    and $\lambda>\frac{2\iota\phi}{1-\epsilon}$, 
    where $\phi := \max(\alpha, \beta)$ and $\alpha, \beta$ as in Definition~\ref{def.alphaBeta}.
    Then with $d_i$ being calculated by DRR in the exact setting, with
    $k=\ln^{-1}\left(\frac{1}{\epsilon}\right)\ln\left(\frac{\distpr}{\delta\iota}\right)$,
    it holds that
    \begin{equation*}
    \norm{d_i - d_S}_2 \leq \delta
    \text{\quad for all } i\geq \ln\left(\frac{\norm{d_0 - d_S}_2}{\delta}\right)/
    \ln\left(\frac{\lambda(1-\epsilon)}{2\phi\iota}\right)\ .
    \end{equation*}
\end{theorem}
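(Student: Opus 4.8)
The plan is to compress the $k$-round ``block'' dynamics of DRR into a one-step contraction for $\|d_i - d_S\|_2$ and then finish with Lemma~\ref{lem:contraction-case-distinction}. Write $g_d(P,r) := (\Pc(d,P,r),\Rc(d,P,r))$, which by Proposition~\ref{prop.contraction} is an $\epsilon$-contraction whose fixed point is the limiting environment $(P_d, r_d)$; recall also that $(P_S,r_S) = (P_{d_S}, r_{d_S})$ and $d_S = \GD(P_S, r_S)$. Since DRR deploys $\pi_{d_i}$ for the $k$ rounds of block $i$, we have $(P_{(i+1)k}, r_{(i+1)k}) = g_{d_i}^{\,k}(P_{ik}, r_{ik})$ and $d_{i+1} = \GD(P_{(i+1)k}, r_{(i+1)k})$, where $(P_0,r_0)$ is the (arbitrary) initial environment.

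First I would apply Lemma~\ref{lem:GG_alphabeta} with $\phi = \max(\alpha,\beta)$ to get
\[
\|d_{i+1} - d_S\|_2 \;\le\; \frac{\phi}{\lambda}\,\dist\!\bigl((P_{(i+1)k}, r_{(i+1)k}),\,(P_S, r_S)\bigr),
\]
and then split the environment distance through $(P_{d_i}, r_{d_i})$ with the triangle inequality. For the first half, the $\epsilon$-contraction of $g_{d_i}$ toward its own fixed point gives $\dist\bigl(g_{d_i}^{\,k}(P_{ik},r_{ik}),(P_{d_i},r_{d_i})\bigr) \le \epsilon^k\,\dist\bigl((P_{ik},r_{ik}),(P_{d_i},r_{d_i})\bigr)$, and summing one-step displacements ($\dist(g_{d_i}^{\,n}(x), g_{d_i}^{\,n+1}(x)) \le \epsilon^n\distpr$ by the definition of $\distpr$) shows $\dist\bigl((P_{ik},r_{ik}),(P_{d_i},r_{d_i})\bigr) \le \distpr/(1-\epsilon)$ for every $i$, the initial environment included. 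With $k = \ln^{-1}(1/\epsilon)\ln(\distpr/(\delta\iota))$ we have $\epsilon^k\distpr \le \delta\iota$, so this half is at most $\delta\iota/(1-\epsilon)$. For the second half, inequality~\eqref{eq.bound_Pr_by_d} (from the proof of Proposition~\ref{prop.fixedpt}) gives $\dist\bigl((P_{d_i},r_{d_i}),(P_S,r_S)\bigr) \le \tfrac{\iota}{1-\epsilon}\|d_i - d_S\|_2$. Combining,
\[
\|d_{i+1} - d_S\|_2 \;\le\; \frac{\phi\iota}{\lambda(1-\epsilon)}\,\bigl(\delta + \|d_i - d_S\|_2\bigr).
\]

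Setting $c := \phi\iota/(\lambda(1-\epsilon))$, the hypothesis $\lambda > 2\iota\phi/(1-\epsilon)$ gives $2c < 1$, so Lemma~\ref{lem:contraction-case-distinction} applied with $x_1 = x_2 = c$ to the sequence $\{d_i\}$ (occupancy measures under $\|\cdot\|_2$, limit point $d_S$) yields $\|d_n - d_S\|_2 \le \delta$ for all $n \ge \ln(\|d_0 - d_S\|_2/\delta)\,/\,\ln(1/(2c)) = \ln(\|d_0 - d_S\|_2/\delta)\,/\,\ln(\lambda(1-\epsilon)/(2\phi\iota))$, which is exactly the statement. The informal Theorem~\ref{thm:delayed_rr_standard-simple} then follows by using Assumption~\ref{assumption-simplicity} to get $\phi = \beta = \calO(\gamma\sizeS^{5/2}/(1-\gamma)^4)$ and $\iota\le\epsilon$, the bound $\|d_0 - d_S\|_2 \le 2/(1-\gamma)$, and a choice of $\lambda$ making $\ln(\lambda(1-\epsilon)/(2\phi\iota))$ a constant.

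The only genuinely delicate point is routing the estimate through the \emph{policy-dependent} limiting environment $(P_{d_i}, r_{d_i})$ rather than trying to contract $g_{d_i}^{\,k}(P_{ik},r_{ik})$ directly toward $(P_S,r_S)$ --- $g_{d_i}$ is a contraction only toward its own fixed point --- together with the observation that the $\distpr/(1-\epsilon)$ diameter bound holds uniformly, even for the arbitrary $(P_0,r_0)$; everything else is triangle-inequality bookkeeping with the two Lipschitz estimates (Lemma~\ref{lem:GG_alphabeta} and \eqref{eq.bound_Pr_by_d}) already in hand. Structurally this parallels the proof of Theorem~4 in~\cite{BHK20}, with Lemma~\ref{lem:GG_alphabeta} standing in for their sensitivity bound on the optimization map.
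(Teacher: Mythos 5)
Your proposal is correct and follows essentially the same route as the paper: Lemma~\ref{lem:GG_alphabeta} to pass from occupancy measures to environment distances, a triangle-inequality split through the policy-dependent limiting environment $(P_{d_i},r_{d_i})$, the $\epsilon$-contraction plus the $\distpr/(1-\epsilon)$ geometric-series bound for the first piece (the paper packages this as Lemma~\ref{lem:Pr_d_Pr_tilde}), the Lipschitz bound on $d\mapsto(P_d,r_d)$ for the second (the paper's Lemma~\ref{lem:fixed_Pr_d}, identical to \eqref{eq.bound_Pr_by_d}), and Lemma~\ref{lem:contraction-case-distinction} with $x_1=x_2=\phi\iota/(\lambda(1-\epsilon))$ to conclude. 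The only difference is that you inline the two auxiliary lemmas rather than citing them.
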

We first discuss how Theorem~\ref{thm:delayed_rr_standard-simple} follows from Theorem~\ref{thm:delayed_rr_standard}.
Assumption~\ref{assumption-simplicity} ensures that $\beta \geq \alpha$, $\epsilon_p = \epsilon_r = \epsilon$ and $\iota \leq \epsilon$.
We bound $\norm{d_0-d_S}_2\leq \frac{2}{1-\gamma}$.
Choosing $\lambda=2e\iota\beta(1-\epsilon)^{-1}$ then provides the desired bounds.

For proving Theorem~\ref{thm:delayed_rr_standard}, we use arguments similar to the ones \cite{BHK20} use for proving Theorem~8.
\begin{proof}[Proof of Theorem~\ref{thm:delayed_rr_standard}]
    Let $P_0$ and $r_0$ be some arbitrary initial probability transition and reward function respectively.
    Denote by $(\tilde{P}_d, \tilde{r}_d)$ the transition
    probability and reward function after $k$
    repeated deployments of $d$.
    
    Note that $d_{i+1} = \GD(\tilde{P}_{d_i}, \tilde{r}_{d_i})$ and $d_S = \GD(P_S, r_S)$.

    lemma~\ref{lem:GG_alphabeta} gives 
    \begin{align}
    \|d_{i+1}-d_S\|_2 &= \|\GD(\tilde{P}_{d_i}, \tilde{r}_{d_i}) - \GD(P_S, r_S)\|_2
        \leq \frac{\alpha}{\lambda}\|\tilde{r}_{d_i} - r_S\|_2
        + \frac{\beta}{\lambda}\|\tilde{P}_{d_i} - P_S\|_2\nonumber\\
        &\leq \frac{\phi}{\lambda}(\dist((\tilde{P}_{d_i}, \tilde{r}_{d_i}),(P_S, r_S))
        \label{phi_dist_di_dS}
    \end{align}
    
    We can decompose 
    \begin{align}
        \dist((\tilde{P}_{d_i}, \tilde{r}_{d_i}), (P_S, r_S))\leq 
        \dist((\tilde{P}_{d_i}, \tilde{r}_{d_i}),(P_{d_i}, r_{d_i}))+
        \dist((P_{d_i}, r_{d_i}),(P_S, r_S))\label{Pr_tilde_Pr_S}
    \end{align}
    The first term of \ref{Pr_tilde_Pr_S} can be bounded by lemma~\ref{lem:Pr_d_Pr_tilde},
    the second term by lemma~\ref{lem:fixed_Pr_d}:
    \begin{align}
        \dist((\tilde{P}_{d_i}, \tilde{r}_{d_i}), (P_S, r_S))\leq 
        \frac{\iota}{1-\epsilon}\delta + 
        \frac{\iota}{1-\epsilon}\|d_i - d_S\|_2
        \label{Pr_tilde_Pr_S_better}
    \end{align}

    Using \ref{phi_dist_di_dS} and \ref{Pr_tilde_Pr_S_better} we get
    \begin{equation}
        \|d_{i+1}-d_S\|_2 \leq 
        \frac{\phi\iota}{\lambda(1-\epsilon)}\delta + 
        \frac{\phi\iota}{\lambda(1-\epsilon)}\|d_i - d_S\|_2
        \label{dist_di_dS_delta}
    \end{equation}
    We can apply lemma~\ref{lem:contraction-case-distinction} on $\{d_i\}_{i\in\mathbb{N}}$, since 
    $\frac{2\phi\iota}{\lambda(1-\epsilon)}<1$ holds due to the assumptions on $\lambda$.
    Lemma~\ref{lem:contraction-case-distinction}, bounds the number of iterations until $d_i$ converges to a $\delta$ radius around $d_S$ and the statement of the theorem follows from this bound.
\end{proof}

We now describe and prove the lemmas used in the proof of Theorem~\ref{thm:delayed_rr_standard}.
\begin{lemma}[similar to lemma~3 of~\cite{BHK20}]
    Suppose Assumption~\ref{assumption_sensitivity} holds.

    Let $d, d'\in D$ be arbitrary occupancy measures and let $P := P_{d}, r := r_{d}$
    (and respectively $P' := P_{d'}, r':=r_{d'}$) be the probability transition and reward functions to which the system
    asymptotically converges, if $d$ (respectively $d'$) is applied repeatedly.
    It holds that
    \begin{align}
        \|P - P'\|_2 + \|r - r'\|_2\leq \frac{\iota}{1-\max(\epsilon_p, \epsilon_r)}\|d-d'\|_2
    \end{align}
    \label{lem:fixed_Pr_d}
\end{lemma}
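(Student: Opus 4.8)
The plan is to use that $(P_d, r_d)$ and $(P_{d'}, r_{d'})$ are the fixed points of the contractions $g_d$ and $g_{d'}$ from Proposition~\ref{prop.contraction}. First I would recall that, since $\epsilon_p,\epsilon_r<1$, deploying $\pi_d$ repeatedly drives the environment to a unique limit $(P_d, r_d)$ with $P_d = \Pc(d, P_d, r_d)$ and $r_d = \Rc(d, P_d, r_d)$ (and analogously for $d'$); these two fixed-point identities are exactly what let me compare the limits of the two processes directly, without tracking whole trajectories.

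Then I would apply the sensitivity Assumption~\ref{assumption_sensitivity} componentwise at the fixed points:
\begin{align*}
\|P_d - P_{d'}\|_2 &= \|\Pc(d, P_d, r_d) - \Pc(d', P_{d'}, r_{d'})\|_2 \\
&\le \iota_p\|d - d'\|_2 + \epsilon_{p,p}\|P_d - P_{d'}\|_2 + \epsilon_{p,r}\|r_d - r_{d'}\|_2,
\end{align*}
and likewise $\|r_d - r_{d'}\|_2 \le \iota_r\|d - d'\|_2 + \epsilon_{r,p}\|P_d - P_{d'}\|_2 + \epsilon_{r,r}\|r_d - r_{d'}\|_2$. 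Summing the two inequalities and regrouping with $\iota = \iota_p + \iota_r$, $\epsilon_p = \epsilon_{p,p} + \epsilon_{r,p}$ and $\epsilon_r = \epsilon_{p,r} + \epsilon_{r,r}$ gives
\[
\|P_d - P_{d'}\|_2 + \|r_d - r_{d'}\|_2 \le \iota\|d - d'\|_2 + \epsilon_p\|P_d - P_{d'}\|_2 + \epsilon_r\|r_d - r_{d'}\|_2.
\]
Bounding $\epsilon_p,\epsilon_r$ by $\epsilon := \max(\epsilon_p,\epsilon_r)$, moving the right-hand occurrences of $\|P_d - P_{d'}\|_2 + \|r_d - r_{d'}\|_2$ to the left, and dividing by $1-\epsilon>0$ yields the claim.

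The only point requiring care is the first step: that the limiting environment exists, is unique, and is a genuine fixed point of $g_d$. This is precisely Proposition~\ref{prop.contraction} together with Banach's fixed point theorem, which applies since the space of $(P,r)$ pairs of a tabular MDP is finite-dimensional and hence complete. If one prefers to avoid the fixed-point language, the same estimate follows by iterating the one-step bound: letting $(P_n, r_n)$ and $(P'_n, r'_n)$ be the environments obtained after $n$ repeated deployments of $d$ and $d'$ from a common initial $(P_0, r_0)$, Assumption~\ref{assumption_sensitivity} gives $\|P_{n+1} - P'_{n+1}\|_2 + \|r_{n+1} - r'_{n+1}\|_2 \le \iota\|d - d'\|_2 + \epsilon\bigl(\|P_n - P'_n\|_2 + \|r_n - r'_n\|_2\bigr)$, and unrolling this recursion with $\sum_{i\ge 0}\iota\epsilon^i = \iota/(1-\epsilon)$ before letting $n\to\infty$ reproduces the bound — the same computation as in the proof of Proposition~\ref{prop.fixedpt}. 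I do not expect any real obstacle beyond this bookkeeping.
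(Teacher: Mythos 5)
Your argument is correct and is essentially identical to the paper's proof: both use the fixed-point identities $P_d = \Pc(d, P_d, r_d)$, $r_d = \Rc(d, P_d, r_d)$, apply Assumption~\ref{assumption_sensitivity}, bound $\epsilon_p, \epsilon_r$ by their maximum, and rearrange. The extra remarks on existence of the limit via Proposition~\ref{prop.contraction} are consistent with how the paper treats this elsewhere.
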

\begin{proof}
    Because of Assumption~\ref{assumption_sensitivity}, it holds that

    \begin{align*}
        &\|P - P'\|_2 + \|r - r'\|_2 \\
        &= \|\Pc(d, P, r)  - \Pc(d', P', r')\|_2
        + \|\Rc(d, P, r)  - \Rc(d', P', r')\|_2 \\
        &\leq \iota\|d-d'\|_2
        + \epsilon_{p} \|P-P'\|_2
        + \epsilon_{r}\|r-r'\|_2\\
        &\leq \iota\|d-d'\|_2
        + \max(\epsilon_{p},\epsilon_{r})( \|P-P'\|_2
        + \|r-r'\|_2)
    \end{align*}
    Where the equality holds because $(P, r)$ and $(P', r')$ are the long-term transition
    probabilities and reward functions for $d$ and $d'$ respectively.
    The inequality holds because of Assumption~\ref{assumption_sensitivity}.

    The statement of the lemma follows from this equation.
\end{proof}

\begin{lemma}[similar to lemma~4 of~\cite{BHK20}]\label{lem:Pr_d_Pr_tilde}
    Assume Assumption~\ref{assumption_sensitivity} holds with 
    $\epsilon_{p}, \epsilon_{r}<1$.
    Given a policy $\pi$, denote by $(\tilde{P}_\pi, \tilde{r}_\pi)$ the transition
    probability and reward function after $k=\ln^{-1}(\frac{1}{\epsilon})\ln\left(\frac{\dist((P_0, r_0),(P_1, r_1))}{\nu}\right)$ deployments of $\pi$, for any initial
    probability transition function~$P_0$ and reward function~$r_0$. 
    It holds that
    \begin{equation*}
    \norm{P_\pi - \tilde{P}_\pi}_2+\norm{r_\pi - \tilde{r}_\pi}_2 \leq \frac{\nu}{1-\epsilon}\ .
    \end{equation*}
\end{lemma}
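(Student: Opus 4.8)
The plan is to recognize this as the standard \emph{a priori} error bound for Banach fixed-point iteration applied to the contraction $g_\pi(P,r) := (\Pc(\pi, P, r), \Rc(\pi, P, r))$. Recall from Proposition~\ref{prop.contraction} that, under Assumption~\ref{assumption_sensitivity} with $\epsilon_p, \epsilon_r < 1$, the map $g_\pi$ is a contraction with Lipschitz coefficient $\epsilon = \max(\epsilon_p, \epsilon_r)$ with respect to the metric $\dist((P,r),(P',r')) = \norm{P-P'}_2 + \norm{r-r'}_2$, and that its unique fixed point is exactly the limiting environment $(P_\pi, r_\pi)$ to which repeated deployments of $\pi$ converge. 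Writing $(P^{(j)}, r^{(j)})$ for the $j$-fold iterate of $g_\pi$ started at $(P_0, r_0)$, we have $(P^{(1)}, r^{(1)}) = (P_1, r_1)$ and $(P^{(k)}, r^{(k)}) = (\tilde{P}_\pi, \tilde{r}_\pi)$, so the claim is a bound on $\dist((P^{(k)},r^{(k)}),(P_\pi, r_\pi))$.

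First I would establish the telescoping estimate. By the contraction property, $\dist((P^{(j+1)}, r^{(j+1)}), (P^{(j)}, r^{(j)})) \leq \epsilon^j \, \dist((P_1, r_1), (P_0, r_0))$ for every $j \geq 0$, hence for every $n \geq k$
\[
\dist\big((P^{(n)}, r^{(n)}), (P^{(k)}, r^{(k)})\big) \;\leq\; \sum_{j=k}^{n-1} \epsilon^j \, \dist\big((P_0, r_0),(P_1, r_1)\big) \;\leq\; \frac{\epsilon^k}{1-\epsilon}\,\dist\big((P_0, r_0),(P_1, r_1)\big).
\]
Letting $n \to \infty$, using $(P^{(n)}, r^{(n)}) \to (P_\pi, r_\pi)$ and continuity of $\dist$, gives
\[
\norm{P_\pi - \tilde{P}_\pi}_2 + \norm{r_\pi - \tilde{r}_\pi}_2 \;=\; \dist\big((P_\pi, r_\pi), (\tilde{P}_\pi, \tilde{r}_\pi)\big) \;\leq\; \frac{\epsilon^k}{1-\epsilon}\,\dist\big((P_0, r_0),(P_1, r_1)\big).
\]

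It then remains to substitute the choice $k = \ln^{-1}(1/\epsilon)\,\ln\!\left(\dist((P_0,r_0),(P_1,r_1))/\nu\right)$: taking logs, $\ln(\epsilon^k) = k\ln\epsilon = -\ln\!\left(\dist((P_0,r_0),(P_1,r_1))/\nu\right)$, so $\epsilon^k = \nu/\dist((P_0,r_0),(P_1,r_1))$, and the right-hand side collapses to $\nu/(1-\epsilon)$, which is the claim. Since $k$ must be an integer one takes the ceiling of the stated expression; as $\epsilon^k$ is decreasing in $k$, this only strengthens the bound, so no extra argument is needed.

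I do not expect a genuine obstacle here — the argument is essentially bookkeeping around a contraction. The only points requiring care are invoking the contraction with the correct constant $\epsilon = \max(\epsilon_p,\epsilon_r)$ (the appendix convention, not $\epsilon_p+\epsilon_r$ nor the main-text $\epsilon$), and justifying the $n \to \infty$ passage via the convergence of the iterates to $(P_\pi,r_\pi)$ guaranteed by Proposition~\ref{prop.contraction}. If one prefers to avoid the limit, an equivalent route is to bound $\dist((P^{(k)},r^{(k)}),(P_\pi,r_\pi)) \leq \epsilon^k\,\dist((P_0,r_0),(P_\pi,r_\pi))$ and then bound $\dist((P_0,r_0),(P_\pi,r_\pi)) \leq \frac{1}{1-\epsilon}\,\dist((P_0,r_0),(P_1,r_1))$ by the same geometric-series argument; either way yields the same conclusion.
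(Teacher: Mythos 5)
Your proof is correct and follows essentially the same approach as the paper: both are the standard a priori error bound for a Banach contraction, differing only in how the $\frac{1}{1-\epsilon}$ factor is extracted (you telescope the geometric series of successive iterates from index $k$ onward, while the paper first contracts $k$ times to the fixed point and then bounds $\dist((P_0,r_0),(P_\pi,r_\pi))\leq \frac{1}{1-\epsilon}\dist((P_0,r_0),(P_1,r_1))$). In fact, the ``equivalent route'' you sketch in your final paragraph is exactly the paper's argument.
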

\begin{proof}
    Using Proposition~\ref{prop.contraction}, we see that
    \begin{align*}
        &\dist((\tilde{P}_\pi, \tilde{r}_\pi), (P_\pi, r_\pi)) (1-\epsilon)\leq
        \epsilon^{k} \dist((P_0, r_0),(P_\pi, r_\pi)) (1-\epsilon)\nonumber\\
        \leq& \epsilon^{k} \dist((P_0, r_0),(P_\pi, r_\pi))
            -\epsilon^{k} \dist((P_1, r_1),(P_\pi, r_\pi))\nonumber\\
        \leq& \epsilon^{k} \dist((P_0, r_0),(P_1, r_1)).
    \end{align*}
    Therefore 
    \begin{align}
        \dist((\tilde{P}_\pi, \tilde{r}_\pi), (P_\pi, r_\pi))\leq
        \frac{\epsilon^{k}}{1-\epsilon} \dist((P_0, r_0),(P_1, r_1)).
        \label{dist_Pr_tilde_Pr_pi}
    \end{align}
    Using $k\geq\ln^{-1}\left(\frac{1}{\epsilon}\right)\ln\left(\frac{\dist((P_0, r_0),(P_1, r_1))}{\nu}\right)$,
    we get $\epsilon^k \leq \frac{\nu}{\dist((P_0, r_0),(P_1, r_1))}$.
    If we insert this into \ref{dist_Pr_tilde_Pr_pi}, we get the desired bound.
\end{proof}

\subsection{DRR with Finite Samples (Theorem~\ref{thm:finite-samples-drr-standard-simple})}
\label{appdx.sec.drr-finite}
We show a more general version of the Theorem~\ref{thm:finite-samples-drr-standard-simple}.
\begin{theorem}
    Let $d_i$ be computed by finite sample DRR with $k = 
    \ln^{-1}\left(\frac{1}{\epsilon}\right)\ln\left(\frac{5\distpr}{
    \delta \iota }\right)$.
    Suppose Assumption~\ref{assumption_sensitivity} holds and Assumption~\ref{assumption-offline-rl}  holds for $k$ and parameter $B$.
    Furthermore assume $\lambda>\max\left(5.76 \xi\mu, \xi\mu + \frac{\iota\phi }{(1-\epsilon)} 
    \left(1+\frac{1}{4.8\xi\mu}\right) \right),$
    with $\xi$ as defined above.
    Furthermore assume that
    \begin{align*}
        m_i \geq \frac{1}{\mu^2} \Bigg(& \sizeA\ln\left(\frac{4i^2}{p}\right) +  \ln\left(\frac{12\sizeS}{(1-\gamma)^2\mu}\right)
        \\
        &+ 2 \sizeA \ln\left(\frac{\ln\left(\frac{3\sizeS^2\sizeA B}{(1-\gamma^2)\mu}\right)}{\mu}\right)\Bigg)\ .
    \end{align*}
    Then for any $\delta > 0$, we have
    \begin{equation*}
    \norm{d_i - d_S}_2 \leq \delta 
    \end{equation*}
    \begin{equation*}\quad\text{\quad for all } i\geq \frac{\ln\left(\frac{\norm{d_1-d_S}_2}{\delta}\right)}{ 
    \ln\left(\left( \sqrt{\frac{\xi\mu}{\lambda}} 
    + \frac{1.2\iota\phi }{\lambda(1-\epsilon)}\right)^{-1}
    \right)} + 1.
    \end{equation*}
    Here $\mu>0$ can be chosen arbitrarily and defines a trade-off between the
    conditions on the number
    of samples $m_i$ and on the regularization factor $\lambda$.
    \label{thm:finite-samples-drr-standard}
\end{theorem}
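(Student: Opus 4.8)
The plan is to follow the proof of the exact DRR result (Theorem~\ref{thm:delayed_rr_standard}) essentially verbatim, replacing the one place where the learner used the exact environment by the finite-sample estimation bound of Lemma~\ref{lem:bound-GD-hGD}. Write $(\tilde P_{d_i},\tilde r_{d_i})$ for the environment reached after the $k$ deployments of $\pi_{d_i}$ in the $i$-th block (starting from $(P_{ik},r_{ik})$). In the exact setting $d_{i+1}=\GD(\tilde P_{d_i},\tilde r_{d_i})$, whereas here $d_{i+1}=\hGD(d_i,F_{(i+1)k})$, where $F_{(i+1)k}$ is the set of samples drawn in the last round $ik+k$ of block $i$, i.e. from $(\tilde P_{d_i},\tilde r_{d_i})$ with behavioral occupancy $\bar d_{(i+1)k}$. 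The driving decomposition is
\[
\norm{d_{i+1}-d_S}_2\ \le\ \norm{\hGD(d_i,F_{(i+1)k})-\GD(\tilde P_{d_i},\tilde r_{d_i})}_2\ +\ \norm{\GD(\tilde P_{d_i},\tilde r_{d_i})-d_S}_2 .
\]

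\textbf{Statistical term.} First I would control the first summand. Assumption~\ref{assumption-offline-rl} with parameter $k$ supplies, uniformly over all $i$, exactly the overlap hypothesis of Lemma~\ref{lem:bound-GD-hGD}: taking $t=k$ there gives $\GD(\tilde P_{d_i},\tilde r_{d_i})(s,a)/\bar d_{(i+1)k}(s,a)\le B$, since $\GD(\tilde P_{d_i},\tilde r_{d_i})$ is the regularized optimum after $k$ deployments of $\pi_{d_i}$ from $(P_{ik},r_{ik})$ and $\bar d_{(i+1)k}$ is the occupancy of $\pi_{d_i}$ under $\tilde P_{d_i}$. Applying Lemma~\ref{lem:bound-GD-hGD} with precision parameter $\mu$ and confidence $\delta_1=p/(2i^2)$ in retraining round $i$ — for which the stated lower bound on $m_i$ is precisely the required sample count, using $\ln(2/\delta_1)=\ln(4i^2/p)$ — and taking a union bound over all $i\ge1$ (so $\sum_i\delta_1<p$), we obtain: with probability at least $1-p$, for every $i$,
\[
\norm{\hGD(d_i,F_{(i+1)k})-\GD(\tilde P_{d_i},\tilde r_{d_i})}_2\ \le\ \frac{6\sqrt{\sizeS^{1.5}(B+\sqrt{\sizeA})\mu}}{(1-\gamma)^{1.5}\sqrt{\lambda}}\ =\ \delta\sqrt{\xi\mu/\lambda}.
\]

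\textbf{Exact-DRR term and conclusion.} Next I would bound the second summand by the exact-DRR chain, run with $\nu=\delta\iota/5$ in Lemma~\ref{lem:Pr_d_Pr_tilde}. Since the one-step drift satisfies $\dist((P_{ik},r_{ik}),(P_{ik+1},r_{ik+1}))\le\distpr$ (Definition~\ref{def.distPr}), the choice $k=\ln^{-1}(1/\epsilon)\ln(5\distpr/(\delta\iota))$ makes $\dist((\tilde P_{d_i},\tilde r_{d_i}),(P_{d_i},r_{d_i}))\le\frac{\delta\iota}{5(1-\epsilon)}$, independently of the block's starting environment; combining with Lemma~\ref{lem:fixed_Pr_d} ($\dist((P_{d_i},r_{d_i}),(P_S,r_S))\le\frac{\iota}{1-\epsilon}\norm{d_i-d_S}_2$) and then Lemma~\ref{lem:GG_alphabeta} yields
\[
\norm{\GD(\tilde P_{d_i},\tilde r_{d_i})-d_S}_2\ \le\ \frac{\phi}{\lambda}\,\dist\big((\tilde P_{d_i},\tilde r_{d_i}),(P_S,r_S)\big)\ \le\ \frac{\phi\iota}{5\lambda(1-\epsilon)}\delta+\frac{\phi\iota}{\lambda(1-\epsilon)}\norm{d_i-d_S}_2 .
\]
Adding the two bounds gives the one-dimensional recursion $\norm{d_{i+1}-d_S}_2\le x_1\delta+x_2\norm{d_i-d_S}_2$ with $x_1=\sqrt{\xi\mu/\lambda}+\frac{\phi\iota}{5\lambda(1-\epsilon)}$ and $x_2=\frac{\phi\iota}{\lambda(1-\epsilon)}$, so $x_1+x_2=\sqrt{\xi\mu/\lambda}+\frac{1.2\,\phi\iota}{\lambda(1-\epsilon)}$. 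A short computation shows the hypotheses on $\lambda$ — the bound $\lambda>5.76\,\xi\mu$ forcing $\sqrt{\xi\mu/\lambda}<1/2.4$, and the second bound absorbing the $\phi\iota$-term into the remaining slack below $1$ — imply $x_1+x_2<1$. Then Lemma~\ref{lem:contraction-case-distinction}, applied to the sequence $\{d_{i+1}\}_{i\ge0}$ (i.e. starting the recursion at $d_1$), gives $\norm{d_i-d_S}_2\le\delta$ for all $i\ge1+\ln(\norm{d_1-d_S}_2/\delta)/\ln(1/(x_1+x_2))$, which is the claimed bound.

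\textbf{Main obstacle.} The skeleton is not where the difficulty lies: it is exactly the exact-DRR fixed-point contraction with the offline-RL estimation error plugged in. The care goes into the bookkeeping — checking that Assumption~\ref{assumption-offline-rl} with parameter $k$ is precisely the coverage hypothesis needed for the last round of every block, uniformly in $i$; matching the $m_i$ lower bound to the sample requirement of Lemma~\ref{lem:bound-GD-hGD} with $\delta_1=p/(2i^2)$ and pushing the union bound through the infinitely many retrainings; and the constant-chasing that turns the $\lambda$-hypotheses into $x_1+x_2<1$. I expect this last step to need the most attention.
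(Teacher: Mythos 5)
Your proposal is correct and follows essentially the same route as the paper's proof: the same decomposition into a statistical term (Lemma~\ref{lem:bound-GD-hGD} with $\delta_1=p/(2i^2)$ and a union bound) and an exact-DRR contraction term with $\nu=0.2\,\delta\iota$, yielding the identical recursion $x_1\delta+x_2\norm{d_i-d_S}_2$ and the same use of Lemma~\ref{lem:contraction-case-distinction}. The only cosmetic difference is that the paper packages your Lemma~\ref{lem:Pr_d_Pr_tilde}/Lemma~\ref{lem:fixed_Pr_d}/Lemma~\ref{lem:GG_alphabeta} chain into Lemma~\ref{lem:combine_lems}, and makes the final constant-check explicit via Lemma~\ref{lem:bound_sqrtx_a_b} with $a^2=\xi\mu$ and $y=2.4$, which is exactly the "short computation" you allude to.
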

We first show how Theorem~\ref{thm:finite-samples-drr-standard-simple} follows from Theorem~\ref{thm:finite-samples-drr-standard}.
Assumption~\ref{assumption-simplicity} ensures that $\beta \geq \alpha$, $\epsilon_p = \epsilon_r = \epsilon$, $\iota \leq \epsilon$ and $\beta\geq \alpha$

For Theorem~\ref{thm:finite-samples-drr-standard-simple}, we use $\mu = \frac{\lambda}{10\xi}$.
We bound
\begin{align*}
    \lambda > \max\left(1,
   \frac{40\iota\beta}{9(1-\epsilon)},
   \frac{1.2\iota\beta}{(1-\epsilon)
   \left(\frac{\epsilon}{4}+\frac{3}{4} - \frac{1}{\sqrt{10}}\right)
   }
   \right)
\end{align*}
We then derive the bound on $i$ in the following way.
For the denominator of the bound on $i$, we can then derive
$\sqrt{\frac{\xi\mu}{\lambda}} + \frac{1.2\iota\phi}{\lambda(1-\epsilon)}  =
    \frac{1}{\sqrt{10}} + \frac{1.2\iota\beta}{\lambda(1-\epsilon)}  \leq \frac{\epsilon+3}{4}$.
For the numerator of the bound on $i$, we use $\norm{d_1 - d_S}_2 \leq \frac{2}{1-\gamma}$.

The bound on the number of samples follows from the fact that 
    $\frac{1}{\mu^2} =\frac{100\xi^2}{\lambda^2}
    = {\mathcal{O}}\left(
    \frac{\sizeS^3(B+\sqrt{\sizeA})^2}{\delta^4(1-\gamma)^6\lambda^2} \right)$.
\begin{proof}[Proof of Theorem~\ref{thm:finite-samples-drr-standard}]
    In general, we bound
    \begin{align}
        \|d_{i+1} - d_S\|_2 \leq
        \underbrace{\|d_{i+1} - d_{i+1}^*\|_2}_{T_1} +
        \underbrace{\|d_{i+1}^* - d_S\|_2}_{T_2}
        \label{eq:delayed-finite-samples-decomp}
    \end{align}
    where $d_{i+1}^*$ is the occupancy measure optimizing 
 the exact Lagrangian after $k$ deployments of $\pi_{d_i}$, i.e. $d_{i+1}^* = \GD(P_{(i+1)\cdot k }, r_{(i+1)\cdot k })$.

    We can apply lemma~\ref{lem:bound-GD-hGD}, 
    since Assumption~\ref{assumption-offline-rl} holds.
    Let $F_t$ be the samples of round $t$.
    By setting $\delta_1 = p/2i^2$ we get with probablility at least $1-p/2i^2$ in step $i$,
    \begin{align}
        T_1 = \norm{\hGD(d_i, F_{(i+1)\cdot k }) - \GD(P_{(i+1)\cdot k }, r_{(i+1)\cdot k })}_2 \leq 
        \frac{6\sqrt{ \sizeS^{1.5}(B + \sqrt{\sizeA}) \mu }}{(1-\gamma)^{1.5} } 
        \frac{1}{\sqrt{\lambda}},
        \label{eq:bound-T1-finite-delayed}
    \end{align}
    if
    \begin{align*}
        \abs{F_{(i+1)\cdot k}}\ge \frac{1}{\mu^2} \left( \sizeA\ln(4i^2/p) +  \ln(12\sizeS/((1-\gamma)^2\mu)) 
        + 2 \sizeA \ln(\ln(3\sizeS^2\sizeA B/((1-\gamma^2)\mu)) /\mu)\right)
    \end{align*}
    By a union bound over all rounds, we get that \eqref{eq:bound-T1-finite-delayed} holds with
    probability $1-p$ for every $i\in\mathbb{N}$.
    
    To bound $T_2$, we can apply lemma~\ref{lem:combine_lems} with 
    parameter $\nu$, to get
    \begin{align*}
    T_2 =
    \norm{d_{i+1}^* - d_S}_2 \leq \frac{\phi\nu}{\lambda(1-\epsilon)} +
    \frac{\phi\iota}{\lambda(1-\epsilon)}\norm{d_i - d_S}_2\ .
    \end{align*}
    We determine $\nu$ later in the proof.

    Inserting those bounds on $T_1$ and $T_2$ into~\eqref{eq:delayed-finite-samples-decomp},
    we get
    \begin{align}
        \|d_{i+1} - d_S\|_2 &\leq 
        \frac{6\sqrt{ \sizeS^{1.5}(B + \sqrt{\sizeA}) \mu }}{(1-\gamma)^{1.5} } 
        \frac{1}{\sqrt{\lambda}}
        + \frac{\phi\nu}{\lambda(1-\epsilon)} 
        + \frac{\phi \iota}{\lambda(1-\epsilon)}\norm{d_i - d_S}_2
        =x_1\delta + x_2 \norm{d_i - d_S}_2
        \label{eq:fin_samples_last_d_di+1dS_by_didS}
    \end{align}
    where we define
    $x_1 := \frac{6\sqrt{ \sizeS^{1.5}(B + \sqrt{\sizeA}) \mu }}{(1-\gamma)^{1.5} \sqrt{\lambda}\delta} 
        + \frac{\phi\nu}{\lambda(1-\epsilon)\delta}$
        and $x_2 := \frac{\phi \iota}{\lambda(1-\epsilon)}$.
        
    Note that we can write $x_1+x_2$ as follows
    \begin{align}
    x_1+x_2 &= \frac{6\sqrt{ \sizeS^{1.5}(B + \sqrt{\sizeA}) \mu }}{(1-\gamma)^{1.5} \sqrt{\lambda}\delta} 
        + 
        \frac{(\frac{\nu}{\delta} + \iota)\phi }{\lambda(1-\epsilon)}
        \label{eq:edrr_last_rnd_x1_x2}
    \end{align}
    We now derive conditions on $\lambda$ for when $x_1+x_2<1$, because then we can apply
    lemma~\ref{lem:contraction-case-distinction} to bound the iterations until which 
    the sequence of $\{d_i\}_{i\in\mathbb{N}_{\geq 1}}$ converges.
    To this end, we can apply lemma~\ref{lem:bound_sqrtx_a_b} with $x=\lambda$, 
    $a=\frac{6\sqrt{ \sizeS^{1.5}(B + \sqrt{\sizeA}) \mu }}{(1-\gamma)^{1.5} 
    \delta}$, $b=\frac{(\frac{\nu}{\delta} + \iota)\phi }{1-\epsilon}$ and $y=2.4$, 
    to get that $x_1 + x_2 < 1$ holds, if 
    \begin{equation*}\lambda>\max\left(5.76 a^2, a^2 + \frac{(\frac{\nu}{\delta} + \iota)\phi }{1.2(1-\epsilon)} 
    +\frac{(\frac{\nu}{\delta} + \iota)\phi }{2.4^2(1-\epsilon)a^2} \right)\ .\end{equation*}
    We get the bound for $\lambda$ stated in the Theorem by setting $\nu = 0.2\delta\iota$.
    
    Thus we can apply lemma~\ref{lem:contraction-case-distinction} on
    the sequence $\{d_i\}_{i\in \mathbb{N}_{\geq 1}}$, to see
    that if 
    $
    i\geq \ln\left(\frac{\norm{d_1-d_S}_2}{\delta}\right)/ \ln\left(1/(x_1 + x_2)\right)+1$,
    it holds that $\norm{d_i - d_S}_2 \leq \delta$.
    The Theorem then follows from substituting $x_1+x_2$ using 
    equation~\eqref{eq:edrr_last_rnd_x1_x2} and $\nu = 0.2\delta\iota$.
\end{proof}

For the proof, we used the following lemmas.
\begin{lemma}
    Let $a, b, x \geq 0$ and $y>0$ be arbitrary.
    If $x>\max(y^2 a^2, a^2 + \frac{2b}{y} + \frac{b}{y^2a^2})$,
    it holds that 
    \begin{align}
    1 > \frac{a}{\sqrt{x}} + \frac{b}{x}\ .
    \label{eq:sqrt_x_bound_lem}
    \end{align}
    \label{lem:bound_sqrtx_a_b}
\end{lemma}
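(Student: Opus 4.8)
The plan is to recast the bound as a quadratic inequality in $\sqrt{x}$. We may assume $a>0$ (if $a=0$ the hypothesis is vacuous when $b>0$ and the conclusion is trivial when $b=0$), so the right-hand side of the hypothesis, and hence $x$ itself, is strictly positive. Multiplying $\frac{a}{\sqrt{x}}+\frac{b}{x}<1$ through by $x$ and writing $t:=\sqrt{x}>0$, the claim becomes $t^{2}-at-b>0$. The upward parabola $t\mapsto t^{2}-at-b$ has largest root $t_{\star}:=\frac{a+\sqrt{a^{2}+4b}}{2}\ge 0$, and $t^{2}-at-b>0$ holds for every $t>t_{\star}$. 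Hence it suffices to prove $\sqrt{x}>t_{\star}$, i.e.\ $x>t_{\star}^{2}$.

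The remaining task is then purely elementary: show that $\max\left\{y^{2}a^{2},\,a^{2}+\frac{2b}{y}+\frac{b}{y^{2}a^{2}}\right\}\ge t_{\star}^{2}$, where $t_{\star}^{2}=\frac{1}{2}\big(a^{2}+2b+a\sqrt{a^{2}+4b}\big)$. I would first record the crude bound $t_{\star}^{2}\le a^{2}+2b$, which after squaring the equivalent inequality $a\sqrt{a^{2}+4b}\le a^{2}+2b$ reduces to $0\le 4b^{2}$. Then I would split into cases according to which quantity in the maximum dominates, equivalently according to the size of $b$ relative to $a^{2}$. If $y\le 1$, then $\frac{2b}{y}\ge 2b$, so the second quantity is already $\ge a^{2}+2b\ge t_{\star}^{2}$. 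If $y>1$ and $b\le y(y-1)a^{2}$, then $t_{\star}^{2}\le a^{2}+2b\le a^{2}+(y^{2}-1)a^{2}=y^{2}a^{2}$, so the first quantity works; note that $b\le y(y-1)a^{2}$ is exactly $ya\ge t_{\star}$ after clearing the square root and squaring. In the remaining regime ($y>1$ and $b>y(y-1)a^{2}$) I would return to the exact form of $t_{\star}^{2}$ and use an estimate of $\sqrt{a^{2}+4b}$ tailored to $b$ being large relative to $a^{2}$, matching $t_{\star}^{2}$ term by term against $a^{2}+\frac{2b}{y}+\frac{b}{y^{2}a^{2}}$; the $\frac{b}{y^{2}a^{2}}$ summand is precisely what absorbs the contribution when $a$ is small.

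I expect this last regime to be the only real obstacle: the quadratic reduction and the first two cases are immediate, whereas matching an upper bound for $t_{\star}^{2}$ against the asymmetric expression $a^{2}+\frac{2b}{y}+\frac{b}{y^{2}a^{2}}$ is the one place where the precise shape of the two quantities in the maximum, and the interplay between $a^{2}$, $b$ and $y$, must be used carefully. Once the right case split and the right estimate of $\sqrt{a^{2}+4b}$ are fixed, each sub-case should collapse to a one-line computation.
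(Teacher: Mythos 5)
Your reduction is the right one and matches the paper's in spirit: setting $t=\sqrt{x}$, the claim is $t^{2}-at-b>0$, which holds exactly when $t$ exceeds the largest root $t_{\star}=\tfrac{a+\sqrt{a^{2}+4b}}{2}$, so everything hinges on showing that the stated threshold dominates $t_{\star}^{2}=\tfrac{1}{2}\bigl(a^{2}+2b+a\sqrt{a^{2}+4b}\bigr)$. Your first two cases are correct. But the third regime ($y>1$ and $b>y(y-1)a^{2}$) is not merely delicate — it cannot be closed, because the lemma as printed is false. Take $a=1$, $y=4$, $b=1000$, $x=600$: then $y^{2}a^{2}=16$ and $a^{2}+\tfrac{2b}{y}+\tfrac{b}{y^{2}a^{2}}=1+500+62.5=563.5$, so the hypothesis $x>\max(\cdot,\cdot)$ holds, yet $\tfrac{a}{\sqrt{x}}+\tfrac{b}{x}=\tfrac{1}{\sqrt{600}}+\tfrac{1000}{600}\approx 1.71>1$. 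In this regime the required inequality $a^{2}+\tfrac{2b}{y}+\tfrac{b}{y^{2}a^{2}}\ge t_{\star}^{2}$ fails because the left side grows only linearly in $b$ with slope $\tfrac{2}{y}+\tfrac{1}{y^{2}a^{2}}$, which can be below $1$, while $t_{\star}^{2}\sim b$.

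The source of the trouble is a typo that the paper's own proof exposes: multiplying $\tfrac{a}{\sqrt{x}}+\tfrac{b}{x}<1$ by $\sqrt{x}$ and squaring gives $x>a^{2}+\tfrac{2ab}{\sqrt{x}}+\tfrac{b^{2}}{x}$, but the paper drops the square and writes $\tfrac{b}{x}$, which propagates into the hypothesis as $\tfrac{b}{y^{2}a^{2}}$ instead of $\tfrac{b^{2}}{y^{2}a^{2}}$. With the corrected hypothesis $x>\max\bigl(y^{2}a^{2},\,a^{2}+\tfrac{2b}{y}+\tfrac{b^{2}}{y^{2}a^{2}}\bigr)$ no case analysis is needed at all: the second quantity is exactly $\bigl(a+\tfrac{b}{ya}\bigr)^{2}$, and $x>y^{2}a^{2}$ gives $\sqrt{x}>ya$, hence
\begin{align*}
\frac{a}{\sqrt{x}}+\frac{b}{x}<\frac{1}{\sqrt{x}}\left(a+\frac{b}{ya}\right)<1 ,
\end{align*}
the last step using $\sqrt{x}>a+\tfrac{b}{ya}$. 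I would recommend proving that corrected statement (your framework specializes to it immediately, since $\bigl(a+\tfrac{b}{ya}\bigr)^{2}\ge t_{\star}^{2}$ whenever $ya\le t_{\star}$, and $y^{2}a^{2}\ge t_{\star}^{2}$ otherwise) rather than continuing to search for an estimate in the third regime.
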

\begin{proof}
    When we multiply both sides of~\eqref{eq:sqrt_x_bound_lem} with $\sqrt{x}$ and square 
    the resulting term, we see
    that~\eqref{eq:sqrt_x_bound_lem} is equivalent to
    \begin{align}\label{eq.xa22absqrtx}
        x > a^2 + 2\frac{ab}{\sqrt{x}} + \frac{b}{x}\ .
    \end{align}
    If we assume that $x>y^2a^2$, we get
    \begin{equation*}
        a^2 + 2\frac{ab}{\sqrt{x}} + \frac{b}{x} < a^2 + \frac{2b}{y} + \frac{b}{y^2a^2}<x,
    \end{equation*}
    This shows that equation~\ref{eq.xa22absqrtx} and thus also equation~\ref{eq:sqrt_x_bound_lem} hold.
\end{proof}
\begin{lemma}
    Suppose Assumption~\ref{assumption_sensitivity} holds with $\iota_d,\epsilon_p, 
    \epsilon_r<1$.
    Let $d$ be some occupancy measure and $P_0, r_0$ be some initial probability transition and reward
    functions.
    Let $P_t, r_t$ be the probability transition and reward function after $t>0$ deployments
    of $\pi_d$.
    Then for $d' = \GD(P_k, r_k)$ with $k=\ln^{-1}\left(\frac{1}{\epsilon}\right)\ln\left(\frac{\dist((P_0, r_0),(P_1, r_1))}{\nu}\right)$, it holds that
    \begin{equation*}
    \norm{d' - d_S} \leq \frac{\phi\nu}{\lambda(1-\epsilon)} +
    \frac{\phi\iota}{\lambda(1-\epsilon)}\norm{d - d_S}_2\ ,
    \end{equation*}
    where $\phi = \max(\alpha, \beta)$ and $\alpha,\beta$ from Definition~\ref{def.alphaBeta}.
    \label{lem:combine_lems}
\end{lemma}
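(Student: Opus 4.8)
The plan is to obtain the bound by chaining together the three lemmas just established. First I would note that $d' = \GD(P_k, r_k)$ is the one-step update against the environment reached after $k$ repeated deployments of $\pi_d$ from $(P_0, r_0)$, whereas $d_S = \GD(P_S, r_S)$ with $(P_S, r_S) = (P_{d_S}, r_{d_S})$ the limiting environment of repeated deployment of $\pi_{d_S}$ --- this last equality holds precisely because $d_S$ is a stable occupancy measure. Applying Lemma~\ref{lem:GG_alphabeta} with the two argument pairs $(P_k, r_k)$ and $(P_S, r_S)$ gives
\[
\norm{d' - d_S}_2 \leq \frac{\alpha}{\lambda}\norm{r_k - r_S}_2 + \frac{\beta}{\lambda}\norm{P_k - P_S}_2 \leq \frac{\phi}{\lambda}\, \dist((P_k, r_k),(P_S, r_S)),
\]
with $\phi = \max(\alpha,\beta)$, which reduces the task to bounding the environment distance on the right.

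Next I would insert the limiting environment $(P_d, r_d)$ of repeated deployment of $\pi_d$ and use the triangle inequality,
\[
\dist((P_k, r_k),(P_S, r_S)) \leq \dist((P_k, r_k),(P_d, r_d)) + \dist((P_d, r_d),(P_S, r_S)),
\]
bounding the two pieces separately. The first piece measures how far the $k$-th iterate of the response map is from its fixed point $(P_d, r_d)$, so Lemma~\ref{lem:Pr_d_Pr_tilde} applies with $\pi = \pi_d$: with the choice $k = \ln^{-1}(1/\epsilon)\ln(\dist((P_0,r_0),(P_1,r_1))/\nu)$ taken in the statement, this piece is at most $\nu/(1-\epsilon)$. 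The second piece is the distance between the limiting environments of the two occupancy measures $d$ and $d_S$, so Lemma~\ref{lem:fixed_Pr_d} applies and bounds it by $\frac{\iota}{1-\max(\epsilon_p,\epsilon_r)}\norm{d-d_S}_2 = \frac{\iota}{1-\epsilon}\norm{d-d_S}_2$, recalling the appendix convention $\epsilon = \max(\epsilon_p,\epsilon_r)$.

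Putting the three inequalities together yields
\[
\norm{d' - d_S}_2 \leq \frac{\phi}{\lambda}\Bigl(\frac{\nu}{1-\epsilon} + \frac{\iota}{1-\epsilon}\norm{d-d_S}_2\Bigr) = \frac{\phi\nu}{\lambda(1-\epsilon)} + \frac{\phi\iota}{\lambda(1-\epsilon)}\norm{d-d_S}_2,
\]
which is exactly the claimed bound. I do not expect any genuine obstacle here; the argument is essentially a bookkeeping exercise of checking that the hypotheses of the three cited lemmas are met verbatim. The only points needing a moment of care are verifying that $(P_S, r_S)$ is indeed the fixed point of the response map under $\pi_{d_S}$ --- so that Lemma~\ref{lem:fixed_Pr_d} can be invoked with its second occupancy measure equal to $d_S$ --- and that the value of $k$ used here coincides exactly with the one required by Lemma~\ref{lem:Pr_d_Pr_tilde}; both hold by construction.
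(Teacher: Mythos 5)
Your proof is correct and takes essentially the same approach as the paper's: both pass through the limiting environment $(P_d, r_d)$ as the intermediate point and combine Lemmas~\ref{lem:GG_alphabeta}, \ref{lem:Pr_d_Pr_tilde}, and \ref{lem:fixed_Pr_d}. The only cosmetic difference is that you apply Lemma~\ref{lem:GG_alphabeta} once and then the triangle inequality in environment space, whereas the paper first splits $\norm{d'-d_S}_2 \leq \norm{d'-\GD(P_d,r_d)}_2 + \norm{\GD(P_d,r_d)-d_S}_2$ in occupancy-measure space and applies Lemma~\ref{lem:GG_alphabeta} to each term; the resulting bound is identical.
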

\begin{proof}
Note that 
\begin{align}
\norm{d' - d_S}_2 \leq \norm{d' - \GD(P_d, r_d)}_2 + \norm{\GD(P_d, r_d) - d_S}_2
\label{eq:bound_dprime_dS}
\end{align}
Using lemmas~\ref{lem:GG_alphabeta} and~\ref{lem:Pr_d_Pr_tilde}, we can bound
\begin{align}
\norm{d' - \GD(P_d, r_d)}_2 \leq \frac{\phi}{\lambda}\dist((P_k, r_k), (P_d, r_d))
\leq \frac{\phi\nu}{\lambda(1-\epsilon)}\ .
\label{eq:bound_dprime_GDPdrd}
\end{align}
Furthermore, by lemmas~\ref{lem:GG_alphabeta} and~\ref{lem:fixed_Pr_d} we see that 
\begin{align}
\norm{\GD(P_d, r_d) - d_S}_2 \leq \frac{\phi}{\lambda}\dist((P_d, r_d), (P_S, r_S))
\leq \frac{\phi\iota}{\lambda(1-\epsilon)}\norm{d - d_S}_2\ .
\label{eq:bound_GDPdrd_dS}
\end{align}
Inserting~\eqref{eq:bound_dprime_GDPdrd} and~\eqref{eq:bound_GDPdrd_dS} 
into~\eqref{eq:bound_dprime_dS} gives the desired bound.
\end{proof}

\section{Proof for MDRR (Theorem~\ref{thm:edrr-mixed-response-simple})}
\label{appdx.sec.mdrr}
\subsection{Preparations for the Proof}
For our derivations, we need an exact version of the empirical Lagrangian~\eqref{eq:lagrangian-finite-mdrr}.
To this end, consider the following optimization problem, 
which works with multiple reward and probability transition functions from different rounds.
\begin{align}
    \label{eq:optimization-mixed}
        \max_{d\ge 0}\ &  \sum_{s,a} d(s,a) \bar{r}_i(s,a) - \frac{\lambda}{2}\norm{d}_2^2\\
       \textrm{s.t. } & \sum_a d(s,a) = \rho(s) + \gamma \cdot \sum_{s',a} d(s',a) \bar{P}_i(s',a,s)\ \forall s\nonumber
\end{align}
where we define
$\bar{r}_i := \sum_{\subround =1}^{k} \frac{m_{ik+\subround }}{\numsam_i} r_{ik+\subround }$ and
$\bar{P}_i := \sum_{\subround =1}^{k} \frac{m_{ik+\subround }}{\numsam_i} P_{ik+\subround }$ where $m_{ik+\subround }\geq 0$ is arbitrary 
and $\numsam_i = \sum_{\subround =1}^{k} m_{ik+\subround }$.
Equation~\eqref{eq:optimization-mixed} defines an objective for a mixture of probability transition
and reward functions of the rounds in which the learner repeatedly deployed $\pi_{d_i}$.
Each reward and probability transition is weighted by a weight $\frac{m_{ik+\subround }}{\numsam_i}$.
This optimization problem does not use finite samples, but the true reward
and probability transition functions. 

We can now show that the Lagrangian of~\eqref{eq:optimization-mixed} looks 
similar to the empirical Lagrangian~\eqref{eq:lagrangian-finite-mdrr} of MDRR.
\begin{align}
    &\calL^M(d, h, i) = d^\top \bar{r}_i- \frac{\lambda}{2} \norm{d}_2^2 
    + \sum_s h(s) \bigg(
    -\sum_a d(s,a) + \rho(s) 
    + \gamma \cdot \sum_{s',a} d(s',a) \bar{P}_i(s|s',a)
     \bigg)
     \nonumber\\&
    = d^\top \sum_{\subround =1}^{k} \frac{m_{ik+\subround }}{\numsam_i}{r}_i^\subround - \frac{\lambda}{2} \norm{d}_2^2 
    + \sum_s h(s) \bigg(
    -\sum_a d(s,a) + \rho(s) 
    + \gamma \cdot \sum_{s',a} d(s',a) \sum_{\subround =1}^{k} \frac{m_{ik+\subround }}{\numsam_i}{P}_{i\cdot k+\subround }(s|s',a)
     \bigg)
     \nonumber\\&
    = - \frac{\lambda}{2} \norm{d}_2^2 
    + \sum_s h(s) \rho(s) + \sum_{\subround =1}^{k} \sum_{s,a}\frac{m_{ik+\subround }}{\numsam_i} d(s,a) \bigg(
    r_{ik+\subround }(s,a) - h(s) + \gamma\sum_{s'}P_{i\cdot{}k+\subround }(s'|s,a)h(s')
    \bigg)\nonumber
    \\
    \begin{split}
    &= - \frac{\lambda}{2} \norm{d}_2^2 
    + \sum_s h(s) \rho(s) 
    + \sum_{\subround =1}^{k} \sum_{s,a} \bar{d}_{ik+\subround }(s,a) \frac{m_{ik+\subround }}{\numsam_i}\frac{d(s,a)}{\bar{d}_{ik+\subround }(s,a)} 
    \bigg(
    r_{ik+\subround }(s,a) - h(s) + \gamma\sum_{s'}P_{i\cdot{}k+\subround }(s'|s,a)h(s')
    \bigg)\label{eq:LD_formulation_bar_d}
    \end{split}
\end{align}
We then show a kind of closeness of $\calL^M$ and $\hat{\calL}^M$ in the following lemma.
The lemma is a more general version of lemma~10 from \cite{MTR23}.
The proof ideas follow theirs.
\begin{lemma}\label{lem:emperical-lagrangian-drr}
    Suppose we are given an occupancy measure $d$ with $\max_{s,a} d(s,a) / \bar{d}_{ik+\subround }(s,a) \leq B$ for all 
    $\subround \in [k]$, an $\norm{h}_2\leq H$
    and $m_{ik+\subround } = w_\subround  \numsam_i$ with 
    $\numsam_i \geq \frac{1}{\eta^2}
    \left(\sizeA\ln\left( \frac{2 
    \ln(\sizeS\sizeA BH/\eta) }{\eta}\right) +\ln\left(1+\frac{2H}{\eta}\right)
    +\frac{\ln\left(2/\delta_1\right)}{\sizeS}
    \right)$. Furthermore assume $w_\subround \geq 0$ and $\sum_{\subround =1}^{k} w_\subround =1$.
    Then the following
    bound holds with probability at least $1-\delta_1$.
    
    \begin{align*}
    &\abs{\hat{\calL}^M(d,h; i) - \calL^M(d,h; i)} \le 
    \frac{6(H+1)\sqrt{\sizeS}(B+\sqrt{\sizeA})\eta}{1-\gamma}\ .
    \end{align*}
    for any $\eta>0$.
\end{lemma}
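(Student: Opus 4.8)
The plan is to recognize that, thanks to the rewriting in \eqref{eq:LD_formulation_bar_d}, the claim is purely a concentration statement about a sample average, and then to import the argument of Lemma~10 of \cite{MTR23} with the \emph{total} sample count $\numsam_i$ playing the role of their per-round count. First I would cancel the two terms $-\tfrac{\lambda}{2}\norm{d}_2^2 + \sum_s h(s)\rho(s)$, which appear identically in $\hat{\calL}^M(d,h;i)$ and $\calL^M(d,h;i)$. Writing $\xi_j \defeq \frac{d(s_j,a_j)}{\bar{d}_{i k + \subround(j)}(s_j,a_j)}\cdot\frac{r_j - h(s_j) + \gamma h(s'_j)}{1-\gamma}$ for the $j$-th sample of the pooled set $F_{ik+1}\cup\dots\cup F_{ik+k}$ (which has $\numsam_i$ elements, since $m_{ik+\subround}=w_\subround\numsam_i$ and $\sum_\subround w_\subround = 1$), and using $w_\subround / m_{ik+\subround} = 1/\numsam_i$, the residual of $\hat{\calL}^M$ is exactly $\tfrac{1}{\numsam_i}\sum_{j=1}^{\numsam_i}\xi_j$. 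The draws are independent, and by the computation leading to \eqref{eq:LD_formulation_bar_d} (using that the round-$\subround$ sampling distribution over $(s,a)$ is $(1-\gamma)\bar{d}_{ik+\subround}$, which cancels the $\tfrac{1}{1-\gamma}$) one has $\E\big[\tfrac{1}{\numsam_i}\sum_j \xi_j\big] = \calL^M(d,h;i) - \big(-\tfrac{\lambda}{2}\norm{d}_2^2 + \sum_s h(s)\rho(s)\big)$. Hence $\hat{\calL}^M(d,h;i) - \calL^M(d,h;i) = \tfrac{1}{\numsam_i}\sum_j(\xi_j - \E\xi_j)$, and it remains to bound this deviation.

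Next I would split $\xi_j = \xi_j^{(r)} - \xi_j^{(h)} + \xi_j^{(h')}$ according to the reward, the $h(s_j)$ term, and the $\gamma h(s'_j)$ term. The first two summands are deterministic functions of $(s_j,a_j)$ and the round index, so $\tfrac1{\numsam_i}\sum_j(\xi_j^{(r)}-\E\xi_j^{(r)})$ and the analogue for $\xi^{(h)}$ are controlled by how well the empirical state--action frequencies track $(1-\gamma)\bar{d}_{ik+\subround}$; the bounds $d(s,a)/\bar{d}_{ik+\subround}(s,a)\le B$, $\sum_{s,a}\bar{d}_{ik+\subround}(s,a)=1/(1-\gamma)$, $|r|\le 1$ and $|h(s)|\le\norm{h}_2\le H$ bound the relevant ranges, after which a Hoeffding bound per state together with the Cauchy--Schwarz aggregation $|\sum_s h(s)\,\varepsilon_s| \le \norm{h}_2\,\|\varepsilon\|_2$ over the $\sizeS$ states produces the $(H+1)\sqrt{\sizeS}(B+\sqrt{\sizeA})$ shape.

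For the transition term $\xi^{(h')}$ I would condition on the realized state--action counts; given them, within each cell $(s,a,\subround)$ the next states are i.i.d.\ from $P_{ik+\subround}(\cdot\mid s,a)$, so the cell average of $h(s')$ concentrates around $\sum_{s'}P_{ik+\subround}(s'\mid s,a)h(s')$ at rate proportional to $H/\sqrt{n_{\subround}(s,a)}$. Following \cite{MTR23} I would make this uniform by taking an $\eta$-net of the ball $\{\norm{h}_2\le H\}$ (and of the feasible occupancy measures, if one wants the statement uniform in $d$) and union-bounding over state--action pairs and rounds --- this is precisely where the $\sizeA\ln(\cdot)$ and $\ln(1+2H/\eta)$ terms in the hypothesis on $\numsam_i$ enter --- and then round $h$ back to the nearest net point, absorbing the discretization error into another $O(\eta)$ term; combining with the concentration of $n_{\subround}(s,a)$ around $(1-\gamma)\bar{d}_{ik+\subround}(s,a)\,m_{ik+\subround}$ closes this part.

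Finally, plugging the assumed lower bound on $\numsam_i$ into the Hoeffding and union bounds turns each of the three contributions into a constant multiple of $\tfrac{(H+1)\sqrt{\sizeS}(B+\sqrt{\sizeA})\eta}{1-\gamma}$, and choosing the split of constants so that their sum is at most $\tfrac{6(H+1)\sqrt{\sizeS}(B+\sqrt{\sizeA})\eta}{1-\gamma}$ gives the claim, with $\delta_1$ distributed across the union bound. I expect the main obstacle to be the transition term: not the passage from one round to $k$ rounds (which is free once one observes that the pooled estimator is an average of $\numsam_i$ independent samples of a single mixture quantity, the weights entering only through this count), but the bookkeeping of the $\eta$-net over $h$, the conditioning on the random per-cell counts, and matching the numerical constants and the exact form of the sample-size requirement to those in Lemma~10 of \cite{MTR23}.
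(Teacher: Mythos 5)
Your plan is essentially correct and its skeleton matches the paper's proof: you identify the two key facts the paper relies on --- that $\E[\hat{\calL}^M]=\calL^M$ and that the weights $w_\subround$ enter only through $w_\subround/m_{ik+\subround}=1/\numsam_i$, so the pooled estimator is an average of $\numsam_i$ independent bounded terms --- and you use the same $\eta$-net-plus-union-bound-plus-discretization structure to get uniformity, which is exactly where the $\sizeA\ln(\cdot)$ and $\ln(1+2H/\eta)$ terms in the sample-size hypothesis come from. Where you diverge is the concentration step. The paper does not decompose the summand: it bounds the whole term $\frac{1}{1-\gamma}\frac{d(s,a)}{\bar{d}_\subround(s,a)}(r-h(s)+\gamma h(s'))$ by $\frac{B(H(1+\gamma)+1)}{1-\gamma}$ and applies Hoeffding once to the full sum at each net point (taking a multiplicative net over $d$ as well as an additive net over $h$, and bounding the rounding error via Lemma~11 of \cite{MTR23}). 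Your three-way split into reward, $h(s)$, and $\gamma h(s')$ contributions, with empirical-frequency tracking plus Cauchy--Schwarz for the first two and conditioning on per-cell counts for the third, would also work but buys nothing here and creates extra bookkeeping the paper avoids: in particular, the per-cell argument needs lower bounds on the realized counts $n_\subround(s,a)$, which fail for state--action pairs with tiny $\bar{d}_{ik+\subround}(s,a)$; you would have to truncate those cells separately (the paper faces an analogous issue only when building its multiplicative net over $d$, and handles it by discarding cells with $\bar{d}_\subround(s,a)<\eta/(4\sizeS\sizeA BH)$ at an $O(\eta)$ cost). If you adopt the paper's single Hoeffding application on the undecomposed bounded summand, the rest of your outline goes through and recovers the stated constants.
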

\begin{proof}
    For this proof to simplify notation, we drop the `$i\cdot k$' in the subscript, and only use $\subround $, since we always consider the same iteration $i$.

    Note that $\frac{m^\subround }{M} = w_\subround $.

    We see that the expected value of the $\hat{\calL}^M$ equals $\calL^M$ as follows
    \begin{align*}
        &\E[\hat{\calL}^M(d, h, i)] \\
        =& - \frac{\lambda}{2} \norm{d}_2^2 
    + \sum_s h(s) \rho(s) + \sum_{\subround =1}^{k} \sum_{l=1}^{\abs{F_\subround }}\frac{1}{\abs{F_\subround }}w_\subround \E_{(s,a,s')\sim M^\subround }\left[
    \frac{d(s,a)}{\bar{d}_\subround (s,a)}\cdot\frac{r_\subround (s,a) - h(s) + \gamma h(s')}{1-\gamma}
    \right]\\
    =& 
    - \frac{\lambda}{2} \norm{d}_2^2 
    + \sum_s h(s) \rho(s) + \sum_{\subround =1}^{k} \sum_{s,a} \bar{d}_\subround (s,a) w_\subround \frac{d(s,a)}{\bar{d}_\subround (s,a)} 
    \left(r_\subround (s,a) - h(s) + \gamma\sum_{s'} P_\subround (s'|s,a)h(s')
    \right)\\
    =& \calL^M(d,h,i)
    \end{align*}
    where we use the notation $(s,a,s')\sim M^\subround $ to indicate that the tuple $(s,a,s')$ is distributed via the MDP in round $\subround $
    of this iteration.
    
    By the assumptions of this lemma, we see that
    \begin{align*}
        \abs{\frac{1}{1-\gamma}\frac{d(s, a)}{\bar{d}_\subround (s,a)}(r_\subround (s,a) - h(s) + \gamma 
        h(s'))}\leq \frac{B(H(1+\gamma)+1)}{1-\gamma}
    \end{align*}   
    By this, we can apply Hoeffding's inequality to get
    \begin{align*}
        \P\left(\abs{\hat\calL(d,h,i) - \calL(d,h,i)}\geq \frac{2B(H(1+\gamma)+1)}{1-\gamma}
        \sqrt{\sum_{\subround =1}^{k}\frac{w_\subround^2}{m_t} \frac{\ln(2/\delta_1)}{2}}
        \right)\leq \delta_1
    \end{align*}
    We now extend this bound to any occupancy measure $d$ and 
    $h\in \mathcal{H} = \{h:\norm{h}_2\leq H\}$.
    In order to do this, we first construct an $\eta$-net for the set of possible $h$s, 
    $\mathcal{H} := \{h\in\mathbb{R}^{\sizeS}:\norm{h}_2\leq H\}$ and for the set of possible occupancy measures
    $\mathcal{D}$ which formally equals $\mathcal{D} = \left\{d: \frac{d(s,a)}{\bar{d}_\subround (s,a)}\leq B\ \ \text{for all}\ (s,a,\subround )
    \in \sizeS\times \sizeA\times [k]\right\}$.
    
    For $\mathcal{H}$, we can use lemma~5.2 from~\cite{Ver10} to get a set $\mathcal{H}_\eta$ of size
    at most $\left(1+\frac{2H}{\eta}\right)^{\sizeS}$, such that for all
    $h\in \mathcal{H}$, there exists an $h_\eta\in \mathcal{H}_\eta$ for which it holds that 
    $\norm{h-h_\eta}_2\leq \eta$.
    
    For $\mathcal{D}$ we choose a multiplicative $\eta$-net as follows.
    For each pair $(s,a)$ we choose grid points $\bar{d}_\subround (s,a)$, $(1+\eta)\bar{d}_\subround (s,a)$, \dots, 
    $(1+\eta)^p \bar{d}_\subround (s,a)$ with $p=\frac{\ln(B/\bar{d}_\subround (s,a))}{\ln(1+\eta)}$.
    Note that $\bar{d}_\subround $ could be arbitrarily small, but without loss of generality, we can assume that 
    $\bar{d}_\subround (s,a)\geq \frac{\eta}{4\sizeS\sizeA BH}$. This is because if we ignore all $(s,a,\subround )$ tuples in 
    the sum in the second line of term \eqref{eq:LD_formulation_bar_d}, 
    the error we introduce to $\calL^M$ is at most $\eta/4$.
    Using this insight, we can thus choose $p=\frac{2\ln(\sizeS\sizeA BH/\eta)}{\ln(1+\eta)}$.
    So we can choose an $\eta$-net $\mathcal{D}_\eta$ of size at most $\left(\frac{2 \ln(\sizeS\sizeA BH/\eta) }{\ln(1+\eta)}\right)^{\sizeS\sizeA} \le \left( \frac{2 \ln(\sizeS\sizeA BH/\eta) }{\eta}\right)^{\sizeS\sizeA}$, such that
    for every $d\in\mathcal{D}$, there exists an $\tilde{d}\in \mathcal{D}_\eta$ such that $\frac{d(s,a)}{\tilde{d}(s,a)}\leq B$.
    
    With a union bound over the elements of $\mathcal{H}_\eta$ and $\mathcal{D}_\eta$, we have that 
    for all $d\in\calD_\eta$ and $h\in\calH_\eta$, 
    \begin{align}
    \begin{split}
    &\P\Bigg(\abs{\hat\calL^M(d,h,i) - \calL^M(d,h,i)}\\
    &\geq \frac{B(H(1+\gamma)+1)}{1-\gamma}
    \sqrt{\sum_{\subround =1}^{k}\frac{w_\subround^2}{m_\subround } \left(\sizeS\sizeA\ln\left( \frac{2 
    \ln(\sizeS\sizeA BH/\eta) }{\eta}\right) +\sizeS\ln\left(1+\frac{2H}{\eta}\right)
    +\ln\left(\frac{2}{\delta_1}\right)\right) }
    \Bigg)\leq \delta_1
    \label{eq:eps-net-bound-lagrangian}
    \end{split}
    \end{align}
    
    We next extend this bound to all elements in $\calD$ and $\calH$.
    For every $d\in \calD$ and $h\in\calH$ there exits $\tilde{d}\in\calD_\eta$ and 
    $\tilde{h}\in\calH_\eta$ such that $\max_{s,a}d(s,a)/\tilde{d}(s,a)\leq \eta$
    and $\norm{h-\tilde{h}}_2\leq \eta$.
    Let $\calL^M_0(d,h;i) =\calL^M(d,h; i) +\frac{\lambda}{2} \norm{d}_2^2 - \sum_s h(s) \rho(s)$ 
    and $\hat\calL_0^M(d,h;i)$ analogously.

    Then
    \begin{align}
    \begin{split}
    &\abs{\hat{\calL}^M(d,h; i) - \calL^M(d,h; i)} \le \abs{\hat{\calL}_0^M(d,h; i) - \hat{\calL}^M_0(\tilde{d},\tilde{h}; i)}\\
    &+ \abs{\hat{\calL}^M(\tilde{d},\tilde{h}; i) - \calL^M(\tilde{d},\tilde{h}; i)} + \abs{{\calL}_0^M(\tilde{d},\tilde{h}; i) - {\calL}_0^M({d},{h}; i)}
    \label{eq:bound_estimation_lagrangian}
    \end{split}
    \end{align}
    Using lemma~11 from~\cite{MTR23} we can bound
    \begin{align*}
        \abs{\hat{\calL}^M_0(d,h; i) - \hat{\calL}^M_0(\tilde{d},\tilde{h}; i)}=&
    \sum_{\subround =1}^{k} w_\subround \Bigg|\Bigg(\sum_{(s,a,r,s')\in F_\subround }
    \frac{d(s,a)}{\bar{d}_\subround (s,a)}
    \frac{r - h(s) + \gamma\sum_{s'} h(s')}{m^\subround (1-\gamma)}\\
    &-\sum_{(s,a,r,s')\in F_\subround }
    \frac{\tilde{d}(s,a)}{\bar{d}_\subround (s,a)}
    \frac{r - \tilde{h}(s) + \gamma\sum_{s'} \tilde{h}(s')}{m^\subround (1-\gamma)}\Bigg)\Bigg|\\
    \leq& \frac{4BH\sqrt{\sizeS} \eta}{1-\gamma}
    \end{align*}
    and
    \begin{align*}
    \abs{{\calL}_0^M(d,h; i) - {\calL}_0^M(\tilde{d},\tilde{h}; i)}
    =& \Bigg|\sum_{s,a} d(s,a) 
        \Bigg(\underbrace{\sum_{\subround =1}^{k} w_\subround  r_\subround (s,a)}_{=\bar{r}(s,a)} - h(s) 
        + \gamma\sum_{s'}h(s')\underbrace{\sum_{\subround =1}^{k} w_\subround  P_\subround (s'|s,a)}_{=\bar{P}(s'|s,a)}
        \Bigg)\\
        &-\sum_{s,a} \tilde{d}(s,a) 
        \Bigg(\underbrace{\sum_{\subround =1}^{k} w_\subround  r_\subround (s,a)}_{=\bar{r}(s,a)} - \tilde{h}(s) 
        + \gamma\sum_{s'}\tilde{h}(s')\underbrace{\sum_{\subround =1}^{k} w_\subround  P_\subround (s'|s,a)}_{=\bar{P}(s'|s,a)}
        \Bigg)\Bigg|\\
        \leq& \frac{6 \sqrt{\sizeS\sizeA} H \eta}{1-\gamma}\ .
    \end{align*}
    Inserting these bounds and the bound from~\eqref{eq:eps-net-bound-lagrangian} 
    into~\eqref{eq:bound_estimation_lagrangian}, we get
    \begin{align*}
    &\abs{\hat{\calL}^M(d,h; i) - \calL^M(d,h; i)} \le \\
    &\frac{B(H(1+\gamma)+1)}{1-\gamma}
    \sqrt{\sum_{\subround =1}^{k}\frac{w_\subround^2}{m_\subround } \left(\sizeS\sizeA\ln\left( \frac{2 
    \ln(\sizeS\sizeA BH/\eta) }{\eta}\right) +\sizeS\ln\left(1+\frac{2H}{\eta}\right)
    +\ln\left(\frac{2}{\delta_1}\right)\right) }\\
    &+\frac{4BH\sqrt{\sizeS} \eta}{1-\gamma}
    +\frac{6 \sqrt{\sizeS\sizeA} H \eta}{1-\gamma}
    \end{align*}
    In particular, if we use $m_{ik+\subround }=\numsam_i w_\subround $, we get
    \begin{align*}
    &\abs{\hat{\calL}^M(d,h; i) - \calL^M(d,h; i)} \le \\
    &\frac{2B(H+1)}{1-\gamma}
    \sqrt{\frac{1}{\numsam_i}\left(\sizeS\sizeA\ln\left( \frac{2 
    \ln(\sizeS\sizeA BH/\eta) }{\eta}\right) +\sizeS\ln\left(1+\frac{2H}{\eta}\right)
    +\ln\left(\frac{2}{\delta_1}\right)\right)} \\
    &+\frac{4BH\sqrt{\sizeS} \eta}{1-\gamma}
    +\frac{6 \sqrt{\sizeS\sizeA} H \eta}{1-\gamma}   
    \end{align*}
    If we now choose $\numsam_i \geq \frac{1}{\eta^2}
    \left(\sizeA\ln\left( \frac{2 
    \ln(\sizeS\sizeA BH/\eta) }{\eta}\right) +\ln\left(1+\frac{2H}{\eta}\right)
    +\frac{\ln\left(2/\delta_1\right)}{\sizeS}
    \right)$, we get 
    \begin{align*}
    &\abs{\hat{\calL}^M(d,h; i) - \calL^M(d,h; i)} \le 
    \frac{6(H+1)\sqrt{\sizeS}(B+\sqrt{\sizeA})\eta}{1-\gamma}\ .
    \end{align*}
\end{proof}

We need some further definitions and then go on to show the theorem on MDRR.

\begin{definition}
    We define $\MR_{\boldsymbol{w}}^k\left(d_i, P_{i\cdot k}, r_{i\cdot k}\right)$ to be the solution to~\eqref{eq:optimization-mixed}.

     Furthermore we define $\what{\MR}_{\boldsymbol{w}}^k\left(d_i, P_{i\cdot k}, r_{i\cdot k}\right)$ to be the occupancy measure $d$ optimizing the empirical Lagrangian for MDRR, i.e.
    \begin{align}\label{eq:mixed-response-lagrangian-optimization}
        \max_d \min_h \hat{\calL}^M(d,h,i)\ .
    \end{align}
    
    After deploying $\pi_{d_i}$ for $k$ rounds, the learner updates its occupancy measure by
    $d_{i+1} =\what{\MR}_{\boldsymbol{w}}^k\left(d_i, P_{i\cdot k}, r_{i\cdot k}\right)$.
\end{definition}

\subsection{Formal Statement and Proof of Theorem~\ref{thm:edrr-mixed-response-simple} (MDRR)}
\label{appdx.sec.mdrr-theorem-sec}
We now show a more general version of the Theorem~\ref{thm:edrr-mixed-response-simple}.
\begin{theorem}
    Let $d_i$ be computed by MDRR with
    $k\geq 
    \frac{\ln\left(\frac{\epsilon(v-1)}{v\epsilon-1}\right)+\ln\left(
    \frac{5(1-\epsilon)\distpr}{\iota\delta}\right)
    }{\ln\left(1/\epsilon\right)}$.
    Suppose Assumption~\ref{assumption_sensitivity} holds and Assumption~\ref{assumption-offline-rl}  holds for $k$ and parameter $B$.
    Furthermore assume that $\lambda > \max\left(6.08 \xi\eta, \frac{19}{18}\xi\eta + 
    \frac{\phi\iota}{1-\epsilon}\left(1+\frac{1}{5.0\bar{6}
    \xi\eta}\right)
    \right)$ with 
    $\xi$ being defined as above.

    Further let
    $\numsam_i \geq \frac{1}{\eta^2}
    \bigg(\sizeA\ln\bigg( \frac{2 
    \ln(\sizeS\sizeA BH/\eta) }{\eta}\bigg) 
    +\ln\bigg(1+\frac{2H}{\eta}\bigg)
    +\frac{\ln(4i^2/p)}{\sizeS}
    \bigg)$
    be the total number of samples in round $i$,
    where the number of samples is given by $m_{ik+\subround } = w_\subround  \numsam_i$ 
    with  $w_\subround  = \frac{v-1}{v^k-1}v^{\subround -1}$ and $v>\frac{1}{\epsilon}$.
    Then for any $\delta > 0$ and $p>0$, with probability 
    at least $1-p$, 
    \begin{equation*}
    \norm{d_i - d_S}_2 \leq \delta\end{equation*} 
    \begin{equation*} \text{\quad for all } i \geq \frac{\ln(\norm{d_1, d_S}_2/\delta)}{\ln\left(1/\left(
    \sqrt{\frac{19\xi\eta}{18\lambda}}
    +\frac{1.2\phi\iota}{\lambda(1-\epsilon)}\right)\right)} +1\ .
    \end{equation*}
    Here $\eta>0$ and $v>\frac{1}{\epsilon}$ can be chosen arbitrarily.
    \label{thm:edrr-mixed-response}
\end{theorem}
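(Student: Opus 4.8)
The argument parallels the finite-sample DRR proof (Theorem~\ref{thm:finite-samples-drr-standard}), with the single-round empirical Lagrangian replaced by the mixed one $\hat\calL^M$ of~\eqref{eq:lagrangian-finite-mdrr} and its exact counterpart $\calL^M$, the Lagrangian of the mixed optimization problem~\eqref{eq:optimization-mixed}. The plan is to establish a one-step recursion of the form $\norm{d_{i+1}-d_S}_2 \le x_1\delta + x_2\norm{d_i-d_S}_2$ with $x_1+x_2<1$, and then invoke Lemma~\ref{lem:contraction-case-distinction}. Writing $d_{i+1}=\what\MR_{\boldsymbol{w}}^{k}(d_i,P_{ik},r_{ik})$, I would split
\[
\norm{d_{i+1}-d_S}_2 \le \underbrace{\bigl\|\what\MR_{\boldsymbol{w}}^{k}(d_i,P_{ik},r_{ik})-\MR_{\boldsymbol{w}}^{k}(d_i,P_{ik},r_{ik})\bigr\|_2}_{T_1}+\underbrace{\bigl\|\MR_{\boldsymbol{w}}^{k}(d_i,P_{ik},r_{ik})-d_S\bigr\|_2}_{T_2},
\]
the first term being the empirical-versus-exact error and the second the convergence of the exact mixed update toward the stable point. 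As in the DRR case, the general theorem then specializes to Theorem~\ref{thm:edrr-mixed-response-simple} by Assumption~\ref{assumption-simplicity}, the substitution $H=3\sizeS/(1-\gamma)^2$, and suitable choices of the free parameters $\eta$ (of order $\lambda/\xi$) and $\nu$ (of order $\iota\delta/(1-\epsilon)$), together with $\norm{d_1-d_S}_2\le 2/(1-\gamma)$.

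For $T_1$, note that $\calL^M$ still carries the $-\tfrac{\lambda}{2}\norm{d}_2^2$ term and is therefore $\lambda$-strongly concave in $d$, while Lemma~\ref{lem:emperical-lagrangian-drr} gives $\bigl|\hat\calL^M(d,h;i)-\calL^M(d,h;i)\bigr|\le \frac{6(H+1)\sqrt{\sizeS}(B+\sqrt{\sizeA})\eta}{1-\gamma}$ uniformly over the relevant domain (which is non-vacuous because Assumption~\ref{assumption-offline-rl} holds for $k$). Feeding this uniform gap into the same dual-variable/strong-concavity machinery that \cite{MTR23} use to pass from a Lagrangian gap to a bound on the optimizers — the chain underlying Lemma~\ref{lem:bound-GD-hGD} — yields, with probability $1-\delta_1$, a bound $T_1\le \sqrt{\tfrac{19\xi\eta}{18\lambda}}\,\delta$ (the factor $19/18$ and the explicit constants absorbing the additive $\eta$-slack and the $(H+1)$ dependence in Lemma~\ref{lem:emperical-lagrangian-drr}). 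Choosing $\delta_1=p/(2i^2)$ in round $i$ and union-bounding over $i$ makes all these events hold simultaneously with probability at least $1-p$.

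For $T_2$, the key observation is that the mixed problem~\eqref{eq:optimization-mixed} is literally the regularized RL problem for the mixed MDP $(\bar P_i,\bar r_i)$ with $\bar P_i=\sum_{g=1}^{k}w_g P_{ik+g}$ and $\bar r_i=\sum_{g=1}^{k}w_g r_{ik+g}$, so $\MR_{\boldsymbol{w}}^{k}(d_i,P_{ik},r_{ik})=\GD(\bar P_i,\bar r_i)$. Lemma~\ref{lem:GG_alphabeta} and the triangle inequality for $\dist$ then give $T_2\le \tfrac{\phi}{\lambda}\bigl(\dist((\bar P_i,\bar r_i),(P_{d_i},r_{d_i}))+\dist((P_{d_i},r_{d_i}),(P_S,r_S))\bigr)$, where $\phi=\max(\alpha,\beta)$; the second distance is at most $\tfrac{\iota}{1-\epsilon}\norm{d_i-d_S}_2$ by Lemma~\ref{lem:fixed_Pr_d}. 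The genuinely new estimate is the first distance: by convexity of $\dist$ in its first pair of arguments and the contraction Proposition~\ref{prop.contraction}, $\dist((P_{ik+g},r_{ik+g}),(P_{d_i},r_{d_i}))\le \epsilon^{g}\dist((P_{ik},r_{ik}),(P_{d_i},r_{d_i}))\le \epsilon^{g}\distpr/(1-\epsilon)$ (the last step by telescoping successive one-step differences, each $\le\epsilon^{j}\distpr$), whence
\[
\dist((\bar P_i,\bar r_i),(P_{d_i},r_{d_i}))\le \frac{\distpr}{1-\epsilon}\sum_{g=1}^{k}w_g\epsilon^{g}=\frac{\distpr}{1-\epsilon}\cdot\frac{(v-1)\epsilon}{v^{k}-1}\cdot\frac{(v\epsilon)^{k}-1}{v\epsilon-1}<\frac{\distpr}{1-\epsilon}\cdot\frac{(v-1)\epsilon}{v\epsilon-1}\,\epsilon^{k},
\]
using $v\epsilon>1$ to get $\frac{(v\epsilon)^{k}-1}{v^{k}-1}<\epsilon^{k}$. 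The stated $k\ge\bigl(\ln\frac{\epsilon(v-1)}{v\epsilon-1}+\ln\frac{5(1-\epsilon)\distpr}{\iota\delta}\bigr)/\ln(1/\epsilon)$ makes this at most $\nu/(1-\epsilon)$ with $\nu=\iota\delta/(5(1-\epsilon))$.

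Combining the two bounds gives $\norm{d_{i+1}-d_S}_2\le x_1\delta+x_2\norm{d_i-d_S}_2$ with $x_1=\sqrt{\tfrac{19\xi\eta}{18\lambda}}+\tfrac{\phi\nu}{\lambda(1-\epsilon)\delta}$ and $x_2=\tfrac{\phi\iota}{\lambda(1-\epsilon)}$; Lemma~\ref{lem:bound_sqrtx_a_b} applied with $x=\lambda$ and the appropriate $a,b,y$ shows $x_1+x_2<1$ precisely under the stated lower bound on $\lambda$, and Lemma~\ref{lem:contraction-case-distinction} applied to $\{d_i\}_{i\ge 1}$ then yields $\norm{d_i-d_S}_2\le\delta$ for all $i\ge \ln(\norm{d_1-d_S}_2/\delta)/\ln(1/(x_1+x_2))+1$, which is the claimed bound. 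I expect the $T_1$ step to be the main obstacle: Lemma~\ref{lem:emperical-lagrangian-drr} only controls the pointwise gap between $\hat\calL^M$ and $\calL^M$, so turning it into a bound on the distance between the \emph{solutions} of the two min--max problems requires re-running the dual/strong-concavity arguments of \cite{MTR23} in the mixed-response setting and tracking constants carefully enough to obtain the exact $\sqrt{19\xi\eta/(18\lambda)}$ factor and the explicit $\lambda$-threshold; the geometric-series manipulation for the mixture distance is conceptually new but technically short.
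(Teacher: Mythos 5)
Your proposal is correct and follows essentially the same route as the paper's proof: the same decomposition (your two-term split, with the subsequent triangle inequality on $\dist$ applied to the exact-update term, is identical in content to the paper's $T_1+T_2+T_3$), the same use of Lemma~\ref{lem:emperical-lagrangian-drr} with $H=3\sizeS/(1-\gamma)^2$ and $\lambda$-strong concavity for the empirical-versus-exact term, the identification $\MR_{\boldsymbol{w}}^k=\GD(\bar P_i,\bar r_i)$ followed by Lemmas~\ref{lem:GG_alphabeta} and~\ref{lem:fixed_Pr_d}, and the finish via Lemmas~\ref{lem:bound_sqrtx_a_b} and~\ref{lem:contraction-case-distinction}. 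Your geometric-series estimate of $\dist((\bar P_i,\bar r_i),(P_{d_i},r_{d_i}))$ is exactly the content of the paper's Lemma~\ref{lem:barPr_Pr_di_bound}, derived by contracting each round toward the fixed point and telescoping to $\distpr/(1-\epsilon)$ rather than to $\dist((P_{ik},r_{ik}),(P_{ik+1},r_{ik+1}))$, which yields the same ratio up to the choice of reference distance and hence the same condition on $k$.
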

The parameter $\eta>0$ defines a trade-off between the number of samples, number of iterations and the conditions on $\lambda$.
The parameter $v > \frac{1}{\epsilon}$ defines a trade-off
between the number of deployments per retraining and the required number of samples per deployment.

We first explain how Theorem~\ref{thm:edrr-mixed-response-simple} follows from Theorem~\ref{thm:edrr-mixed-response}.
Assumption~\ref{assumption-simplicity} ensures that $\beta \geq \alpha$, $\epsilon_p = \epsilon_r = \epsilon$, $\iota \leq \epsilon$ and $\beta\geq \alpha$.
For Theorem~\ref{thm:edrr-mixed-response-simple}, we use
$\eta = \frac{\lambda}{10\xi}$ and $\lambda > \max\left( 1, 3.6 \frac{\beta\iota}{1-\epsilon},
\frac{1.2\beta\iota}{(1-\epsilon)\left(\frac{\epsilon+3}{4}-\sqrt{\frac{19}{180}}\right)}
\right)$.
We now bound
    $\sqrt{\frac{19\xi\eta}{18\lambda}} + \frac{1.2\phi\iota}{\lambda(1-\epsilon)}$
    in order to bound the number of retrainings $i$. We see that 
\begin{align*}
    \sqrt{\frac{19\xi\eta}{18\lambda}} + \frac{1.2\beta\iota}{\lambda(1-\epsilon)} = 
    \sqrt{\frac{19}{180}} + \frac{1.2\beta\iota}{\lambda(1-\epsilon)} < \frac{\epsilon+3}{4}\ .
\end{align*}
where in the inequality, we use $\lambda> 
\frac{1.2\beta\iota}{(1-\epsilon)\left(\frac{\epsilon+3}{4}-\sqrt{\frac{19}{180}}\right)}$.

Inserting the bounds on $\lambda$ and $\eta$, we get the results described in Theorem~\ref{thm:edrr-mixed-response-simple}.

\begin{proof}[Proof of Theorem~\ref{thm:edrr-mixed-response}]
In general, we bound
\begin{align*}
    \norm{d_{i+1} - d_S}_2 \leq &
    \underbrace{\norm{\what{\MR}_{\boldsymbol{w}}^k(d_{i}, P_{ik}, r_{ik}) - \MR_{\boldsymbol{w}}^k(d_{i}, P_{ik}, r_{ik})}_2}_{T_1}\\
    &+ \underbrace{\norm{\MR_{\boldsymbol{w}}^k(d_{i}, P_{ik}, r_{ik}) -\GD(P_{d_i}, r_{d_i})}_2}_{T_2}
    + \underbrace{\norm{\GD(P_{d_i}, r_{d_i}) - d_S}_2}_{T_3}
\end{align*}
where $d_S$ is some stable occupancy measure.

We begin by bounding $T_1$.
For this we argue similarly to the proof of Theorem~3 in \cite{MTR23}.

Let $\hat{h}_{i+1}$ be the dual solution to $\hat{\calL}^M$ corresponding to $\what{\MR}_{\boldsymbol{w}}^k(d_{i}, P_{ik}, r_{ik})$. I.e.
\begin{equation*}
(\what{\MR}_{\boldsymbol{w}}^k(d_{i}, P_{ik}, r_{ik}), \hat{h}_{i+1}) = 
\argmax_d \argmin_h \hat{\calL}^M(d,h;i)
\end{equation*}

By strong duality, there has to exist a $h_{i+1}$ such that 
\begin{equation*}
(\MR_{\boldsymbol{w}}^k(d_i, P_{ik}, r_{ik}), h_{i+1}) = \argmax_d\argmin_h \calL^M(d,h;i)
\end{equation*}

Using lemma~4 of \cite{MTR23}, we can bound the $L_2$-norms of the dual solutions $\hat{h}_{i+1}$ and
$h_{i+1}$
by $\frac{3\sizeS}{(1-\gamma)^2}$.
We can thus consider the restricted set $\calH = \left\{h:\norm{h}_2\leq \frac{3\sizeS}{(1-\gamma)^2}\right\}$.
Then because Assumption~\ref{assumption-offline-rl} holds, we can apply lemma~\ref{lem:emperical-lagrangian-drr}
with $\delta_1 = p/2i^2$ and $H = 3\sizeS/(1-\gamma)^2$ to get,
\begin{align}
    \abs{\hat\calL^M(d_{i+1},h_{i+1};i) - \calL^M(d_{i+1},h_{i+1};i)} \leq
    \frac{19\sizeS^{1.5}(B+\sqrt{\sizeA})\eta}{(1-\gamma)^3}
    \label{eq:bound-emperical-lagrangian-drr}
\end{align}
if
\begin{equation*}
    \numsam_i \geq \frac{1}{\eta^2}
    \left(\sizeA\ln\left( \frac{2 
    \ln(\sizeS\sizeA BH/\eta) }{\eta}\right) +\ln\left(1+\frac{2H}{\eta}\right)
    +\frac{\ln\left(4i^2/p\right)}{\sizeS}
    \right)\ .
\end{equation*}
Note that event \eqref{eq:bound-emperical-lagrangian-drr} holds with probability at least $1-\frac{p}{2i^2}$.
By a union bound over all rounds, the event holds with probability at least $1-p$ for all rounds.

The objective $\calL^M(\cdot, h_{i+1}, i)$ is $\lambda$-strongly concave.
Therefore, we have
\begin{align*}
&\calL^M(\what{\MR}_{\boldsymbol{w}}^k(d_{i}, P_{ik}, r_{ik}), h_{i+1}; i) - 
\calL^M(\MR_{\boldsymbol{w}}^k(d_{i}, P_{ik}, r_{ik}), h_{i+1}) \\
\leq &
-\frac{\lambda}{2}\norm{\what{\MR}_{\boldsymbol{w}}^k(d_{i}, P_{ik}, r_{ik}) - 
\MR_{\boldsymbol{w}}^k(d_{i}, P_{ik}, r_{ik})}_2^2
\end{align*}
We therefore find by rearranging and using lemma~12 from \cite{MTR23},
\begin{align*}
    &T_1 = \norm{\what{\MR}_{\boldsymbol{w}}^k(d_{i}, P_{ik}, r_{ik}) - 
\MR_{\boldsymbol{w}}^k(d_{i}, P_{ik}, r_{ik})}_2\\
\leq& \sqrt{
\frac{2\left(\calL^M({\MR}_{\boldsymbol{w}}^k(d_{i}, P_{ik}, r_{ik}), h_{i+1}; i) - 
\calL^M(\what{\MR}_{\boldsymbol{w}}^k(d_{i}, P_{ik}, r_{ik}), h_{i+1}; i)
\right)}{\lambda}}\\
\leq& \frac{\sqrt{38\sizeS^{1.5}(B+\sqrt{\sizeA})\eta}}{(1-\gamma)^{1.5}}\frac{1}{\sqrt{\lambda}}
\end{align*}

We now bound $T_2$ using lemma~\ref{lem:GG_alphabeta}
\begin{align*}
    T_2 &= \norm{\MR_{\boldsymbol{w}}^k(d_{i}, P_{ik}, r_{ik}) -\GD(P_{d_i}, r_{d_i})}_2
     = \norm{\GD(\bar{P}_i, \bar{r}_i) - \GD(P_{d_i}, r_{d_i})}_2 \\
     &\leq 
     \frac{\phi}{\lambda}\dist((\bar{P}_i, \bar{r}_i), (P_{d_i}, r_{d_i}))
\end{align*}
with $\phi = \max(\alpha, \beta)$, with $\alpha$ and $\beta$ from Definition~\ref{def.alphaBeta}.

We can further bound this using lemma~\ref{lem:barPr_Pr_di_bound}.
\begin{align*}
     \frac{\phi}{\lambda}\dist((\bar{P}_i, \bar{r}_i), (P_{d_i}, r_{d_i}))
     \leq 
     \frac{\phi}{\lambda}
     \frac{v^{k}\epsilon^{k+1}(v-1)-v\epsilon +\epsilon}{
             v^{k}(v\epsilon-1)-v\epsilon+1}
        \dist((P_{ik}, r_{ik}),(P_{ik+1}, r_{ik+1}))
\end{align*}

We can now bound $T_3$ using lemmas~\ref{lem:GG_alphabeta} and~\ref{lem:fixed_Pr_d}.
\begin{align*}
    T_3 = \norm{\GD(P_{d_i}, r_{d_i}) - d_S}_2 \leq 
    \frac{\phi}{\lambda}\dist((P_{d_i},r_{d_i}), (P_S, r_S))
    \leq 
    \frac{\phi\iota}{\lambda(1-\epsilon)}\|d_i-d_S\|_2
\end{align*}

In total we get
\begin{align*}
    &\norm{d_{i+1} - d_S}_2 \leq \frac{\sqrt{38\sizeS^{1.5}(B+\sqrt{\sizeA})\eta}}{(1-\gamma)^{1.5}}
    \frac{1}{\sqrt{\lambda}}
    +\frac{\phi}{\lambda}\dist((\bar{P}_i, \bar{r}_i), (P_{d_i}, r_{d_i}))
    + \frac{\phi\iota}{\lambda(1-\epsilon)}\norm{d_i - d_S}_2
    \\ \leq &
    \frac{\sqrt{38\sizeS^{1.5}(B+\sqrt{\sizeA})\eta}}{(1-\gamma)^{1.5}}\frac{1}{\sqrt{\lambda}}
    +\frac{\phi}{\lambda}
     \frac{v^{k}\epsilon^{k+1}(v-1)-v\epsilon +\epsilon}{
             v^{k}(v\epsilon-1)-v\epsilon+1}
        \dist((P_{ik}, r_{ik}),(P_{ik+1}, r_{ik+1}))
    + \frac{\phi\iota}{\lambda(1-\epsilon)}\norm{d_i - d_S}_2
    \\ = &
    x_1 \delta
    +x_2 \norm{d_i - d_S}_2
\end{align*}

Where we define $x_1 = \frac{\sqrt{38\sizeS^{1.5}(B+\sqrt{\sizeA})\eta}}{(1-\gamma)^{1.5}\sqrt{\lambda}\delta}+
\frac{\phi}{\lambda\delta} \frac{v^{k}\epsilon^{k+1}(v-1)-v\epsilon +\epsilon}{
v^{k}(v\epsilon-1)-v\epsilon+1} \dist((P_{ik}, r_{ik}),(P_{ik+1}, r_{ik+1}))$
and $x_2 = \frac{\phi\iota}{\lambda(1-\epsilon)}$.

We now prove that after a certain number of update iterations $i$, the occupancy measure $d_i$ is in a 
$\delta$ radius around a stable occupancy measure $d_S$.
For this we can apply lemma~\ref{lem:contraction-case-distinction}, if we know that $x_1 + x_2 < 1$.

So we first derive criteria under which $x_1+x_2<1$ holds.

From the conditions of the Theorem if follows that $v\epsilon > 1$.
Using this we can derive that for any $z>0$, if 
$k>\frac{\ln\left(\frac{\epsilon(v-1)}{v\epsilon-1}\right)+\ln\left(1/z\right)
}{\ln\left(1/\epsilon\right)}$, then $\frac{v^{k}\epsilon^{k+1}(v-1)-v\epsilon +\epsilon}{
v^{k}(v\epsilon-1)-v\epsilon+1} < z$.

We now bound
\begin{align*}
\frac{\phi}{\lambda\delta} \frac{v^{k}\epsilon^{k+1}(v-1)-v\epsilon +\epsilon}{
v^{k}(v\epsilon-1)-v\epsilon+1} \dist((P_{ik}, r_{ik}),(P_{ik+1}, r_{ik+1})) \leq
\frac{0.2 \phi\iota}{\lambda(1-\epsilon)},
\end{align*}
which holds if 
\begin{equation*}
k\geq 
\frac{\ln\left(\frac{\epsilon(v-1)}{v\epsilon-1}\right)+\ln\left(
\frac{5(1-\epsilon)\dist((P_{ik}, r_{ik}),(P_{ik+1}, r_{ik+1}))}{\iota\delta}\right)
}{\ln\left(1/\epsilon\right)}\ .
\end{equation*}
We then have that 
\begin{equation*}
x_1+ x_2 \leq \frac{\sqrt{38\sizeS^{1.5}(B+\sqrt{\sizeA})\eta}}{(1-\gamma)^{1.5}\sqrt{\lambda}\delta}
+\frac{1.2\phi\iota}{\lambda(1-\epsilon)}.
\end{equation*}
Using lemma~\ref{lem:bound_sqrtx_a_b}, with $x=\lambda$, 
$a=\frac{\sqrt{38\sizeS^{1.5}(B+\sqrt{\sizeA})\eta}}{(1-\gamma)^{1.5}\delta}$, $b=\frac{1.2\phi\iota}{1-\epsilon}$
and $y=2.4$ we get that
if $\lambda > \max\left(5.76 a^2, a^2 + \frac{\phi\iota}{1-\epsilon}\left(1+\frac{1}{4.8a^2}\right)
\right)$, then $1 > x_1 + x_2$.

We can then apply lemma~\ref{lem:contraction-case-distinction} to see that
for $i\geq \frac{\ln(\norm{d_1 - d_S}_2/\delta)}{\ln(1/(x_1+x_2))}+$, it holds that $\norm{d_i - d_S}_2\leq \delta $.
\end{proof}
In the proof of Theorem~\ref{thm:edrr-mixed-response} we use the following lemma.
\begin{lemma}\label{lem:barPr_Pr_di_bound}
    If Assumption~\ref{assumption_sensitivity} holds with $\epsilon_p, \epsilon_r<1$, then if $\frac{m_{ik+t}}{\numsam_i} = \frac{v-1}{v^{k}-1} v^{t-1}$, it holds that
    \begin{equation*}
    \dist((\bar{P}_i,\bar{r}_i), (P_{d_i},  r_{d_i}))\leq 
            \frac{v^{k}\epsilon^{k+1}(v-1)-v\epsilon +\epsilon}{
             v^{k}(v\epsilon-1)-v\epsilon+1}
        \dist((P_{ik}, r_{ik}),(P_{ik+1}, r_{ik+1}))\ .
    \end{equation*}
    where
    $\bar{r}_i := \sum_{\subround =1}^{k} \frac{v^{\subround -1} (v-1)}{v^{k}-1} r_{ik+\subround }$ and
    $\bar{P}_i := \sum_{\subround =1}^{k} \frac{v^{\subround -1} (v-1)}{v^{k}-1} P_{ik+\subround }$ for some $v>1$.
\end{lemma}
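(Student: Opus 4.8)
The plan is to reduce Lemma~\ref{lem:barPr_Pr_di_bound} to an elementary geometric-series estimate, using the environment-update contraction (Proposition~\ref{prop.contraction}) as the only real tool. Write $w_g := \frac{(v-1)v^{g-1}}{v^{k}-1}$ for $g=1,\dots,k$, so that $\bar P_i=\sum_{g=1}^{k}w_g P_{ik+g}$, $\bar r_i=\sum_{g=1}^{k}w_g r_{ik+g}$ and $\sum_{g=1}^{k}w_g=1$. Since the weights sum to one, $\bar P_i-P_{d_i}=\sum_{g=1}^{k}w_g(P_{ik+g}-P_{d_i})$ and likewise for the reward, so applying the triangle inequality in the $P$- and $r$-coordinates and adding gives
\[
\dist\big((\bar P_i,\bar r_i),(P_{d_i},r_{d_i})\big)\;\le\;\sum_{g=1}^{k}w_g\,\dist\big((P_{ik+g},r_{ik+g}),(P_{d_i},r_{d_i})\big).
\]

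The main step is to bound each term $\dist((P_{ik+g},r_{ik+g}),(P_{d_i},r_{d_i}))$. During iteration $i$ the learner deploys $\pi_{d_i}$, so $(P_{ik+g},r_{ik+g})$ is obtained from $(P_{ik},r_{ik})$ by $g$ applications of the map $g_{d_i}(P,r):=(\Pc(d_i,P,r),\Rc(d_i,P,r))$, which is an $\epsilon$-contraction by Proposition~\ref{prop.contraction}, and $(P_{d_i},r_{d_i})$ is its unique fixed point (this is precisely why the hypothesis $\epsilon_p,\epsilon_r<1$ is needed). Hence $\dist((P_{ik+g},r_{ik+g}),(P_{d_i},r_{d_i}))\le\epsilon^{g}\dist((P_{ik},r_{ik}),(P_{d_i},r_{d_i}))$; alternatively, telescoping the geometrically decaying increments $D_j:=\dist((P_{ik+j},r_{ik+j}),(P_{ik+j+1},r_{ik+j+1}))\le\epsilon^{j}D_0$ --- exactly as in Lemma~\ref{lem:Pr_d_Pr_tilde} --- re-anchors this estimate at the one-step distance $D_0=\dist((P_{ik},r_{ik}),(P_{ik+1},r_{ik+1}))$ that appears in the statement.

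Substituting back leaves only the computation
\[
\sum_{g=1}^{k}w_g\,\epsilon^{g}=\frac{v-1}{v^{k}-1}\sum_{g=1}^{k}v^{g-1}\epsilon^{g}=\frac{(v-1)\epsilon}{v^{k}-1}\cdot\frac{(v\epsilon)^{k}-1}{v\epsilon-1},
\]
and clearing denominators rewrites this as $\frac{v^{k}\epsilon^{k+1}(v-1)-v\epsilon+\epsilon}{v^{k}(v\epsilon-1)-v\epsilon+1}$, the coefficient in the statement; note $\frac{(v\epsilon)^{k}-1}{v\epsilon-1}=\sum_{j=0}^{k-1}(v\epsilon)^{j}>0$, so positivity needs only $v>1$. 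I expect no serious obstacle: the load-bearing ingredients are the convex-combination triangle inequality and the iterated contraction $D_g\le\epsilon^{g}D_0$, both routine. The only point to watch is the bookkeeping of which anchor distance sits on the right --- the convex-combination argument produces $\dist((P_{ik},r_{ik}),(P_{d_i},r_{d_i}))$ directly with exactly the stated coefficient, and converting to $\dist((P_{ik},r_{ik}),(P_{ik+1},r_{ik+1}))$ via $\dist((P_{ik},r_{ik}),(P_{d_i},r_{d_i}))\le(1-\epsilon)^{-1}\dist((P_{ik},r_{ik}),(P_{ik+1},r_{ik+1}))$ introduces a harmless extra $(1-\epsilon)^{-1}$ that the proof should carry through (it is in any case absorbed into the constants of Theorem~\ref{thm:edrr-mixed-response}).
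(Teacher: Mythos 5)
Your proof is correct and follows essentially the same route as the paper's: the convex-combination triangle inequality, the $\epsilon$-contraction of the update map toward its fixed point $(P_{d_i},r_{d_i})$ (Proposition~\ref{prop.contraction}), and the geometric sum $\sum_{g} w_g\epsilon^{g}=\frac{\epsilon(v-1)\left((v\epsilon)^{k}-1\right)}{(v^{k}-1)(v\epsilon-1)}$, which is exactly the stated coefficient. The one substantive point is the extra $(1-\epsilon)^{-1}$ you flag when re-anchoring at $\dist((P_{ik},r_{ik}),(P_{ik+1},r_{ik+1}))$: the paper's own proof incurs precisely the same factor --- its displayed chain bounds $(1-\epsilon)$ \emph{times} the upper bound on $\dist((\bar{P}_i,\bar{r}_i),(P_{d_i},r_{d_i}))$ by the stated coefficient times the one-step distance, so dividing through leaves the same $(1-\epsilon)^{-1}$ that your version carries, and the lemma as stated holds in both write-ups only up to this factor. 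As you observe, it is harmless downstream: the choice of $k$ in Theorem~\ref{thm:edrr-mixed-response} already contains a $\ln\left(\frac{5(1-\epsilon)\distpr}{\iota\delta}\right)$ term and absorbs the additional $\ln\left(\frac{1}{1-\epsilon}\right)$.
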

\begin{proof}
    \begin{align*}
        \dist((\bar{P}_i,\bar{r}_i), (P_{d_i},  r_{d_i}))
        \leq \sum_{\subround =1}^{k} \frac{v^{\subround -1}(v-1)}{v^k-1} 
        (\norm{P_{ik+\subround } - P_{d_i}}_2+ \norm{r_{ik+\subround } - r_{d_i}}_2)
    \end{align*}
    Note that if Assumption~\ref{assumption_sensitivity} holds with $\epsilon_p, \epsilon_r<1$,
    then the map $g_d$ is contractive with unique fixed point $(P_{d_i}, r_{d_i})$ 
    and Lipschitz coefficient $\epsilon$ (see Proposition~\ref{prop.contraction}).
    So we have for $v\epsilon \neq 1$:
    \begin{align*}
        &(1-\epsilon)\sum_{\subround =1}^{k} \frac{v^{\subround -1} (v-1)}{v^{k}-1} 
        (\norm{P_{ik+\subround } - P_{d_i}}_2+
         \norm{r_{ik+\subround } - r_{d_i}}_2)\\
        &\leq 
        (1-\epsilon)\sum_{\subround =1}^{k} \frac{\epsilon (\epsilon v)^{\subround -1} (v-1)}{v^k-1} 
        (\norm{P_{ik} - P_{d_i}}_2+
         \norm{r_{ik} - r_{d_i}}_2)\\
         &\leq
        \sum_{\subround =1}^{k} \frac{\epsilon(\epsilon v)^{\subround -1} (v-1)}{v^{k}-1} 
        (\norm{P_{ik} - P_{d_i}}_2 +
         \norm{r_{ik} - r_{d_i}}_2 - \norm{P_{ik+1} - P_{d_i}}_2 - \norm{r_{ik+1} - r_{d_i}}_2
         ) \\
         &\leq
        \frac{\epsilon(v-1)}{v^k-1} 
        (\norm{P_{ik} - P_{ik+1}}_2 +
         \norm{r_{ik} - r_{ik+1}}_2 ) \sum_{\subround =1}^{k} (\epsilon v)^{\subround -1} \\
         &=
        \frac{\epsilon(v-1)(v^k\epsilon^k-1)}{(v^k-1)(v\epsilon -1)} 
        (\norm{P_{ik} - P_{ik+1}}_2 +
         \norm{r_{ik} - r_{ik+1}}_2 ) \\
         &=
         \frac{v^{k+1}\epsilon^{k+1}-v\epsilon -v^k\epsilon^{k+1}+\epsilon}{
             v^{k+1}\epsilon-v^k-v\epsilon+1}
        (\norm{P_{ik} - P_{ik+1}}_2 +
         \norm{r_{ik} - r_{ik+1}}_2 ) \\
         &=
         \frac{v^{k}\epsilon^{k+1}(v-1)-v\epsilon +\epsilon}{
             v^{k}(v\epsilon-1)-v\epsilon+1}
        (\norm{P_{ik} - P_{ik+1}}_2 +
         \norm{r_{ik} - r_{ik+1}}_2 ) \\
    \end{align*}
\end{proof}

\end{document}